\newtheorem{theorem}{Theorem}
\newdefinition{definition}{Definition}
\newtheorem{example}{Example}
\newenvironment{breakablealgorithm}{
	\begin{center}
		\refstepcounter{algorithm}
		\hrule height.8pt depth0pt \kern2pt
		\renewcommand{\caption}[2][\relax]{
			{\raggedright\textbf{\ALG@name~\thealgorithm} ##2\par}
			\ifx\relax##1\relax 
			\addcontentsline{loa}{algorithm}{\protect\numberline{\thealgorithm}##2}
			\else 
			\addcontentsline{loa}{algorithm}{\protect\numberline{\thealgorithm}##1}
			\fi
			\kern2pt\hrule\kern2pt
		}}{\kern2pt\hrule\relax
	\end{center}}
\journal{Information Sciences}
\begin{document}

\begin{frontmatter}
\title{Sequential three-way decisions with a single hidden layer feedforward neural network}
  
\author[1,2]{Youxi Wu}	

\author[1]{Shuhui Cheng}	
		
	\ead{shuhui\_cheng@163.com}
  
	\author[1]{Yan Li}
		
	\author[1]{Rongjie Lv}
		
	\author[3]{Fan Min}

		
		\address[1]{School of Economics and Management, Hebei University of Technology, Tianjin 300401, China}
		
		\address[2]{School of Artificial Intelligence, Hebei University of Technology, Tianjin 300401, China}

		\address[3]{School of Computer Science, Southwest Petroleum University, Chengdu 610500, China}
		
\begin{abstract}
The three-way decisions strategy has been employed to construct network topology in a single hidden layer feedforward neural network (SFNN). However, this model has a general performance, and does not consider the process costs, since it has fixed threshold parameters. Inspired by the sequential three-way decisions (STWD), this paper proposes STWD with an SFNN (STWD-SFNN) to enhance the performance of networks on structured datasets. STWD-SFNN adopts multi-granularity levels to dynamically learn the number of hidden layer nodes from coarse to fine, and set the  sequential threshold parameters. Specifically, at the coarse granular level, STWD-SFNN handles easy-to-classify instances by applying strict threshold conditions, and with the increasing number of hidden layer nodes at the fine granular level, STWD-SFNN focuses more on disposing of the difficult-to-classify instances by applying loose threshold conditions, thereby realizing the classification of instances. Moreover, STWD-SFNN considers and reports the process cost produced from each granular level. The experimental results verify that STWD-SFNN has a more compact network on structured datasets than other SFNN models, and has better generalization performance than the competitive models.	All models and datasets can be downloaded from https://github.com/wuc567/Machine-learning/tree/main/STWD-SFNN.	

\end{abstract}
		

			\begin{keyword}
{network topology \sep hidden layer node \sep sequential three-way decisions \sep granular level \sep sequential thresholds}
   
			\end{keyword}
		\end{frontmatter}
		
	\section{Introduction}
		Neural networks \cite{Cao2016INS,Li2022ESWA,Liu2022KBS} are the mathematical models that simulate the connection structure of brain neurons. They have been widely implemented in applications, including video frame inpainting \cite{Szeto2020TPAMI} and automatic driving \cite{Shekar2021IJCV}. The performance of neural networks is mainly affected by hyperparameter selection and network topology. Hyperparameter selection \cite{Arqub2014INs,Arqub2021MMAS} is a classical topic in machine learning, which can be realized by grid search \cite{Pontes2016Neurocomputing,Shi2019SJ} and particle swarm optimization \cite{Aziz2021ESWA,Nistor2022ESWA}. In addition, network topology \cite{Akyol2020ESWA,Rusek2020JSAC,Ye2018ESWA} is the key of neural network design, which can be realized through three-way decisions \cite{Cheng2021INS} and an incremental learning mechanism \cite{Escovedo2020APIN,Kim2020TFS,Yang2019AIR}.		
		
	To achieve an effective network structure, three-way decisions with a single hidden layer feedforward neural network (TWD-SFNN) \cite{Cheng2021INS} adopts a novel model to guide the number of hidden layer nodes. In addition, as a shallow neural network model, TWD-SFNN provides a new perspective for the topology design of multilayer neural networks, hence laying the theoretical foundation for the framework of deep learning. However, for practical applications, TWD-SFNN has two drawbacks: (i) in terms of the performance of TWD-SFNN, the generalization ability of TWD-SFNN needs to be further improved; and (ii) to analyze the relationship between the costs and number of hidden layer nodes more thoroughly,  the process costs of TWD-SFNN need to be considered.
		
	To improve the generalization ability of neural networks on structured datasets, and further enrich the theoretical framework of deep learning, we employ sequential three-way decisions to guide the growth of the network topology. The sequential three-way decisions (STWD) \cite{Qian2022IJAR,Yang2019IJAR,Zhang-stwd-2021KBS} is a kind of progressive granular computing \cite{Ciucci2020INS,Niu2022INS,Zhang2022ESWA}, which realizes the multi-stage learning from coarse to fine by constructing granular levels. Specifically, STWD makes deterministic decisions for only a few instances at the coarse-grained granular level. With the addition of information, the decision of the remaining instances is gradually realized at the finer level. Recently, STWD has been applied to medical treatment \cite{Chu2020CIE}, face recognition \cite{Li2016KBS}, and attribute reduction \cite{Qian2017IJAR}.
		
	To obtain a better performance, we propose a model called STWD with a single hidden layer feedforward neural network (STWD-SFNN). STWD-SFNN enhances the learning ability of the SFNN and ensures the compactness of the network structure. The differences between TWD-SFNN and STWD-SFNN are as follows. Although TWD-SFNN has the multi-granularity property, the process costs produced from granular levels are not investigated in the development discipline of costs. On the other hand, although TWD-SFNN has the sequential property, it partitions the instances by adopting the constant threshold parameters when a new hidden layer node is added, making it difficult to focus on more difficult-to-classify instances. The contributions of this paper are as follows. 

		
\begin{enumerate}[(1)] 
	\item        	 To improve the generalization ability of TWD-SFNN, we employ the STWD scheme to optimize the structure of SFNN, and propose a new approach called STWD-SFNN.
			
	\item        	  STWD-SFNN handles the easy-to-classify instances by adopting strict conditions. As the number of hidden layer nodes increases, STWD-SFNN addresses the difficult-to-classify instances by adopting loose conditions until all instances can be classified.
			
	\item        	 The experimental results show that the network structure of STWD-SFNN is significantly better than other SFNN models, and STWD-SFNN has better accuracy and weighted-f1 performance than the competitive models. 
\end {enumerate}
			
	The remainder of the paper is organized as follows. Section 2 introduces the preliminaries. Section 3 describes the STWD-SFNN model. Section 4 analyzes the performance of STWD-SFNN, and Section 5 presents the conclusions.
			
\section{Preliminaries}
In this section, we briefly introduce the training of SFNN with focal loss and Adam optimizer, the TWD-SFNN model, and the STWD model.
			
\subsection{Training SFNN  with focal loss and Adam optimizer}
\label{focal_adam}
    The parameters connected to the hidden layer are one of the factors affecting the performance of SFNN \cite{Belciug2021ESWA}. Therefore, the focal loss and Adam optimizer for network parameters are introduced as follows.
		
	Since the labels in the classification dataset may have category imbalance, to better deal with the practical problems, the loss function of SFNN adopts the focal loss \cite{Zhou2020TIP}, which also makes the model focus more on the difficult-to-classify instances in training, and is calculated according to Equation (\ref{equ1_FL}). 
		
		\begin{spacing}{0.7} 
			\begin{normalsize} 
				\begin{equation}
				L_{\rm FL} = \left\{
				\begin{array}{lr} 
				-\delta(1-\hat{y})^{\theta}log\hat{y}     &  y=+1 \\
				-(1-\delta)\hat{y}^{\theta}log(1-\hat{y}) &  y=-1 \\  
				\end{array} \right.  \label{equ1_FL}	
				\end{equation}
			\end{normalsize} 
		\end{spacing} 
	
	\noindent  where $ \hat{y} $ is the predicted value of the neural network; $ \delta $  is a factor that balances the proportion of positive and negative labels; and $ \theta$  $(\theta \ge 0) $ is a parameter that adjusts the weight reduction rate of easy-to-classify instances, and the greater the $ \theta$, the greater the weight reduction rate. In this paper, $ \delta $ is the ratio of the number of each label to the total number of labels, and $ \theta$ is set to two which is the same as \cite{Cheng2021INS}. In addition, to prevent the overfitting of SFNN, the cost function is calculated according to Equation (\ref {equ2_L_NN}).
	
		\begin{spacing}{0.5}   
			\begin{normalsize} 
				\begin{equation}
				L_{\rm NN} = L_{\rm FL} + \frac{\lambda_{\rm SFNN}}{2} \times (\Vert \mathbf W_{1} \Vert^{2} + \Vert \mathbf b_{1} \Vert^{2} + \Vert \mathbf W_{2} \Vert^{2} + \Vert \mathbf b_{2} \Vert^{2})  \label{equ2_L_NN}	
				\end{equation}
			\end{normalsize} 
		\end{spacing} 
	
	\noindent where $ \lambda_{\rm SFNN} $  is a regularization factor of SFNN;  $ \mathbf W_{1} $ and  $ \mathbf b_{1} $ are the weight and bias connecting the input layer to the hidden layer of SFNN, respectively; $ \mathbf W_{2} $  and $ \mathbf b_{2} $  are the weight and bias from the hidden layer to the output layer of SFNN, respectively.
	
	Since the gradient of network parameters may have oscillation in the optimization, to better optimize $ \mathbf W_{1} $, $ \mathbf b_{1} $,$  \mathbf W_{2} $, and $ \mathbf b_{2} $, SFNN selects the Adam optimizer \cite{Khan2020JAS}. Without loss of generality, we take $ \mathbf W_{1} $ as an example which is calculated according to Equation (\ref{equ3_Adam}).		
	
		\begin{spacing}{0.5}  
				\begin{normalsize} 
					\begin{equation}
					\begin{split}
					\mathbf V_{\rm d \mathbf W_{1}}^{(h)} &= \rho_{1} \times \mathbf V_{\rm d \mathbf W_{1}}^{(h-1)} + (1-\rho_{1}) \times {\rm d} \mathbf W_{1}^{(h)}  \\
					\mathbf S_{\rm d \mathbf W_{1}}^{(h)} &= \rho_{2} \times \mathbf S_{\rm d \mathbf W_{1}}^{(h-1)} + (1-\rho_{2}) \times ({\rm d} \mathbf W_{1}^{(h)})^{2} \\
					\hat{\mathbf V}_{\rm d \mathbf W_{1}}^{(h)} &= \frac{\mathbf V_{\rm d \mathbf W_{1}}^{(h)}}{1-\rho_{1}^{(h)}} \\
					\hat{\mathbf S}_{\rm d \mathbf W_{1}}^{(h)} &= \frac{\mathbf S_{\rm d \mathbf W_{1}}^{(h)}}{1-\rho_{2}^{(h)}} \\
					\mathbf W_{1}^{(h)} &=   \mathbf W_{1}^{(h-1)} - \mu \times \frac{\hat{\mathbf V}_{\rm d \mathbf W_{1}}^{(h)}}{ \sqrt{\hat{\mathbf S_{\rm d \mathbf W_{1}}^{(h)}}} + \tau}  \label{equ3_Adam} 	
				    \end{split}
				    \end{equation}				
		 	    \end{normalsize}
		\end{spacing}  

	\noindent where $ \mathbf V_{\rm d \mathbf W_{1}}^{(h)} $ and $ \mathbf S_{\rm d \mathbf W_{1}}^{(h)} $ are updated first-order momentum estimates and second-order momentum estimates of $ \mathbf W_{1} $, respectively; $ \hat{\mathbf V}_{\rm d\mathbf W_{1}}^{(h)} $ and $ \hat{\mathbf S}_{\rm d\mathbf W_{1}}^{(h)} $  are deviation corrections of $ \mathbf V_{\rm d \mathbf W_{1}}^{(h)} $ and $ \mathbf S_{\rm d \mathbf W_{1}}^{(h)} $, respectively; $ \mathbf W_{1}^{(h)} $ is the updated value of $ \mathbf W_{1} $ at the $ h $-th moment; $ \rho_{1} $ and $ \rho_{2} $ denote the exponential decay rate of the first-order and second-order momentum estimates, respectively; $ \mu $ is the learning rate; $ \tau $ is used to avoid that the denominator of $ \mathbf W_{1}^{(h)} $ is zero, and $ \tau $ is set to $ 10^{-8} $. For convenience, we mark $ \mathbf W_{1}^{(h)} $ returned by Adam as $ \overline{\mathbf W}_{1} $. 
  To update the parameters of the rest network , we replace $ \mathbf W_{1} $ in Equation \eqref{equ3_Adam} with $ \mathbf b_{1} $, $ \mathbf W_{2} $, and $ \mathbf b_{2} $.
  
  

\subsection{TWD-SFNN} 	   

TWD-SFNN \cite{Cheng2021INS} adopts the trisecting-and-acting model \cite{Wang2022INS,Yi2022ESWA,Zhan2022TFS} to find the number of hidden layer nodes. In essence, TWD-SFNN is a model for dealing with the binary classification problem. Suppose we have a binary classification dataset $ \{ \mathbf x_{i}, \mathbf y_{i} \}_{i = 1}^{d}$, where $ \mathbf x_{i} = (\mathbf x_{i1}, \mathbf x_{i2}, \cdots, \mathbf x_{im}) ^{ \mathbf T } \in \mathbf{R}^{m} $, $ \mathbf y_{i} \in \{ +1, -1 \}$. We also suppose that TWD-SFNN has total $d$  instances and $ u$ $(1\le u\le d) $ unclassified instances, where the unclassified instances are the instances that are partitioned into the region of boundary (BND), and $u$ is the total number of instances minus the total number of instances classified into the regions of positive(POS) and negative(NEG). TWD-SFNN considers the total expected risk of classifying $u$ instances.  
		
		\begin{spacing}{0.5}
		\begin{normalsize} 
		    \begin{equation}
		    \begin{split}
			Risk_{{\rm{TWD-SFNN}}} =  \min_{(\alpha,\beta), {\rm or} \gamma}\Bigg (\sum_{o=1 \atop p_{o} \ge \alpha , {\rm or}  p_{o} \ge \gamma}^{u_{p}}   \lambda_{PP} \times p_{o} + \lambda_{PN} \times (1-p_{o})   \\ +  \varepsilon \times \sum_{k=1 \atop \beta < p_{k} < \alpha }^{u_{b}}  \lambda_{BP} \times p_{k} +  \lambda_{BN} \times (1-p_{k})  \\  +  \sum_{j=1 \atop p_{j} \le \beta , {\rm or} p_{j} < \gamma}^{u_{n}}  \lambda_{NP} \times p_{j} +\lambda_{NN} \times (1- p_{j}) \Bigg )  \\
			{\rm s.t.}  \quad    0 < \beta < \gamma < \alpha < 1, \quad   \varepsilon \ge 1, \quad u=u_{p}+u_{b}+u_{n}
			\end{split}
			\end{equation}
		\end{normalsize}
	    \end{spacing}
    
		\noindent where $ \varepsilon $ is a penalty factor and $ \varepsilon \ge 1 $; $ p_{e} $ is the conditional probability that the $ e $-th $ (1 \le e \le u) $ instance belongs to the positive labels set $ Y $, i.e., $ p_{e} = \frac{\left| Y \cap [x_{e}] \right| } {\left| [x_{e}] \right| } $,  $ [x_e] $ is the equivalent class of the $ e $-th   instance, and the equivalence class is a set of instances with the same features; $ u_{p} $,  $ u_{b} $, and  $ u_{n} $ are the numbers of instances partitioned into POS, BND, and NEG, respectively; $ \lambda_{*P} $ and $ \lambda_{*N} $ are the costs of decision-making when instances belong to the POS and NEG, respectively; $ (\alpha, \beta, \gamma) $ are the thresholds, and can be calculated as follows.

		\begin{spacing}{0.5}
			\begin{normalsize}   
			    \begin{gather}
				\alpha = \frac{\lambda_{PN}-\lambda_{BN}}{(\lambda_{PN}-\lambda_{BN})+(\lambda_{BP}-\lambda_{PP})} \\
				\beta  = \frac{\lambda_{BN}-\lambda_{NN}}{(\lambda_{BN}-\lambda_{NN})+(\lambda_{NP}-\lambda_{BP})} \\
				\gamma = \frac{\lambda_{PN}-\lambda_{NN}}{(\lambda_{PN}-\lambda_{NN})+(\lambda_{NP}-\lambda_{PP})}
				\end{gather}
			\end{normalsize}
		\end{spacing}

		To meet the requirements of the three-way decisions on data types, TWD-SFNN adopts $ k $-means++ to discretize the numerical data into categorical data. When the number of misclassification instances is more than that of clusters specified in $k$-means++, $ \alpha $ and $ \beta $ are employed and the classification criteria are as follows.

		(P) if $ p_{e} \ge \alpha $, then $ [x_e] \in $ POS($ Y $); 

		(B) if $ \beta < p_{e} < \alpha $, then $ [x_e] \in $ BND($ Y $); 

		(N) if $ p_{e} \le \beta  $, then $ [x_e] \in $ NEG($ Y $). 

		Whereas when the number of instances in the misclassification set is no more than these clusters specified in $ k $-means++, $ \gamma $ is adopted in TWD-SFNN and the classification criteria are as follows.
    
		(P) if $ p_{e} \ge \gamma $, then $ [x_e] \in $ POS($ Y $);  
    
		(N) if $ p_{e} < \gamma  $, then $ [x_e] \in $ NEG($ Y $). 
    
	TWD-SFNN settles the penalty factor $  \varepsilon(\varepsilon \ge 1) $ in the decision-making risk function, which means that TWD-SFNN can avoid too many training instances being classified into BND, so as to avoid the overfitting of the model. In addition, if BND is not empty, it is necessary to increase the number of hidden layer nodes of TWD-SFNN until BND is empty.

\subsection{ STWD } \label{STWD}
{ The sequential strategy is an essential application of multi-granularity \cite{Ma2022TKDD,Ma2022Neurocomputing} to three-way decisions. As information continues to be added and updated, a comprehensive decision-making scheme of STWD is gradually developed. STWD \cite{Wu2022TKDD,Xu2022INS,Zhang2021TSMCS} relies on the idea of multi-granularity  hierarchical processing. STWD has a decision table \cite{Qian2017IJAR} which is defined as $ S = \left(U, A = C \cup D, V = \{V_{a} \mid a \in A\}, I = \{I_{a} \mid a \in A\} \right) $, where $ U $ is a universe; $ A $ is an attribute set; $ C $ is a conditional attribute set, and $ D $ is a decision attribute set. For any attribute $ a $, $ V_{a} $ is the conditional attribute value, and $ I_{a} $ is the corresponding decision attribute value. }
	
{ Suppose $ U $ has $ t $ granular levels, denoted by $ \{ 1,2,\cdots,t\} $, which are added from the first level to the $ t $-th level step by step. Given a state set $ {\{X,\neg X}\} $, an action set $ \{{a_P},{a_B},{a_N}\}$, where $ X $ and $ \neg X $ represent different states that the instance $ x $ belongs to $ X $ and $ \neg X $, respectively; $ {a_P}$, $ {a_B}$, and $ {a_N} $ represent the accepted decision-making, the delayed decision-making, and the rejected decision-making, respectively. We suppose that when $ x \in X $ at the $ i $-th $ \left(1\le i\le t\right) $ level, $ \lambda_{PP}^{(i)} $, $ \lambda_{BP}^{(i)} $, and $ \lambda_{NP}^{(i)} $ represent the losses caused by taking $ {a_P} $, $ {a_B}$, and $ {a_N} $, respectively, while $ \lambda_{PN}^{(i)}$, $ \lambda_{BN}^{(i)} $, and $ \lambda_{NN}^{(i)} $ represent the corresponding losses when $ x \in \neg X $ at the $ i $-th level. Thus, under the action of $ {a_P} $, $ {a_B} $, and $ {a_N} $, the expected losses at the $ i $-th level of STWD are shown as follows.}
	
            \begin{spacing}{0.45} 
            	\begin{normalsize}  
            	    \begin{gather}  
            		{ R^{(i)}({a_P} \mid [x]_{R}) = {\lambda_{PP}^{(i)}} \times p^{(i)}(X \mid [x]_{R}) + {\lambda_{PN}^{(i)}} \times p^{(i)}(\neg X \mid [x]_{R}) } \\
            		{ R^{(i)}({a_B} \mid [x]_{R}) = {\lambda_{BP}^{(i)}} \times p^{(i)}(X \mid [x]_{R}) + {\lambda_{BN}^{(i)}} \times p^{(i)}(\neg X \mid [x]_{R}) }\\
            		{R^{(i)}({a_N} \mid [x]_{R}) = {\lambda_{NP}^{(i)}} \times p^{(i)}(X \mid [x]_{R}) + {\lambda_{NN}^{(i)}} \times p^{(i)}(\neg X \mid [x]_{R}) }
            		\end{gather}
            	\end{normalsize}
            \end{spacing}
	
	{ \noindent where $ p^{(i)}(X \mid [x]_{R}) $ and $ p^{(i)}(\neg X \mid [x]_{R}) $ represent the conditional probability that the equivalent class $ [x]_{R} $ at the $ i $-th $ \left(1\le i\le t\right) $ level belongs to $ X $ and $ \neg X $, respectively. For convenience, we mark $ p^{(i)}(X \mid [x]_{R}) $ as $ p^{(i)} $. According to Bayes decision criterion, the following optimal classification decision rules can be obtained.}
	
	{ (P) If $ R^{(i)}({a_P} \mid [x]_{R}) \le R^{(i)}({a_B} \mid [x]_{R}) $ and $ R^{(i)}({a_P} \mid [x]_{R}) \le R^{(i)}({a_N} \mid [x]_{R}) $, then $ x \in {\rm POS}^{(i)}(X) $ ; }
	
	{ (B) If $ R^{(i)}({a_B} \mid [x]_{R}) \le R^{(i)}({a_P} \mid [x]_{R}) $ and $ R^{(i)}({a_B} \mid [x]_{R}) \le R^{(i)}({a_N} \mid [x]_{R}) $, then $ x \in {\rm BND}^{(i)}(X) $ ; }
	
	{ (N) If $ R^{(i)}({a_N} \mid [x]_{R}) \le R^{(i)}({a_P} \mid [x]_{R}) $ and $ R^{(i)}({a_N} \mid [x]_{R}) \le R^{(i)}({a_B} \mid [x]_{R}) $, then $ x \in {\rm NEG}^{(i)}(X) $ . }
	
	{ Furthermore, STWD has a total order relationship over granular levels \cite{Zhan2022TFS}, i.e., $ [x]_{R_{1}} \preceq [x]_{R_{2}} \preceq \cdots \preceq [x]_{R_{t}} $, where $ [x]_{R_{1}} $ and $ [x]_{R_{t}} $ are the equivalent class of instance $ x $ calculated at the coarse-grained level and the fine-grained level, respectively. Moreover, STWD has a conditional probability $ p^{(i)} $ and thresholds $ (\alpha_{i}, \beta_{i})$ $( 1 \le i \le t-1 ) $ for the previous $ t $-1 levels and also has a conditional probability $ p^{(t)} $ and a threshold $ \gamma $ at the $t$-th level. These parameters satisfy the following conditions. }
		
		\begin{spacing}{0.5}
			\begin{normalsize}       	       	
				\begin{gather*}
{ 0 < \beta_{1} \le \beta_{2} \le \cdots  \le \beta_{t-1} < \gamma < \alpha_{t-1} \le \cdots \le \alpha_{2} \le \alpha_{1} < 1  }
				\end{gather*}  
			\end{normalsize}  
		\end{spacing}     
			
{ Compared with $ (\alpha_{i}, \beta_{i}) $, STWD at least increases the range of POS and NEG regions when adopting $ (\alpha_{i+1}, \beta_{i+1}) $. Accordingly, the BND of STWD will be smaller and smaller at the ($i$+1)-th granular level. It indicates that the sequential thresholds of STWD gradually realize the classification of the delay decision region, thereby completing the partition of whole instances.}


{Since STWD learns three-way decision-making from the first level to the ($t-1$)-th level, given the $i$-th  ($1\le i \le t-1$) granular level $L_i$, $L_i$ can be classified into three disjoint regions: }
		
		\begin{spacing}{0.5}
			\begin{normalsize}  
			    \begin{gather}
				{ {\rm POS}_{(\alpha_{i}, \beta_{i})}(Y) = \{ x \in U_{i} \mid p^{(i)} \succeq_{i} \alpha_{i}  \} }  \\   	
				{ {\rm BND}_{(\alpha_{i}, \beta_{i})}(Y) = \{ x \in U_{i} \mid \beta_{i} \prec_{i} p^{(i)} \prec_{i} \alpha_{i}  \} }	\\
				{ {\rm NEG}_{(\alpha_{i}, \beta_{i})}(Y) = \{ x \in U_{i} \mid p^{(i)} \preceq_{i} \beta_{i}  \} }
				\end{gather} 
			\end{normalsize}   
		\end{spacing}    
	 
		{ \noindent where $ {\rm POS}_{(\alpha_{i}, \beta_{i})}(Y) $, $ {\rm BND}_{(\alpha_{i}, \beta_{i})}(Y) $, and ${\rm NEG}_{(\alpha_{i}, \beta_{i})}(Y) $ are the POS, BND, and NEG at the $ i $-th level, respectively. As the number of levels increases, $ {\rm BND}_{(\alpha_{i}, \beta_{i})}(Y) $ becomes smaller. However, if $ {\rm BND}_{(\alpha_{t-1}, \beta_{t-1})}(Y) \neq \varnothing $, STWD carries out two-way decisions at the $ t $-th level, which means that the region in the $ t $-th level $ L_{t}$ is partitioned into two disjoint regions. }
		
		\begin{spacing}{0.5}
			\begin{normalsize}       	       	
				\begin{gather}
				{ {\rm POS}_{(\gamma)}(Y) = \{ x \in U_{t} \mid p^{(t)} \succeq \gamma  \} }  \\     	
				{ {\rm NEG}_{(\gamma)}(Y) = \{ x \in U_{t} \mid p^{(t)} \prec \gamma  \} }
				\end{gather}
			\end{normalsize}  
		    \vspace{-3em}
	    \end{spacing}

\section{ Proposed model }

{In this section, we propose STWD-SFNN which dynamically determines the number of hidden layer nodes by adopting granularity levels and sequential thresholds to improve the performance of network topology. To show the difference between SFNN and STWD-SFNN, Figs. \ref{fig_framsfnn} and  \ref{fig_framework} show the framework of SFNN and STWD-SFNN, respectively. The network topology of SFNN refers to the number of hidden nodes $z$, and we calculate hidden nodes $z$ according to the empirical formula methods. However, these methods for determining network topology lack a reliable theoretical basis. Therefore, we propose STWD-SFNN to dynamically determine the number of hidden nodes. As shown in Fig.  \ref{fig_framework}, STWD-SFNN adopts the sequence of STWD to guide the growth of network topology. Moreover, Fig. \ref{fig_thresholds} illustrates the granulation of STWD-SFNN from coarse to fine. }


      \begin{figure}[h]   
        	\centering
        	\small
        	\captionsetup{font={small}}
        	\includegraphics[width=0.45\textwidth]{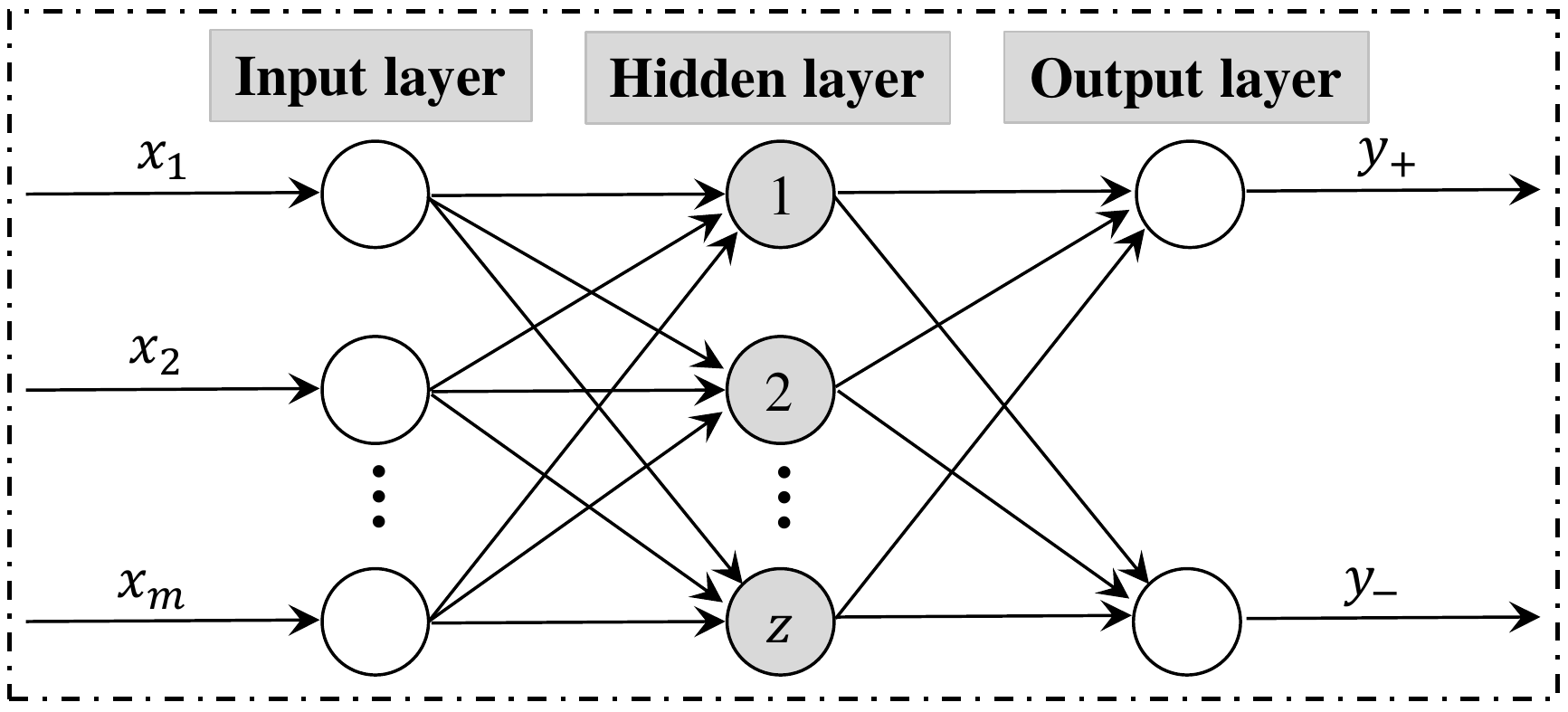}
    \caption{{ Framework of SFNN}}
        	\label{fig_framsfnn}
        \end{figure}

\begin{figure*}[h]   
  		\centering
    		\small
    		\captionsetup{font={small}}
    		\includegraphics[width=1\textwidth]{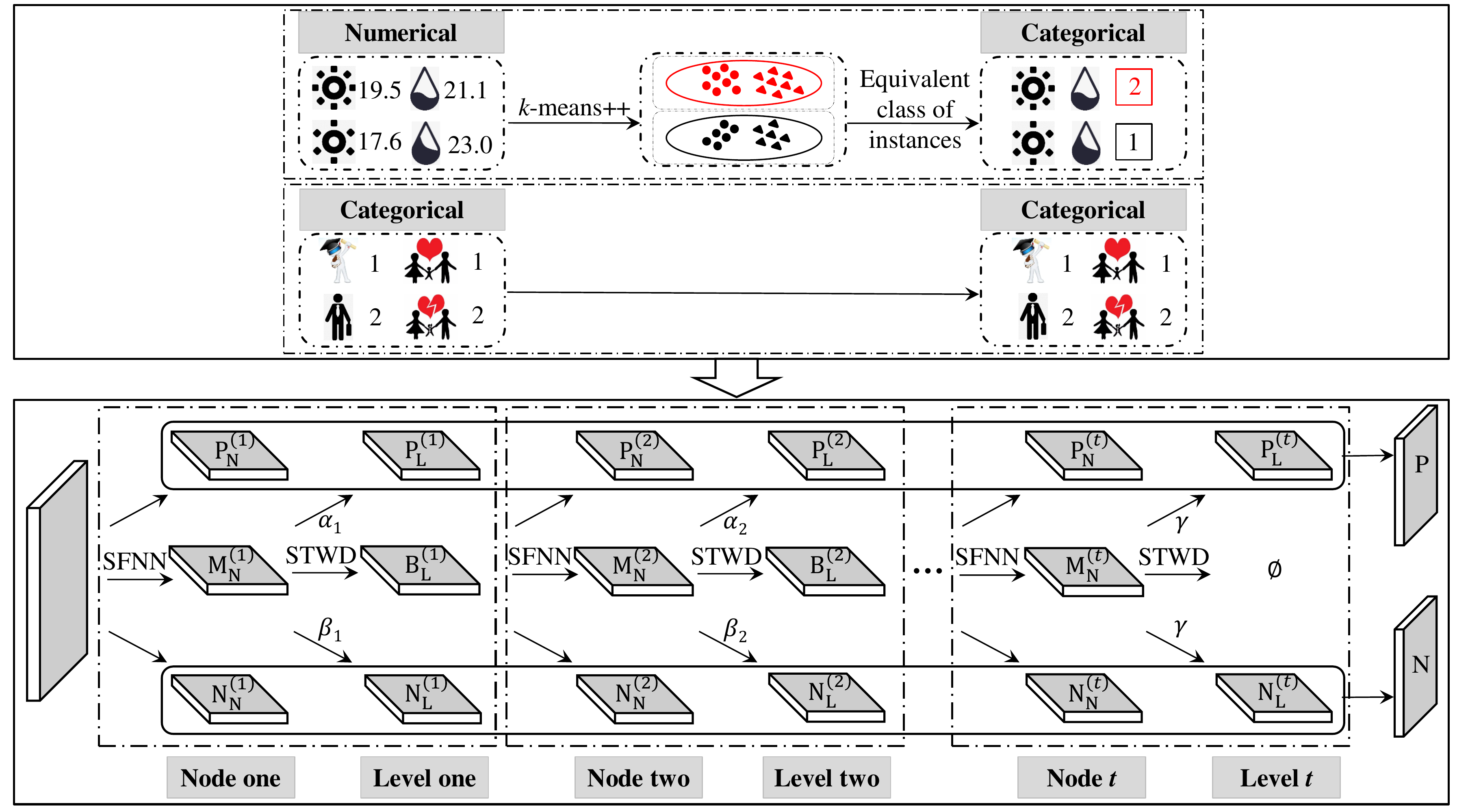}
\caption{{ Framework of STWD-SFNN. STWD-SFNN has two parts: discretion modular and training modular. If the instances contain numerical features, discretion modular adopts $k$-means++ to convert numerical data into discrete data to meet the requirements of STWD for data types. The training modular has $t$ levels. Each level has two parts: SFNN and STWD. SFNN is a network topology with one hidden layer node and gets correctly classified instances, i.e., $ {\rm P}^{(i)}_{\rm {N}} $ and $ {\rm N}^{(i)}_{\rm {N}} $, and misclassification instances $ {\rm M}^{(i)}_{\rm {N}} $.  The first $t$-1 part of STWD utilizes the strategy of three ways, and the last part applies the strategy of two ways.}}
    		\label{fig_framework}
    	\end{figure*}     
    
        \begin{figure*}[h]   
        	\centering
        	\small
        	\captionsetup{font={small}}
        	\includegraphics[width=0.60\textwidth]{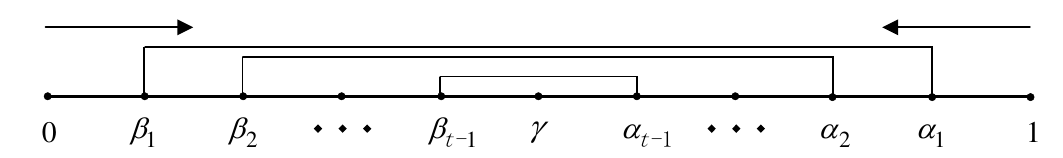}
    \caption{ Sequential thresholds of STWD-SFNN }
        	\label{fig_thresholds}
        \end{figure*}   
        
 Firstly, if the instances contain numerical features, such as light and humidity in Fig. \ref{fig_framework}, we adopt $k$-means++ \cite{Li2022KBS} to discretize the numerical data to meet the requirements of STWD for data types. Secondly, we initialize the network topology with one hidden layer node and get correctly classified instances, i.e., $ {\rm P}^{(i)}_{\rm {N}} $ and $ {\rm N}^{(i)}_{\rm {N}} $, and misclassification instances $ {\rm M}^{(i)}_{\rm {N}} $. Thirdly, we calculate the conditional probability of the instances belonging to the positive labels and the thresholds $ (\alpha_i, \beta_i)(1 \le i \le t-1)$ of the $ i $-th level. If the conditional probability is no less than $ \alpha_i $, the instances are classified into $ {\rm P}^{(i)}_{\rm {L}} $; if the conditional probability is not greater than $ \beta_i$, they are partitioned into $ {\rm N}^{(i)}_{\rm {L}} $; otherwise, they are classified into $ {\rm B}^{(i)}_{\rm {L}} $. It means that the information of the $i$-th level is not enough to support STWD-SFNN to make decisions, and a hidden layer node is added based on the current network topology, and the above process is repeated at the ($i$+1)-th level. To prevent the overfitting of STWD-SFNN, when $ {\rm B}^{(t-1)}_{\rm {L}} $ is not empty, we adopt the threshold $ \gamma $ at the $ t $-th level. If the conditional probability is no less than $\gamma$, the instances are classified into $ {\rm P}^{(t)}_{\rm {L}} $; otherwise, they are partitioned into $ {\rm N}^{(t)}_{\rm {L}} $. Finally, we have the regions of POS, BND, and NEG after $t$ turns of STWD-SFNN.        
    

\subsection{ Related definitions }
			Suppose the granular levels of STWD-SFNN traverse from one to $ t $. Accordingly, the maximum number of hidden layer nodes of STWD-SFNN is $ t $.
			
			\begin{definition} \label{def1_DecicisonTable}
				Suppose there is a decision table with a number of hidden layer nodes  
				\vspace{-0.5em}	
				\begin{spacing}{0.5}  
					\begin{normalsize}  
						\begin{gather*}    		     
						T = \left(U, A = C \cup D, V = \{V_{a} \mid a \in A\}, I = \{I_{a} \mid a \in A\}, H \right)  
						\end{gather*}
					\end{normalsize}
					\vspace{-2em}	
				\end{spacing} 
			\end{definition}
			
\noindent where $ U $, $ A $, $ C $, $ D $, $ V $, and $ I $ have the same meanings as in Section \ref{STWD}; $ H $ is a set composed of the hidden layer nodes, i.e., $ H = \{ h_{1}, h_{2}, \cdots, h_{t}\} $ and $ h_{i} $ is the $ i $-th $ ( 1 \le i \le t ) $ hidden layer node added.

	According to STWD, we have a action set $ \{ a_{P}, a_{B}, a_{N} \}$, which can classify the instances into POS, BND, and NEG, respectively. Hence, we have the following definitions.
			
			\begin{definition} \label{def2_result_cost}
				Suppose $ T $ has $ t $ granular levels, the result cost matrix for the $ i $-th $ (1 \le i \le t) $ granular level is  \\
				$$ 
				\mathbf \Lambda^{(i)}  =
				\left(
				\begin{matrix}
				\lambda_{PP}^{(i)} & \lambda_{BP}^{(i)} & \lambda_{NP}^{(i)} \\
				\lambda_{PN}^{(i)} & \lambda_{BN}^{(i)} & \lambda_{NN}^{(i)}
				\end{matrix}
				\right)
				$$	
			\end{definition} 
			
			\noindent where $ \lambda_{PP}^{(i)} $, $ \lambda_{BP}^{(i)} $, and $ \lambda_{NP}^{(i)} $ are the result costs of taking actions $ a_{P} $, $ a_{B} $, and $ a_{N} $ for the positive labels, respectively; while $ \lambda_{PN}^{(i)} $, $ \lambda_{BN}^{(i)} $, and $ \lambda_{NN}^{(i)} $ are the costs of taking actions $ a_{P} $, $ a_{B} $, and $ a_{N} $ for the negative labels, respectively. It should be noted that the various elements of $ \mathbf \Lambda^{(i)} $ at $ i $-th level need to satisfy $ 0 \le \lambda_{PP}^{(i)} < \lambda_{BP}^{(i)} < \lambda_{NP}^{(i)} < 1 $, $ 0 \le \lambda_{NN}^{(i)} < \lambda_{BN}^{(i)} < \lambda_{PN}^{(i)} < 1 $ , and $ (\lambda_{BN}^{(i)} - \lambda_{NN}^{(i)}) \times (\lambda_{BP}^{(i)} - \lambda_{PP}^{(i)}) < (\lambda_{PN}^{(i)} - \lambda_{BN}^{(i)}) \times (\lambda_{NP}^{(i)} - \lambda_{BP}^{(i)}) $.
			
			\begin{example} \label{exp1_result_cost}
				Suppose STWD-SFNN has three granular levels, i.e., $ t $ = 3. According to the conditions satisfied by the loss function, i.e., $ 0 \le \lambda_{PP}^{(i)} < \lambda_{BP}^{(i)} < \lambda_{NP}^{(i)} < 1 $, $ 0 \le \lambda_{NN}^{(i)} < \lambda_{BN}^{(i)} < \lambda_{PN}^{(i)} < 1 $, and $ (\lambda_{BN}^{(i)} - \lambda_{NN}^{(i)}) \times (\lambda_{BP}^{(i)} - \lambda_{PP}^{(i)}) < (\lambda_{PN}^{(i)} - \lambda_{BN}^{(i)}) \times (\lambda_{NP}^{(i)} - \lambda_{BP}^{(i)}) $, we can achieve the results by randomizing the elements of the result cost matrix in the range of [0,1)  at the first granular level, i.e., $ \mathbf \Lambda^{(1)} = \begin{bmatrix} 
				0      & 0.1506 & 0.9021 \\
				0.4592 & 0.1249 & 0 
				\end{bmatrix} $. Meanwhile, the result costs at the other two granular levels are $ \mathbf \Lambda^{(2)} = \begin{bmatrix} 
				0      & 0.4617 & 0.5962 \\
				0.6740 & 0.1344 & 0
				\end{bmatrix} $ and  $ \mathbf \Lambda^{(3)} = \begin{bmatrix} 
				0      & 0.3626 & 0.7064 \\
				0.7664 & 0.3727 & 0
				\end{bmatrix} $, respectively.
			\end{example}

			\begin{definition} \label{def3_alpha_beta}
				Suppose $T$ has $t$ granular levels, the thresholds $ (\alpha_{i}, \beta_{i}) $ of the $ i $-th $ (1 \le i \le t-1) $ level are  \\
				\begin{spacing}{0.5}
					\vspace{-2em}
					\begin{normalsize}      
						\begin{gather}
						\alpha_{i} = \frac{\lambda_{PN}^{(i)}-\lambda_{BN}^{(i)}}{(\lambda_{PN}^{(i)}-\lambda_{BN}^{(i)})+(\lambda_{BP}^{(i)}-\lambda_{PP}^{(i)})} \\
						\beta_{i}  = \frac{\lambda_{BN}^{(i)}-\lambda_{NN}^{(i)}}{(\lambda_{BN}^{(i)}-\lambda_{NN}^{(i)})+(\lambda_{NP}^{(i)}-\lambda_{BP}^{(i)})}
						\end{gather}
					\end{normalsize}
				\end{spacing}
			\end{definition}  
			
			\begin{definition} \label{def4_gamma}
				Suppose $ T $ has $ t $ granular levels, the threshold $ \gamma $ of the $ t $-th granular level is \\
				\begin{spacing}{0.5}
					\vspace{-2em}	
					\begin{normalsize}      
						\begin{gather}
						\gamma = \frac{\lambda_{PN}^{(t)}-\lambda_{NN}^{(t)}}{(\lambda_{PN}^{(t)}-\lambda_{NN}^{(t)})+(\lambda_{NP}^{(t)}-\lambda_{PP}^{(t)})}
						\end{gather}
					\end{normalsize}  
				\end{spacing} 
				\vspace{-0.5em} 
			\end{definition}  
			
\begin{example} \label{exp2_alpha_beta_gamma}
In this example, we adopt the result cost matrices in Example \ref{exp1_result_cost}. According to Definitions \ref{def3_alpha_beta} and \ref{def4_gamma}, the thresholds of STWD-SFNN at each granular level are  $ (\alpha_{1}, \beta_{1}) $ = (0.6894, 0.1425),  $ (\alpha_{2}, \beta_{2}) $ = (0.5389, 0.5016), and  $ \gamma $ = 0.5204, respectively. Since the thresholds of STWD-SFNN satisfy $ \beta_{1} < \beta_{2} < \gamma < \alpha_{2} < \alpha_{1} $, we retain  the result cost matrices of STWD-SFNN at each granular level. It should be noted that if the thresholds of STWD-SFNN do not meet the sequential property at a certain granular level, we need to reinitialize the result cost matrix at the corresponding level. 
\end{example}

			{
			\begin{definition} \label{def5_regions}
				Suppose $ T $ has $ t $ granular levels, the regions of POS, BND, and NEG after $ t $ turns are denoted as follows \\
				\begin{spacing}{0.5}
					\vspace{-2em}
					\begin{normalsize}
					    \begin{gather}
						{\rm POS} = \bigcup_{i=1}^t [ {\rm P}^{(i)}_{\rm {N}} \cup {\rm P}^{(i)}_{\rm {L}} ] \\
						{\rm BND} = {\rm M}^{(t)}_{\rm {N}} \cup {\rm B}^{(t)}_{\rm {L}} \\
						{\rm NEG} = \bigcup_{i=1}^t  [ {\rm N}^{(i)}_{\rm {N}} \cup {\rm N}^{(i)}_{\rm {L}} ]               	
						\end{gather}
					\end{normalsize} 
				\end{spacing}
				\vspace{-1em}
			\end{definition} }
			
			\noindent where $ {\rm P}^{(i)}_{\rm {N}} $ and $ {\rm P}^{(i)}_{\rm {L}} $ represent the set of instances classified into POS by SFNN and STWD at the $ i $-th turn, respectively; 
			$ {\rm M}^{(t)}_{\rm {N}} $ and $ {\rm B}^{(t)}_{\rm {L}} $ represent the set of instances misclassified by SFNN and the set of instances partitioned into BND by STWD at the $ t $-th turn, respectively; $ {\rm N}^{(i)}_{\rm {N}} $ and $ {\rm N}^{(i)}_{\rm {L}} $ represent the set of instances classified into NEG by SFNN and STWD at the $ i $-th turn, respectively. Since the threshold $ \gamma $ corresponding to the $ t $-th level can classify the instances of $ {\rm B}^{(t)}_{\rm {L}} $ into $ {\rm P}^{(t)}_{\rm {L}} $ and $ {\rm N}^{(t)}_{\rm {L}} $, we have $ {\rm BND} = \varnothing $ at the $ t $-th turn. 
			
	To describe the process costs of granularity \cite{Li2020ASC,Zhang-gs-2021KBS}, STWD-SFNN considers the test cost and delay cost \cite{Fang2020INS,Yang2022KBS,Yang2017KBS}.            
	{		
    \begin{definition}  
    \label{def6_process_costs}
	Suppose $ T $ has $ t $ granular levels, the process cost at the $ t $-th level is marked as $ Cost_{{\rm P}_t} $, and\\
				\begin{spacing}{0.5}
					\vspace{-1.5em}
					\begin{normalsize}      
						\begin{gather}
						Cost_{{\rm P}_t} = (Cost_{{\rm PT}_t}, Cost_{{\rm PD}_t})  \\
						Cost_{{\rm PT}_t} = \sum_{i=1}^{t} m_{i} \times Cost_{{\rm PPT}_i}     \\
						Cost_{{\rm PD}_t} = \max_{i=1,2,\cdots,t} m_{i} \times Cost_{{\rm PPD}_i}        
						\end{gather}
					\end{normalsize}
					\vspace{-2em} 
				\end{spacing}
			\end{definition} } 
			
	\noindent where $ Cost_{{\rm PT}_t} $ and $ Cost_{{\rm PD}_t} $ represent the test cost and delay cost at the $ t $-th granular level, respectively; $ m_{i} $ is the number of instances at the $ i $-th $ (1 \le i \le t) $ level and $ m_{i} > 0 $; $ Cost_{{\rm PPT}_i} $ and $ Cost_{{\rm PPD}_i} $ are the unit test cost and the unit delay cost at the $ i $-th $ (1 \le i \le t) $ level, respectively. It should be noted that when BND is not empty, the model continues to add a new turn of learning which produces the test and delay costs. Since the costs of test and delay are both positive real numbers and $ m_{i} > 0 $, the unit test cost and the unit delay cost need to satisfy $ Cost_{{\rm PPT}_i} > 0 $ and $ Cost_{{\rm PPD}_i} > 0 $, respectively. Meanwhile, since the dataset has no data on unit test cost and unit delay cost, and $ m_{i} $ gradually decreases as the model learns, to facilitate the observation of the more general correlation between the costs of test and delay and the granular layers, we can generate unit test costs and unit delay costs randomly and incrementally in the range of positive real numbers.
			
\begin{example} \label{exp3_process_costs}
	Suppose we have a two-class classification dataset with 10 instances shown in Table \ref{tab_example_data}, where the training data is $ \{ x_{1}, x_{2}, x_{3}, x_{4}, x_{5}, x_{6} \} $, the validation data is $ \{ x_{7}, x_{8} \} $, and the testing data is $ \{ x_{9}, x_{10} \} $. Suppose STWD-SFNN has at most three granular levels, and the unit test cost and unit delay cost at each level are $ [Cost_{{\rm PPT}_1}, Cost_{{\rm PPT}_2}, Cost_{{\rm PPT}_3}] = [1,2,3] $ and $ [Cost_{{\rm PPD}_1}, Cost_{{\rm PPD}_2}, Cost_{{\rm PPD}_3}] = [1,2,3] $, respectively. The calculation of test cost and delay cost is as follows. The instances classified into $ {\rm P}^{(1)}_{\rm {L}} $, $ {\rm B}^{(1)}_{\rm {L}} $, and $ {\rm N}^{(1)}_{\rm {L}} $ at the first granular level are $ \varnothing $, $ \{ x_{4}, x_{5} \} $, and $ \{ x_{1} \} $, respectively. Meanwhile, suppose the two instances $ \{ x_{4}, x_{5} \} $ are partitioned into $ {\rm N}^{(2)}_{\rm {L}} $ at the second level (Details will be shown in Section \ref{illstrativeexample}). Thus, STWD-SFNN has two granular levels, since  $ {\rm B}^{(2)}_{\rm {L}} $ is  empty. Hence, $ Cost_{{\rm PT}_1} $= 3 $ \times $ 1 = 3  and $ Cost_{{\rm PD}_1} $ = 3 $ \times $ 1 = 3 at the first level according to Definition \ref{def6_process_costs}. Similarly, $ Cost_{{\rm PT}_2} $ = 3 $ \times $ 1 + 2 $ \times $ 2 = 7 and  $ Cost_{{\rm PD}_2} = \max $ (3 $ \times $ 1, 2 $ \times $ 2) = 4  at the second level.    	 
\end{example}
		
    	 \begin{table}[ht]
    	 	\vspace{0em}
    	 	\setlength{\abovecaptionskip}{0cm}    
    	 	\setlength{\belowcaptionskip}{-0.2cm} 
    	 	\small
    	 	\captionsetup{font={small}}
    	 	\newcommand{\tabincell}[2]{\begin{tabular}{@{}#1@{}}#2\end{tabular}}
    	 	\caption{Two-class dataset }
    	 	\centering{
    	 		\begin{tabular}{cccccc}\hline
			&$a_1$  &$a_2$  &$a_3$ &$a_4$ &D    \\ \hline
			$x_{1}$ &0.7415	&0.5407	&0.5795	&0.9009	&2 \\
			$x_{2}$ &0.6844	&0.3210	&0.0471	&0.3700	&1 \\  
			$x_{3}$ &0.7718	&0.0912	&0.4874	&0.5308	&1 \\ 
			$x_{4}$ &0.0818	&0.4263	&0.0354	&0.0621	&1 \\ 
			$x_{5}$ &0.5596	&0.4643	&0.3585	&0.3189	&2 \\
			$x_{6}$ &0.6397	&0.6535	&0.7739	&0.6809	&1 \\ 
			$x_{7}$ &0.7425 &0.0989	&0.7429	&0.4131	&1 \\	
			$x_{8}$ &0.9419	&0.5958	&0.4474	&0.7536	&2 \\ 		
			$x_{9}$ &0.4992 &0.2212	&0.9525	&0.4176	&1 \\
			$x_{10}$&0.2990	&0.4796	&0.1559	&0.7456	&2 \\  \hline					
    	 		\end{tabular}}
     		\vspace{-1em}
     		\label{tab_example_data}
     	\end{table}

\subsection{ STWD-SFNN model }
    	      

    	To prevent the overfitting of STWD-SFNN, we consider the $ L_{2} $ regularization factor of the loss function in SFNN and the penalty factor $ \varepsilon $ of the risk function in STWD. Hence, STWD-SFNN has three parts: preprocessing of data, SFNN, and STWD.    
    	 	
\subsubsection{ Preprocessing of data }
    	
The instances processed by STWD \cite{Yao2019KBS} are usually categorical data, such as teachers and doctors. However, numerical data, such as the temperature and humidity of the weather, need to be processed to meet the calculation requirements of STWD. Therefore, $ k $-means++ \cite{Li2022KBS} is adopted to discretize the numerical data into the categorical data. Firstly, we randomly select one instance as the initial clustering center. Secondly, we calculate the distance between the cluster center and the remaining instances, and calculate the probability that each instance is selected as the next cluster center. Then, the next clustering center is selected according to the roulette method, and the discretization process is stopped until $ k $ clustering centers are found. Finally, the instances in each cluster are discretized into the same category. 

    \subsubsection{ SFNN }
    	
Suppose STWD-SFNN contains $ u$ $(1 \le u \le d) $ unclassified instances. Firstly, we randomly generate the network parameters with $ i $-th hidden layer node, including  $ \mathbf w_{1}^{(i)} $,  $ \mathbf b_{1}^{(i)} $, $ \mathbf w_{2}^{(i)} $, and  $ \mathbf b_{2}^{(i)} $. To guarantee the repeatability of the experimental results, we set the seed of the random number generator to rng(0). Secondly, we adopt the focal loss function to guide the forward propagation learning, and calculate the learning results with the current network parameters. Thirdly, we apply the Adam optimizer to guide the error backpropagation, adaptively find the optimal network parameters, including $ \overline{\mathbf w}_{1}^{(i)} $, $ \overline{\mathbf b}_{1}^{(i)} $, $ \overline{\mathbf w}_{2}^{(i)} $, and  $ \overline{\mathbf b}_{2}^{(i)} $. Finally, STWD-SFNN with topology of $ i $-th hidden layer node yields the positive instances $ {\rm P}^{(i)}_{\rm {N}} $, the misclassification instances $ {\rm M}^{(i)}_{\rm {N}} $, and the negative instances $ {\rm N}^{(i)}_{\rm {N}} $, where the misclassification instances are also called the difficult-to-classify instances.

    	
 \subsubsection{ STWD }
    	
  For the difficult-to-classify instances in the network, we first count the label equivalence class $ Y $ that the instances belong to the positive labels and the instance equivalence class $ [x]_{e}^{(i)} $. Next, we calculate the conditional probability that the instances belong to the positive labels according to Equation (\ref{equ4_P}).
   
    	\begin{spacing}{0.5} 
    		\begin{normalsize}
    			\begin{equation}
    			p_{e}^{(i)} = \frac{\left| Y \cap [x]_{e}^{(i)} \right|}{\left| [x]_{e}^{(i)} \right|} \label{equ4_P}  
    			\end{equation}
    		\end{normalsize}	
        \end{spacing}

    \noindent  In addition, according to Definitions \ref{def2_result_cost} and \ref{def3_alpha_beta}, we can index the result cost matrix $ \mathbf \Lambda^{(i)}$  and the thresholds $  (\alpha_{i}, \beta_{i}) $ at the $ i $-th $(1 \le i \le t-1)$ level. Since the risk varies in different regions, STWD-SFNN considers the total decision risk associated with classifying $ u $ instances. The risk is calculated according to 
    Equation (\ref {equ5_Risk_alpha_beta}).
 
    	\begin{spacing}{0.5} 
    		\begin{normalsize}       	       	
    			\begin{equation}
    			\begin{split} 
    			Risk^{(i)} = \min_{(\alpha_{i}, \beta_{i})} \Bigg (\sum_{o=1 \atop p_{o}^{(i)}\ge \alpha_{i} }^{u_{p}}  \lambda_{PP}^{(i)} \times p_{o}^{(i)} + \lambda_{PN}^{(i)} \times (1-p_{o}^{(i)})   \\
    			+  \varepsilon \times \sum_{k=1 \atop \beta_{i} < p_{k}^{(i)} < \alpha_{i} }^{u_{b}} \lambda_{BP}^{(i)} \times p_{k}^{(i)} + \lambda_{BN}^{(i)} \times (1-p_{k}^{(i)})   \\ 
    			+  \sum_{j=1 \atop p_{j}^{(i)} \le \beta_{i} }^{u_{n}} \lambda_{NP}^{(i)} \times p_{j}^{(i)} + \lambda_{NN}^{(i)} \times (1-p_{j}^{(i)}) \Bigg )  \\
    			{\rm s.t.}  \quad    0 < \beta_{i} <  \alpha_{i} < 1, 1 \le i \le t-1, \varepsilon \ge 1, u=u_{p}+u_{b}+u_{n} \label{equ5_Risk_alpha_beta}	
    			\end{split}
    			\end{equation}     			
    		\end{normalsize}
    	\end{spacing}

\noindent where $ u_{p} $, $ u_{b} $, and $ u_{n} $ are the numbers of instances in $ {\rm P}^{(i)}_{\rm {L}} $, $ {\rm B}^{(i)}_{\rm {L}} $, and $ {\rm N}^{(i)}_{\rm {L}} $, respectively; $ \varepsilon $ is a penalty factor to avoid too many instances being partitioned into $ {\rm B}^{(i)}_{\rm {L}} $, so as to prevent the overfitting of STWD-SFNN. Therefore, for the $ e $-th instance at the $ i $-th $(1 \le i \le t-1)$ granular level, the classification criteria are as follows.

    	(P1) if $ p_{e}^{(i)} \ge \alpha_{i} $, then $ [x]_{e}^{(i)} \in {\rm P}^{(i)}_{\rm {L}} $; 

    	(B1) if $ \beta_{i} < p_{e}^{(i)} < \alpha_{i} $, then $ [x]_{e}^{(i)} \in {\rm B}^{(i)}_{\rm {L}} $;
    	
    	(N1) if $ p_{e}^{(i)} \le \beta_{i}  $, then $ [x]_{e}^{(i)} \in {\rm N}^{(i)}_{\rm {L}} $.
    	
    	If BND of the ($ t $-1)-th level is not empty, we need to add another hidden layer node in STWD-SFNN. Based on Definition \ref{def4_gamma}, we index the result cost matrix $ \mathbf \Lambda^{(t)}$ and the threshold $ \gamma $ at the $ t $-th level. STWD-SFNN computes the total decision risk of classifying $ u $ instances.

    	\begin{spacing}{0.5}  
    		\begin{normalsize}       	       	
    			\begin{equation}
    			\begin{split} 
    			Risk^{(t)} = \min_{\gamma}\Bigg (\sum_{o=1 \atop p_{o}^{(t)}\ge \gamma }^{u_{p}}  \lambda_{PP}^{(t)} \times p_{o}^{(t)} + \lambda_{PN}^{(t)} \times (1-p_{o}^{(t)})   \\
    			+  \sum_{j=1 \atop p_{j}^{(t)} < \gamma }^{u_{n}}  \lambda_{NP}^{(t)} \times p_{j}^{(t)} + \lambda_{NN}^{(t)} \times (1- p_{j}^{(t)}) \Bigg )  \\
    			{\rm s.t.}  \quad    0 < \gamma < 1,u=u_{p}+u_{n}
    			\label{equ6_Risk_gamma}
    			\end{split} 
    			\end{equation}  
    		\end{normalsize}
    	\end{spacing}
    
     	\noindent where $ u_{p} $ and $ u_{n} $ are the numbers of instances in $ {\rm P}^{(t)}_{\rm {L}} $ and $ {\rm N}^{(t)}_{\rm {L}} $, respectively. Hence, for the $ e $-th instance at the $ t $-th level, the classification criteria are as follows.

     	(P2) if $ p_{e}^{(t)} \ge \gamma $, then $ [x]_{e}^{(i)} \in {\rm P}^{(t)}_{\rm {L}} $; 

     	(N2) if $ p_{e}^{(t)} < \gamma  $, then $ [x]_{e}^{(i)} \in {\rm N}^{(t)}_{\rm {L}} $.
     	
\begin{example} \label{exp4_result_cost}
	We adopt the sequential thresholds of Example \ref{exp2_alpha_beta_gamma} and the dataset of Example \ref{exp3_process_costs}. Suppose the first granular level learns the three misclassified instances $ \{x_{1}, x_{4}, x_{5}\} $ passed from SFNN, which are partitioned into $ {\rm P}^{(1)}_{\rm {L}} = \varnothing $, $ {\rm B}^{(1)}_{\rm {L}} = \{ x_{4}, x_{5}\} $, and $ {\rm N}^{(1)}_{\rm {L}} = \{x_{1}\} $. Moreover, suppose the conditional probabilities at the first level are $ [p_{1}^{(1)}, p_{2}^{(1)}] = [0,0.5] $, according to Example \ref{exp2_alpha_beta_gamma} and Equation \eqref{def3_alpha_beta}, we have the result cost $ Risk^{(1)} $ = 0 + $ \lambda_{stwd} \times $ 2 $ \times $ (0.1506 $ \times $ 0.5 +  0.1249 $ \times $ 0.5) + 1 $ \times $ (0.9021 $ \times $ 0 + 0 $ \times $ 1) = 0.5510, where $ \lambda_{stwd} $ = 2. Similarly, suppose the second level classifies $ \{x_{4}, x_{5}\} $ into $ {\rm N}^{(2)}_{\rm {L}} $ with the conditional probability 0.5. Thus, $ Risk^{(2)} $ = 0 + 0 + 2 $ \times $ (0.5962 $ \times $ 0.5 + 0 $ \times $ 0.5) = 0.5962. 
\end{example}
     		
     	Algorithm 1 shows the pseudocode of STWD-SFNN.     	   
     	\begin{breakablealgorithm}
     		\small
     		\caption{ STWD-SFNN }
     		\hspace*{0.02in} \leftline{{\bf Input:}
     			$ T = \left(U, A = C \cup D, V = \{V_{a} \mid a \in A\}, I = \{I_{a} \mid a \in A\}, H \right) $ } \\
     		\hspace*{0.02in} \leftline{{\bf Output:}
     			hidden layer nodes $ i $, $ \mathbf W_{1} $ and $ \mathbf b_{1} $ connected the input } \\  \leftline{layer to the hidden layer, and $ \mathbf W_{2} $ and $ \mathbf b_{2} $ connected the hidden } \leftline{ layer to the output layer}    		
     		
     		\begin{algorithmic}[1]	 
     			\State  Initialization: rng(0), $ i  \leftarrow $ 1, $ t  \leftarrow $ 10, $ \theta \leftarrow $ 2,  $ \rho_{1} \leftarrow $ 0.9, $ \rho_{2} \leftarrow $ 0.999,  $ \tau \leftarrow  10^{-8} $, $ \mathbf w_{1}^{(i)} $, $ \mathbf b_{1}^{(i)} $, $ \mathbf w_{2}^{(i)} $, $ \mathbf b_{2}^{(i)} $, activation function (ReLU, leaky ReLU, SeLU, tanh, sigmoid, or swish), distribution of parameters (uniform or normal), POS $ \leftarrow \varnothing $, NEG $ \leftarrow \varnothing $, BND $ \leftarrow  T $,  $ Cost_{{\rm PPT}_i} $, $ Cost_{{\rm PPD}_i} $,  $ \mathbf \Lambda^{(i)}$, $ (\alpha_{i}, \beta_{i}) $, and $ \gamma $;
     			
     			\While { $ \mid {\rm BND} \mid \neq \varnothing $ }       			
     			\Statex \qquad  // Construct SFNN by adding one hidden layer node 
     			\State  Calculate the cost function using Equation \eqref{equ2_L_NN};
     			\State  Update $ \overline{\mathbf w}_{1}^{(i)} $,  $ \overline{\mathbf b}_{1}^{(i)} $, $ \overline{\mathbf w}_{2}^{(i)} $, and  $ \overline{\mathbf b}_{2}^{(i)} $ using Equation \eqref{equ3_Adam};   	
     			\State  Get $ {\rm P}^{(i)}_{\rm {N}} $, $ {\rm M}^{(i)}_{\rm {N}} $, and $ {\rm N}^{(i)}_{\rm {N}} $ of SFNN;		
     			
     			\Statex \qquad  // Construct STWD on the misclassified instances
     			
     			\State  Use $ k $-means++ to discretize the numerical  data into categorical data;  // Preprocessing of data
     			
     			\State  Calculate  probability $ p_{e}^{(i)} $ using Equation \eqref{equ4_P}; 
     			
     			\If {$ i < t $ }
     			\State Achieve result cost matrix $ \mathbf \Lambda^{(i)} $, thresholds $ (\alpha_{i},\beta_{i}) $, and regions $ {\rm P}^{(i)}_{\rm {L}} $, $ {\rm B}^{(i)}_{\rm {L}} $, and $ {\rm N}^{(i)}_{\rm {L}} $ of STWD;
     			\State Compute $ Risk^{(i)} $ using Equation \eqref{equ5_Risk_alpha_beta};
     			\Else  	
     			
     			\State Achieve result cost matrix $ \mathbf \Lambda^{(t)} $, threshold $ \gamma $, and regions $ {\rm P}^{(t)}_{\rm {L}} $ and $ {\rm N}^{(t)}_{\rm {L}} $ of STWD;        	
     			\State Compute $ Risk^{(t)} $ using Equation \eqref{equ6_Risk_gamma};
     			\EndIf
     			\State  Compute $ Cost_{{\rm PT}_i} $ and $ Cost_{{\rm PD}_i} $ using Definition \ref{def6_process_costs};
     			
     			\State  Update  POS, BND, and NEG using Definition \ref{def5_regions}; 
     			\State  $ i \leftarrow i + 1 $;     
     			\EndWhile     
     			
     			\State $ {\mathbf W}_{1} = \begin{bmatrix}   \overline{\mathbf w}_{1}^{(1)}  \\ \cdots \\  \overline{\mathbf w}_{1}^{(i)}  \end{bmatrix} $, $ {\mathbf b}_{1} = \begin{bmatrix}  \overline{\mathbf b}_{1}^{(1)}  \\ \cdots \\  \overline{\mathbf b}_{1}^{(i)}  \end{bmatrix} $,  $ {\mathbf W}_{2} = \begin{bmatrix} \overline {\mathbf w}_{2}^{(1)} \cdots \overline {\mathbf w}_{2}^{(i)} \end{bmatrix} $, and $ {\mathbf b}_{2} = \overline {\mathbf b}_{2}^{(i)} $; 
     			
     			\hspace*{0.02in} \State \leftline{{\bf return } $ i $, $ \mathbf W_{1} $, $ \mathbf b_{1} $, $ \mathbf W_{2} $, and $ \mathbf b_{2} $;}
     		\end{algorithmic}	
     	\end{breakablealgorithm}
     	
       {According to Algorithm 1, there are seven parameters in STWD-SFNN. rng(0) is used to guarantee the repeatability of the experimental results. $ i $ is utilized to guarantee that the number of hidden nodes increases one by one. $ t $ is applied to guarantee the maximum number of hidden nodes. If the number of hidden nodes is small, it means that STWD-SFNN converges faster, and the training time of STWD-SFNN is shorter; otherwise, the training time of STWD-SFNN is longer. $ \theta $ is set to adjust the weight reduction rate of easy-to-classify instances. $ \rho_1 $ and $ \rho_2 $ are set to guarantee the exponential decay rate of the first-order and second-order momentum estimates of the Adam optimizer. $ \tau $ is settled to avoid the denominator of Adam being zero.}
       
\subsection{An illustrative example}\label{illstrativeexample}
A dataset in Table \ref{tab_example_data} is adopted to illustrate the principle of STWD-SFNN.  Without loss of generality, we select the result cost matrix and sequential thresholds of Examples \ref{exp1_result_cost} and \ref{exp2_alpha_beta_gamma}. The illustrative example is shown as follows.
    	
We initialize the number of hidden nodes to one, and randomize the parameters subjected to uniform distribution, including $ \mathbf w_{1}^{(1)} = (0.9168,-1.1743,0.4410,0.0746) $, $\mathbf b_{1}^{(1)} = (0.0047) $, $ \mathbf w_{2}^{(1)} = (0.1401,0.0087)^\top $, and $\mathbf b_{2}^{(1)} = (0.0018,0.0082)^\top $. Meanwhile, we adopt selu function and Adam optimizer, and then validate the classification performance of the model on $ \{ x_{7}, x_{8} \} $. Correspondingly, the optimized parameters are $ \overline{\mathbf w}_{1}^{(1)} = (0.8115,-1.0612,0.3465,0.1514) $, $ \overline{ \mathbf b}_{1}^{(1)} = (0.1139) $, $ \overline{\mathbf w}_{2}^{(1)} = (0.2019,0.0860)^\top $, and $ \overline{\mathbf b}_{2}^{(1)} = (0.1110,1177)^\top $.  According to $ \hat{\mathbf y} = \overline{\mathbf w}_{2}^{(1)} \times \rm selu(\overline{\mathbf w}_{1}^{(1)} \times \mathbf x^\top + \overline{ \mathbf b}_{1}^{(1)}) + \overline{\mathbf b}_{2}^{(1)} $, we have the prediction lables $[1, 1, 1, 2, 1, 1]$ of SFNN in training dataset. Compared with the truth labels $[2, 1, 1, 1, 2, 1]$, we have $ {\rm P}^{(1)}_{\rm {N}} = \{ x_{2}, x_{3}, x_{6}\} $, $ {\rm M}^{(1)}_{\rm {N}} = \{ x_{1}, x_{4}, x_{5}\} $, and $ {\rm N}^{(1)}_{\rm {N}} = \varnothing $. Since Table \ref{tab_example_data} is numerical data, we discretize $ \{ x_{1}, x_{4}, x_{5}\} $ into  $ \{ \{x_{1}\}, \{x_{4}, x_{5}\} \} $ by $ k $-means++ with two clusters. The conditional probabilities are 0 and 0.5, since the label equivalence class is $\{\{x_{4}\}, \{x_{1}, x_{5}\} \}$. Hence, STWD-SFNN has $ {\rm P}^{(1)}_{\rm {L}} = \varnothing$, $ {\rm B}^{(1)}_{\rm {L}} = \{ x_{4}, x_{5}\} $, and $ {\rm N}^{(1)}_{\rm {L}} = \{ x_{1}\} $ at the first level. According to Examples \ref{exp3_process_costs} and \ref{exp4_result_cost}, we have $ Risk^{(1)} = 0.5510 $, $ Cost_{ {\rm PT}_{1}} = 3 $, and $ Cost_{ {\rm PD}_{1}} = 3 $. Simultaneously, the decision of STWD-SFNN in the first turn are $ {\rm P}^{(1)}_{\rm {N}} \cup {\rm P}^{(1)}_{\rm {L}} = \{ x_{2}, x_{3}, x_{6}\} $, $ {\rm M}^{(1)}_{\rm {N}} \cap {\rm B}^{(1)}_{\rm {L}} = \{ x_{4}, x_{5}\} $, and $ {\rm N}^{(1)}_{\rm {N}} \cup {\rm N}^{(1)}_{\rm {L}} = \{ x_{1}\} $.
    	
    	
Since $ {\rm M}^{(1)}_{\rm {N}} \cap {\rm B}^{(1)}_{\rm {L}} $ is not empty, we continue to add one hidden node in STWD-SFNN. The parameters to be optimized are $ \mathbf w_{1}^{(2)} = (-0.3114,-0.2530,1.0117,0.3236) $, $ \mathbf b_{1}^{(2)} = (0.0035) $, $ \mathbf w_{2}^{(2)} = (0.1034,-0.0356)^\top $, and $ \mathbf b_{2}^{(2)} = (0.0018,0.0082)^\top $. After optimization, we have $ \overline{\mathbf w}_{1}^{(2)} = (-0.2338,-0.1741,0.9333,0.2477) $, $ \overline{\mathbf b}_{1}^{(2)} = (0.0818) $, $ \overline{\mathbf w}_{2}^{(2)} = (0.1343,0.0133)^\top $, and $ \overline{\mathbf b}_{2}^{(2)} = (0.0768,0.0821)^\top $. STWD-SFNN has $ {\rm P}^{(2)}_{\rm {N}} = \varnothing $, $ {\rm M}^{(2)}_{\rm {N}} = \{ x_{4}, x_{5}\} $, and $ {\rm N}^{(2)}_{\rm {N}} = \varnothing $. Based on the instance equivalent class $ \{ x_{4}, x_{5} \} $, label equivalent class $ \{ \{ x_{4} \}, \{ x_{5} \} \} $, and the conditional probability 0.5, we have $ {\rm P}^{(2)}_{\rm {L}} = \varnothing $, $ {\rm B}^{(2)}_{\rm {L}} = \varnothing $, and $ {\rm N}^{(2)}_{\rm {L}} = \{ x_{4}, x_{5}\} $. Similarly, according to Examples \ref{exp3_process_costs} and \ref{exp4_result_cost}, we have $ Risk^{(2)} = 0.5962 $, $ Cost_{ {\rm PT}_{2}} = 7 $, and $ Cost_{ {\rm PD}_{2}} = 4 $. Meanwhile, the decisions for STWD-SFNN in the second turn are $ {\rm P}^{(2)}_{\rm {N}} \cup {\rm P}^{(2)}_{\rm {L}} = \varnothing $, $ {\rm M}^{(2)}_{\rm {N}} \cap {\rm B}^{(2)}_{\rm {L}} = \varnothing $, and $ {\rm N}^{(2)}_{\rm {N}} \cup {\rm N}^{(2)}_{\rm {L}} = \{ x_{4}, x_{5}\} $.
    	  	
The learning of STWD-SFNN is stopped, since $ {\rm M}^{(2)}_{\rm {N}} \cap {\rm B}^{(2)}_{\rm {L}} $ is empty. According to Definition \ref{def5_regions}, the decisions of STWD-SFNN with two hidden layer nodes are POS=$ \{ x_{2}, x_{3}, x_{6} \} $, BND=$ \varnothing $, and NEG=$ \{ x_{1}, x_{4}, x_{5} \} $. Meanwhile, the model is $ \hat{\mathbf y} = \mathbf W_{2} \times \rm selu(\mathbf W_{1} \times \mathbf x^\top + \mathbf b_{1}) + \mathbf b_{2} $, where $ \mathbf W_{1} =  \begin{bmatrix} \overline{\mathbf w}_{1}^{(1)} \\ \overline{\mathbf w}_{1}^{(2)} \end{bmatrix} = \begin{bmatrix} 0.8115,-1.0612,0.3465,0.1514 \\ -0.2338,-0.1741,0.9333,0.2477 \end{bmatrix} $, $ \mathbf b_{1} =  \begin{bmatrix}  \overline{\mathbf b}_{1}^{(1)} \\ \overline{\mathbf b}_{1}^{(2)} \end{bmatrix} = \begin{bmatrix} 0.1139 \\ 0.0818 \end{bmatrix} $, $ \mathbf W_{2} =  \begin{bmatrix} \overline{\mathbf w}_{2}^{(1)}, \overline{\mathbf w}_{2}^{(2)} \end{bmatrix} = \begin{bmatrix} 0.2019, 0.1343 \\ 0.0860, 0.0133 \end{bmatrix} $, and $ \mathbf b_{2} = \overline{\mathbf b}_{2}^{(2)} = \begin{bmatrix} 0.0768 \\ 0.0821 \end{bmatrix} $. Finally, we obtain the predicted value $ \hat{ \mathbf y} $ when applying STWD-SFNN to $ \{ x_{9}, x_{10} \} $.   
    	
    	
Since the thresholds of TWD-SFNN are relatively fixed, compared with STWD-SFNN, there are two exceptional cases, i.e., the coarsest thresholds and the finest thresholds. Case 1 (The coarsest): If the thresholds are $ (\alpha, \beta) = (\alpha_{1}, \beta_{1} ) = (0.6894, 0.1425) $, TWD-SFNN will continue to learn the network topology when the nodes of hidden layer are two. Case 2 (The finest): If the thresholds are $ (\alpha, \beta) = (\alpha_{2}, \beta_{2} ) = (0.5389,0.5016) $, TWD-SFNN will  stop the training of model early. However, STWD-SFNN adopts sequential thresholds to traverse from coarse-grained level to fine-grained level step by step, thus better handling the problem of network topology. Therefore, STWD-SFNN is a generalized model of TWD-SFNN.

		\subsection{Theoretical analysis of STWD-SFNN}
		\begin{theorem}
    		STWD-SFNN algorithm is convergent.           	
    	\end{theorem}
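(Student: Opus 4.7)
The plan is to split the convergence claim into two nested parts: (i) each body execution of the while loop in Algorithm 1 finishes in finite time, and (ii) the number of times the while loop is entered is finite. Part (i) reduces to the convergence of SFNN training with focal loss under the Adam optimizer, while part (ii) is really a statement about the outer STWD driver, and is where the interesting work lies.

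First I would dispatch the inner step. At granular level $i$, Algorithm 1 minimizes the regularized focal loss $L_{\rm NN}$ of Equation (\ref{equ2_L_NN}) over $(\mathbf w_1^{(i)},\mathbf b_1^{(i)},\mathbf w_2^{(i)},\mathbf b_2^{(i)})$ using the Adam updates of Equation (\ref{equ3_Adam}) with the prescribed hyperparameters $\rho_1=0.9,\rho_2=0.999,\tau=10^{-8}$. Because $L_{\rm NN}$ is smooth and bounded below in the network parameters, Adam produces iterates that reach a stationary point within a finite epoch budget; hence the returned $(\overline{\mathbf w}_1^{(i)},\overline{\mathbf b}_1^{(i)},\overline{\mathbf w}_2^{(i)},\overline{\mathbf b}_2^{(i)})$, together with the subsequent discretization by $k$-means++ on a finite instance set and the closed-form expressions for $p_e^{(i)}$, $\mathbf\Lambda^{(i)}$, $(\alpha_i,\beta_i)$ and the cost terms, are all obtained in finite time.

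Next I would handle the outer loop. The counter $i$ is incremented by exactly one per pass, so showing termination amounts to bounding the number of passes. I would rely on two structural facts. The first is monotonicity of the boundary region: by Definitions \ref{def3_alpha_beta}--\ref{def4_gamma} and the sequential property $\beta_1\le\cdots\le\beta_{t-1}<\gamma<\alpha_{t-1}\le\cdots\le\alpha_1$, the thresholds at level $i+1$ tighten toward $\gamma$ from both sides relative to those at level $i$, so an instance placed in ${\rm P}^{(i)}_{\rm L}$ or ${\rm N}^{(i)}_{\rm L}$ cannot return to the boundary, and BND can only weakly shrink as $i$ grows. The second is the two-way fallback at level $t$: when $i=t$, the algorithm branches to the two-way rule with threshold $\gamma$, partitioning every remaining instance into ${\rm P}^{(t)}_{\rm L}$ or ${\rm N}^{(t)}_{\rm L}$ and forcing ${\rm B}^{(t)}_{\rm L}=\varnothing$. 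Combined with Definition \ref{def5_regions}, this forces ${\rm BND}=\varnothing$ and triggers the exit condition $|{\rm BND}|=\varnothing$ at iteration $t$ at the latest. As $t$ is the preset maximum (set to $10$ in Algorithm 1), the loop runs at most $t$ times.

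The main obstacle I expect is the bookkeeping that links the SFNN-side misclassification set ${\rm M}^{(i)}_{\rm N}$ with the STWD-side boundary ${\rm B}^{(i)}_{\rm L}$ across levels. Strictly, STWD at level $i+1$ operates on the equivalence classes $[x]_e^{(i+1)}$ restricted to instances that were misclassified by SFNN after a new hidden node is added, not on a literal subset of ${\rm B}^{(i)}_{\rm L}$; so the ``shrinking BND'' argument must be phrased as a statement about the residual universe processed at each level rather than pure set inclusion of the three-way regions. Once this restriction is stated carefully and combined with the hard cap $i\le t$ and the two-way decision at the final level, finite termination, and hence convergence of STWD-SFNN, follows.
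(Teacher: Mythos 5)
Your proof is correct, but it takes a genuinely different route from the paper's. The paper argues by contradiction with a probabilistic flavour: it supposes some set of $u \ge 1$ instances stays in BND for infinitely many turns, notes that an instance lands in BND exactly when its conditional probability falls in the window of width $\alpha_{t}-\beta_{t}$, so the chance that all $u$ instances do so is $(\alpha_{t}-\beta_{t})^{u}$; for this to happen in every turn it asserts ${\rm E}(\alpha_{t}-\beta_{t})^{u}=1$, hence $\alpha_{t}=1$ and $\beta_{t}=0$, contradicting $0<\beta_{t}<\alpha_{t}<1$. You instead give a deterministic, structural termination argument: finite work per granular level, a counter incremented once per pass, and --- decisively --- the two-way fallback with threshold $\gamma$ at the preset maximum level $t$, which empties $ {\rm B}^{(t)}_{\rm L} $ and triggers the exit condition, so the loop runs at most $t$ times. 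What each buys: your argument is elementary and airtight for Algorithm 1 as written, since it leans only on the hard cap $i\le t$ and the final two-way decision, and your closing caveat about the residual universe versus literal set inclusion of the three regions is exactly the right care to take. The paper's argument, were it rigorous, would say something stronger --- that instances escape the boundary without invoking the cap at all --- but it rests on treating $\alpha_{t}-\beta_{t}$ as a probability of the boundary event and on an expectation step that is not derived from any stated distributional hypothesis; your route sidesteps that entirely. The one loose end on your side is the appeal to Adam ``reaching a stationary point'': for termination you only need a finite epoch budget, not convergence of the optimizer, so that claim can simply be dropped.
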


\begin{proof}
    (Proof by Contradiction) Suppose STWD-SFNN completes the learning of instances in infinite turns, it means that STWD-SFNN has $ u$ $(u \ge 1) $ instances which always are classified into BND. STWD-SFNN needs to compare probability $ p_{e}^{(t)} $ of the $ e $-th $ (1 \le e \le u) $ instance with thresholds $ (\alpha_{t}, \beta_{t}) $ at the $ t $-th($ t \ge 1 $) level. If $ p_{e}^{(t)} $ is larger than $ \alpha_{t} $, then the $ e $-th $ (1 \le e \le u) $ instance is classified into POS; if $ p_{e}^{(t)} $ is less than $ \beta_{t} $, then it is classified into NEG. If $ p_{e}^{(t)} $ is in $ [\beta_{t}, \alpha_{t}] $, then it is classified into BND. The probability of the $ e $-th instance falls into the range $ \alpha_{t} - \beta_{t}$, and hence the probability for $ u $ instances is $ (\alpha_{t} - \beta_{t})^{u}$. If $ u $ instances are always classified into BND, it is easy to know that $ {\rm E}(\alpha_{t} - \beta_{t})^{u} = 1 $. Thus, we have $ \alpha_{t} = 1 $ and $ \beta_{t} = 0 $ in infinite turns, which contradicts the assumption that $ 0 < \beta_{t} < \alpha_{t} < 1$, where $ t \ge 1$. In short, STWD-SFNN is convergent.
\end{proof}

		\section{ Experiments }
{To verify the classification performance of STWD-SFNN, we selected ten classification models: $m_{1}$-SFNN-R, $m_{2}$-SFNN-R, $m_{3}$-SFNN-R \cite{Belciug2021ESWA}, Grid search optimization of SFNN (GS-SFNN) \cite{Pontes2016Neurocomputing}, Particle swarm optimization of SFNN (PSO-SFNN) \cite{Nistor2022ESWA},  TWD-SFNN-R \cite{Cheng2021INS}, STWD-SFNN-NK, support vector classification (SVC) \cite{Liu2020TPAMI}, random forest (RF) \cite{Katuwal2020PR}, and k-nearest neighbors (KNN) \cite{Geler2020ESWA}. These models are used as the competitive models to evaluate the performance of our algorithm from the following six research questions (RQs).}
		
		RQ 1: How does STWD-SFNN perform compared with static topology discovery without STWD? 
		
		RQ 2: How does STWD-SFNN perform compared with choosing a dynamic tuning model without STWD?
		
		RQ 3: How does STWD-SFNN perform compared with employing TWD to find network topology?
		
		RQ 4: How does STWD-SFNN perform compared with the model without $ k $-means++ discretization?
		
		RQ 5: How does STWD-SFNN perform compared with other classification models?
		
		RQ 6: How is the process costs of STWD-SFNN related to network topology?
		
	    To handle RQ 1, we utilize the SFNN determined by the empirical formula models in Section \ref{exp_sfnn}. In response to RQ 2, we employ grid search and particle swarm optimization to dynamically select the network topology in Section \ref{exp_gs_pso}. To address RQ 3, we utilize TWD-SFNN in Sections \ref{exp_gs_pso} and \ref{exp_twd_5folds}. To answer RQ 4, we adopt STWD-SFNN without $ k $-means++ as a competitive model in Section \ref{exp_gs_pso}. To answer RQ 5, we adopt SVC, RF, and KNN to verify the generalization ability of STWD-SFNN in section \ref{exp_svc}. To handle RQ 6, we analyze the STWD-SFNN in Section \ref{exp_costs}.


		\subsection{Benchmark datasets and evaluation criteria }
 { To report the performance of STWD-SFNN, we selected 15 commonly used datasets which can be downloaded from https://archive.ics.uci.edu/ml/index.php.  The characteristics of these datasets are shown in 	Table \ref{tab_uci_data}. All experiments were performed on a computer equipped with an Intel (R) Core (TM) i7-10700K CPU, 16 GB RAM, and a 64-bit version of Windows 10. All models were programmed in Matlab, which can be downloaded from https://github.com/wuc567/Machine-learning/tree/main/STWD-SFNN.}

		\begin{table*}[hb]
			\vspace{0em}
			\setlength{\abovecaptionskip}{0cm}    
			\setlength{\belowcaptionskip}{-0.2cm} 
			\small
			\captionsetup{font={small}}
			\newcommand{\tabincell}[2]{\begin{tabular}{@{}#1@{}}#2\end{tabular}}
			\caption{ Benchmark datasets }
			\centering{
				\begin{tabular}{cccc}
					\hline
					Name of datasets     & Abbreviation   & \tabincell{c}{Number of instances}  & \tabincell{c}{Number of  attributes}      \\ \hline
					Online news popularity   & ONP           & 39,797            & 61     \\
					QSAR oral toxicity       & QSAR          & 8,992             & 1,024   \\
					Online shoppers purchasing intention & OSP   & 12,330        & 18     \\
					Electrical grid stability simulated  & EGSS  & 10,000        & 14     \\
					Skin segmentation                    & SE    & 245,057       & 4      \\
					HTRU                                 & HTRU  & 17,898        & 9      \\
					Default of credit card clients       & DCC   & 30,000        & 24     \\ 
					Epileptic seizure recognition        & ESR   & 11,500        & 179   \\
					Bank marketing                       & BM    & 45,211        & 17    \\
					Polish companies’ bankruptcy         & PCB   & 10,503        & 64   \\
					Shill bidding                        & SB    & 6,321         & 13    \\
					Estimation of obesity levels         & EOL   & 2,111         & 17  \\
					Occupancy detection                  & OD    & 20,560        & 7  \\
					Room occupancy estimation            & ROE   & 10,129        & 16  \\
					Sepsis survival minimal clinical records   & SSMCR        & 110,204       & 4  \\
					\hline
				\end{tabular}}
			\vspace{-1em}
		\label{tab_uci_data}
		\end{table*}
		We adopted the weighted-f1 metric \cite{Cheng2021INS,Rashid2022ITJ} to evaluate the performance, since weighted-f1 handles the problem of unbalanced classification data and considers the importance of different categories in classification data. Weighted-f1 can be defined as follows.  
		
		\begin{spacing}{0.5} 
			\begin{normalsize}
				\begin{gather}
				{\rm weighted-f1} = \frac{\sum_{i=1}^{2} \mid d_{i}\mid \ast f_{i} }{\mid d \mid} 	\\
				{f_{i}} = \frac{ 2 \times Precision_{i} \times Recall_{i} }{ Precision _{i} + Recall_{i} } 	 \\
				Precision _{i}  = \frac{ TP_{i} }{ TP_i+FP_i } \\
				Recall_{i}  = \frac{ TP_i }{ TP_i+FN_i }  
				\end{gather}
			\end{normalsize}
		\end{spacing}
  	
	\noindent where $ \mid d_{i} \mid $ and $ \mid d \mid $ represent the numbers of $ i $-th label and all label, respectively; $ f_{i} $ represents the f1-score of the $ i $-th label; $ Precision _{i} $ and $ Recall_{i} $ are the precision rate and the recall rate of $ i $-th label, respectively; $ TP_i $ and $ FP_i $ are the numbers of correct and incorrect predictions for instances with positive labels, respectively; and $ FN_i $ is the number of incorrect predictions for instances with negative labels. The greater the value of weighted-f1, the better the performance of the model.
		
	In addition, we employ other evaluation criteria, including the accuracy, training time, test time, number of hidden layer nodes, receiver operating characteristic (ROC) curve, and area under the curve of ROC (AUC), where the training time is the learning time of the model under a specified set of parameters. The higher the accuracy and AUC, the shorter the training time,  the shorter the test time, the lower number of hidden layer nodes, and the closer the curve to the upper left corner, the better the performance of the model.

		\subsection{Baseline models}
{	To validate the performance of STWD-SFNN, we compared it with ten competitive models, which are listed as follows.}

\begin{enumerate}[(1)]    

\item SFNN-R \cite{Belciug2021ESWA}: The network parameters of SFNN are initialized by random numbers, which cannot guarantee the repeatability of the experimental results.  To overcome this shortage, we set the seed of the random number generator to rng(0), and initialize the hyperparameters in a fixed random number manner to guarantee the repeatability of learning results. We name SFNN with the fixed hyperparameter as SFNN-R. SFNN-R adopts pre-determined hidden layer nodes and learns the optimal network parameters through focal loss and Adam optimizer, thus generating a nonlinear decision boundary. { Moreover, for SFNN-R, the numbers of hidden nodes determined by the empirical formula methods are $ m_{1} $-SFNN-R: $ \sqrt{m+n}+\alpha,(\alpha \in (1,10)) $, $ m_{2} $-SFNN-R: $ log_2 m $, and $ m_{3} $-SFNN-R: $ \sqrt{m \times n} $, where $ m $ and $ n $ are the numbers of nodes in the input and output layers, respectively.}
												         
			\item GS-SFNN \cite{Pontes2016Neurocomputing}: For the hyperparameters of the number of hidden layer nodes in SFNN, we select a common hyperparameters method, i.e., grid search, to find the optimal number of hidden layer nodes of SFNN.
			
			\item PSO-SFNN \cite{Nistor2022ESWA}: To tune the hyperparameters of SFNN, we choose particle swarm optimization, which utilizes collaboration and information sharing among individuals in the group to optimize SFNN.
			
\item TWD-SFNN-R \cite{Cheng2021INS}: The network parameters of TWD-SFNN, the thresholds of three-way decisions, and the cross-validation of the dataset are initialized by random numbers, which cannot guarantee the repeatability of the experimental results. To overcome this shortage, we set the seed of the random number generator to rng(0), and initialize the hyperparameters in a fixed random number manner to guarantee the repeatability of learning results. We name TWD-SFNN with the fixed hyperparameter as TWD-SFNN-R which adds a delayed decision region to determine the number of hidden layer nodes, and aims to minimize the decision risk, thereby improving the performance of the model. 
   
			
\item STWD-SFNN-NK: To verify the performance of the STWD-SFNN model by adopting discretization techniques, we compare it with STWD-SFNN without $ k $-means++ and name it STWD-SFNN-NK. The parameters affecting STWD-SFNN-NK are consistent with STWD-SFNN.

			\item SVC \cite{Liu2020TPAMI}: To obtain the optimal separated hyperplane, SVC converts the maximization problem to a convex quadratic programming optimization problem, and applies the kernel function to learn the nonlinear classifier.
			
			\item RF \cite{Katuwal2020PR}: To reduce the correlation between decision trees, for  the classification problem, RF randomizes features and instances and adopts a voting mechanism to predict the label for each instance.  
			
			\item KNN \cite{Geler2020ESWA}: To learn the similarities between instances, KNN measures the distances between the different eigenvalues and selects the most frequent categories from the nearest $ K $ instance as the decision category of the model.
		\end{enumerate}

\subsection{ Preparation of the experiment }
For the hyperparameters of STWD-SFNN, we set the seed of the random number generator to rng(0) to guarantee the repeatability of the experimental results. The selection of process cost is explained as follows. We know that the unit test cost $ Cost_{{\rm PPT}_i} $ and the unit delay cost $ Cost_{{\rm PPD}_i} $ at each granular level are part of the input of STWD-SFNN. Without loss of generality, we randomly generate two vectors with dimension $ t $, one is $[Cost_{{\rm PPT}_1}, Cost_{{\rm PPT}_2}, \cdots, Cost_{{\rm PPT}_t}] $, and the other is $[Cost_{{\rm PPD}_1}, Cost_{{\rm PPD}_2}, \cdots, Cost_{{\rm PPD}_t}] $. In addition, the selection of the result cost matrix $ \mathbf \Lambda^{(i)} (1 \le i \le t) $ is illustrated as follows. Firstly, $ \mathbf \Lambda^{(i)} $ of the $ i $-th level is randomly initialized, which needs to satisfy the conditions $ 0 \le \lambda_{PP}^{(i)} < \lambda_{BP}^{(i)} < \lambda_{NP}^{(i)} < 1 $, $ 0 \le \lambda_{NN}^{(i)} < \lambda_{BN}^{(i)} < \lambda_{PN}^{(i)} < 1 $ , and $ (\lambda_{BN}^{(i)} - \lambda_{NN}^{(i)}) \times (\lambda_{BP}^{(i)} - \lambda_{PP}^{(i)}) < (\lambda_{PN}^{(i)} - \lambda_{BN}^{(i)}) \times (\lambda_{NP}^{(i)} - \lambda_{BP}^{(i)}) $. Next, the thresholds $ (\alpha_{i}, \beta_{i} )(1 \le i \le t) $ of the previous $ t $-1 levels and the threshold $ \gamma $ of the $ t $-th level are defined by Definitions \ref{def3_alpha_beta} and \ref{def4_gamma}, respectively. Finally, for the ($ i $+1)-th level, we retain $ \mathbf \Lambda^{(i+1)} $ and  $ (\alpha_{i+1}, \beta_{i+1} ) $ or $ \gamma $ if and only if the thresholds satisfy $ \beta_{i} \le \beta_{i+1} < \gamma \le \alpha_{i+1} \le \alpha_{i} $; otherwise, we iterate this process until $ \mathbf \Lambda^{(i+1)} $ is found. It should be noted that whether there is a better hyperparameter selection method to obtain the result cost matrix is worth investigating in the future.

		The hyperparameters of the competitive models are introduced as follows. For the network models, such as SFNN-R, GS-SFNN, PSO-SFNN, TWD-SFNN-R, and STWD-SFNN-NK, the activation function selects ReLU family (e.g., ReLU, LReLU, and SELU) or tanh family (e.g., tanh, sigmoid, and swish), and the network parameters subject to uniform distribution or normal distribution.  Except for SFNN-R, the number of hidden layer nodes in the rest network model increases from one to ten. Meanwhile, for PSO-SFNN, $ max\_iterations \in (10,100) $ with step size ten, and $ max\_particles \in (1,10) $ with step size one. On the other hand, for SVC, polynomial, RBF, and sigmoid kernel functions are selected,$ \gamma \in (-4,4) $ with step size one, and $ C \in (-4,4) $ with step size one. For RF, $ max\_trees \in (100,1000) $ with step size 100, $ max\_depth \in \{ q, log_2 q, \sqrt{q}, \phi q  \} $, where $ q $ is the number of features, $ \phi \in (0,1) $. For KNN, Euclidean distance and angle cosine are selected to characterize the similarity among instances, $ K \in (10,100) $ with step size ten.	
			
		It is worth noting that the dataset is divided into training set, validation set, and test set according to the ratio of 8:1:1, and the 10-fold cross-validation and grid search are adopted to find the optimal hyperparameters of each model. In addition, we utilize the Focal loss and Adam optimizer in Section \ref{focal_adam} to train the network model, where the regularization factor $ \lambda_{\rm SFNN} $, the learning rate, and batch size of the optimizer are uniformly set to 0.1, 512, and 0.1, respectively.

	\subsection{ Experimental results and analysis}	
	
	    \subsubsection{Comparison with static models} \label{exp_sfnn}
	    To validate the effectiveness of STWD-SFNN in optimizing the network topology, we select the static models SFNN-R as the competitive models, which adopt three empirical formulas to calculate the number of hidden layer nodes. Fig. \ref{fig_sfnn} shows the comparison of ROC curves of the SFNN-R models and  STWD-SFNN. Table \ref{tab_sfnn} reports the comparison of different evaluation criteria of those models. The results give rise to the following observations.
	    
	    \begin{figure*}[hp]
	    	\centering
	    	\subfigure[ONP]{
	    		\includegraphics[width=2.1in]{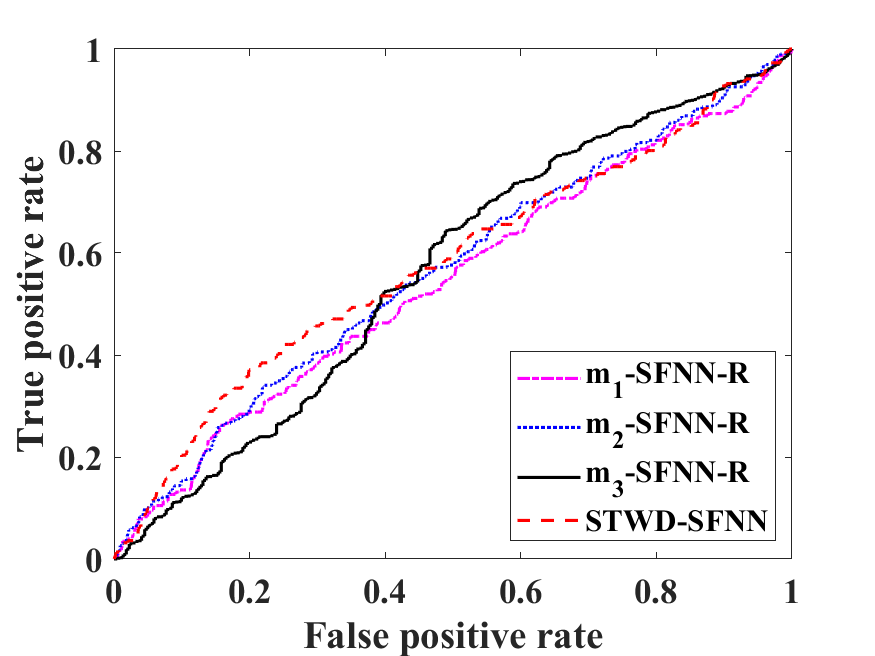}}
	    	\hfill
	    	\centering
	    	\subfigure[QSAR]{
	    		\includegraphics[width=2.1in]{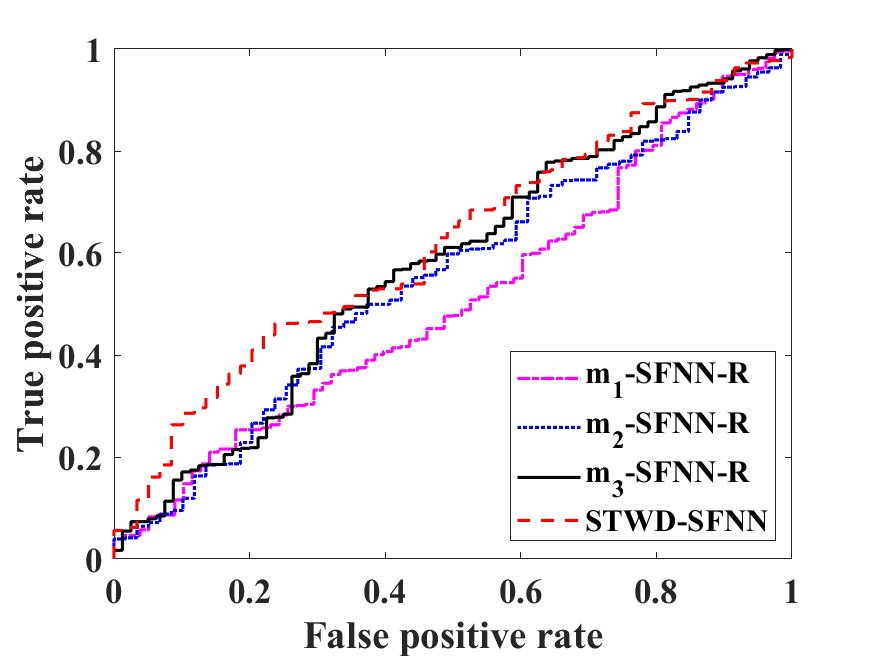}}
	    	\hfill
	    	\centering
	    	\subfigure[OSP]{
	    		\includegraphics[width=2.1in]{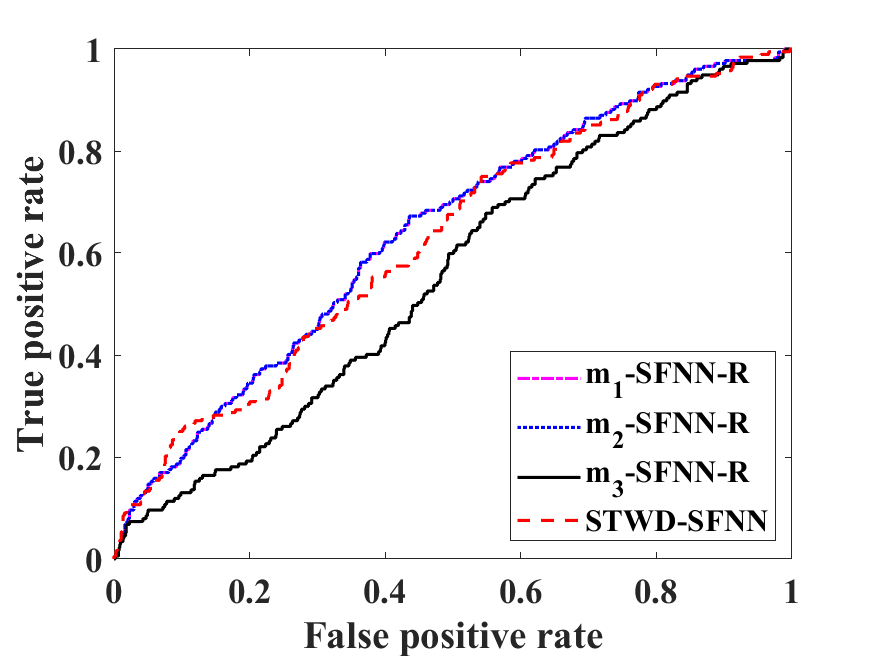}}
	    	\hfill
	    	\centering
	    	\subfigure[EGSS]{
	    		\includegraphics[width=2.1in]{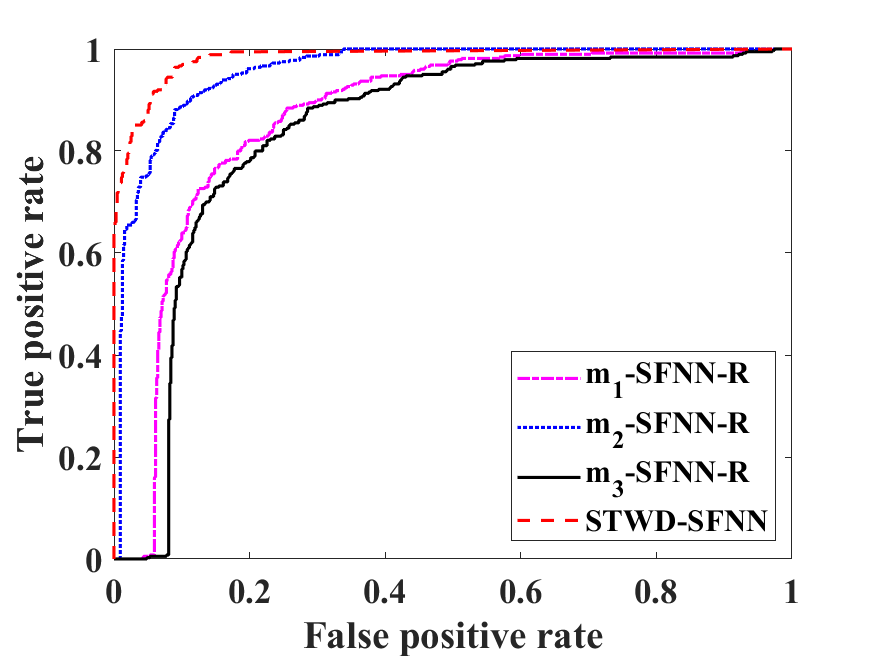}}
	    	\hfill
	    	\centering
	    	\subfigure[SE]{
	    		\includegraphics[width=2.1in]{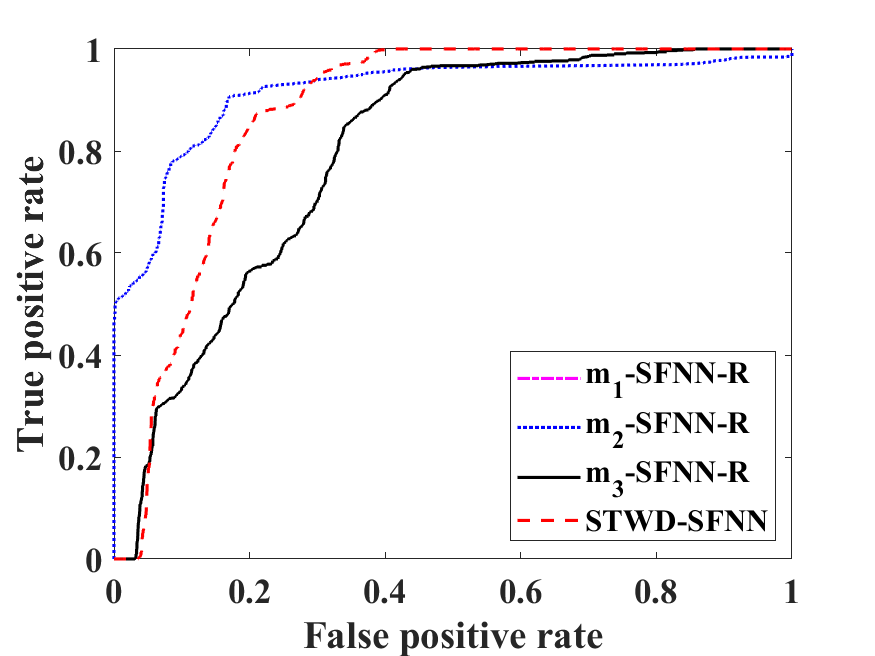}}
	    	\hfill
	    	\centering
	    	\subfigure[HTRU]{
	    		\includegraphics[width=2.1in]{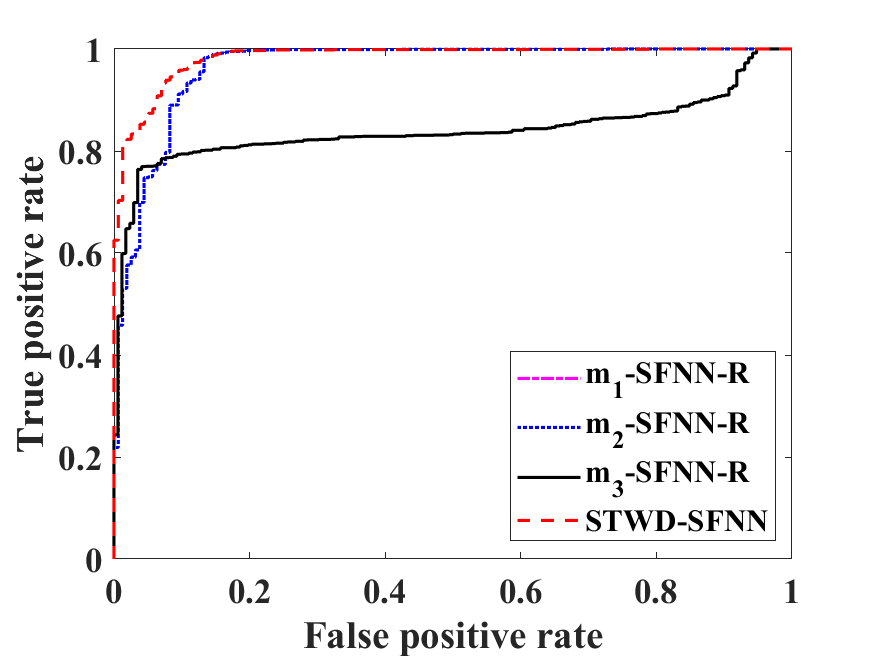}}
	    	\hfill
	    	\centering
	    	\subfigure[DCC]{
	    		\includegraphics[width=2.1in]{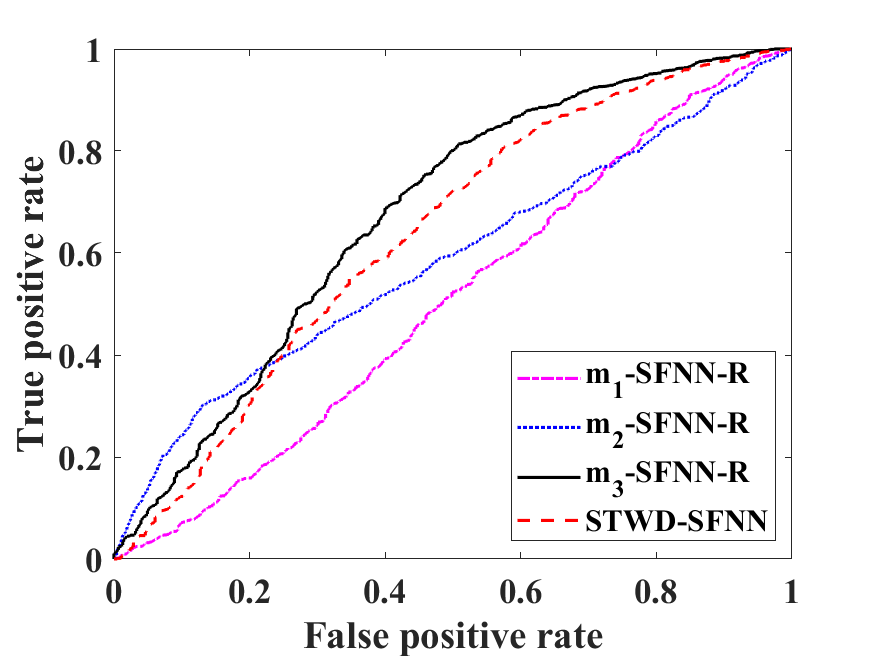}}
	    	\hfill
	    	\centering
	    	\subfigure[ESR]{
	    		\includegraphics[width=2.1in]{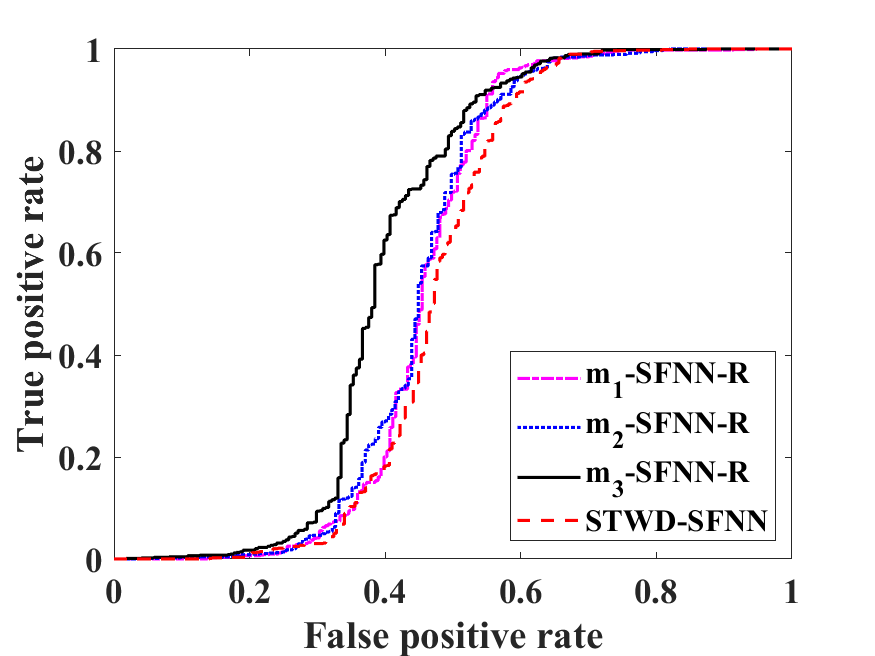}}	
	    	\hfill
	    	\centering
	    	\subfigure[BM]{
	    		\includegraphics[width=2.1in]{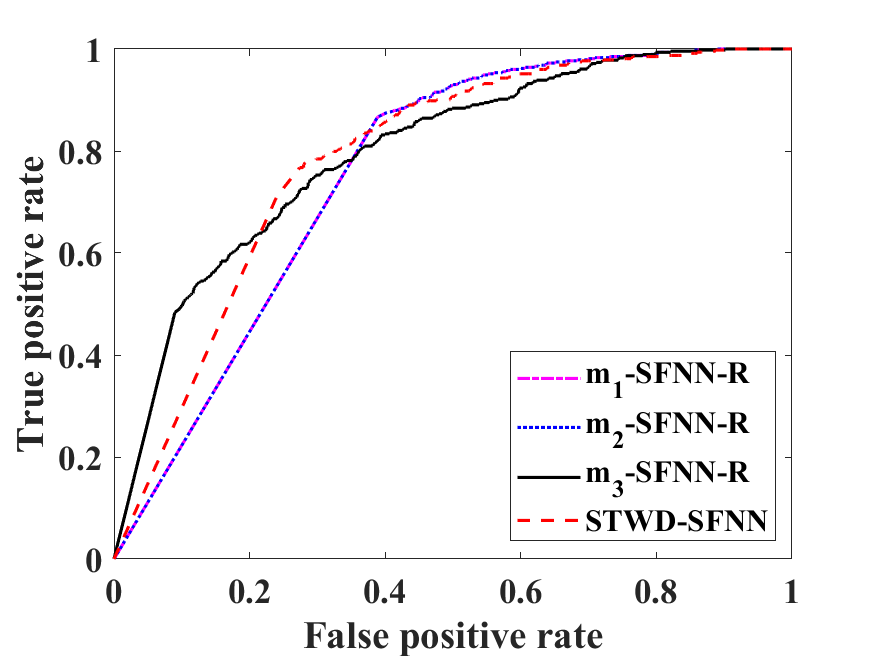}}
	    	\hfill
	    	\centering
	    	\subfigure[PCB]{
	    		\includegraphics[width=2.1in]{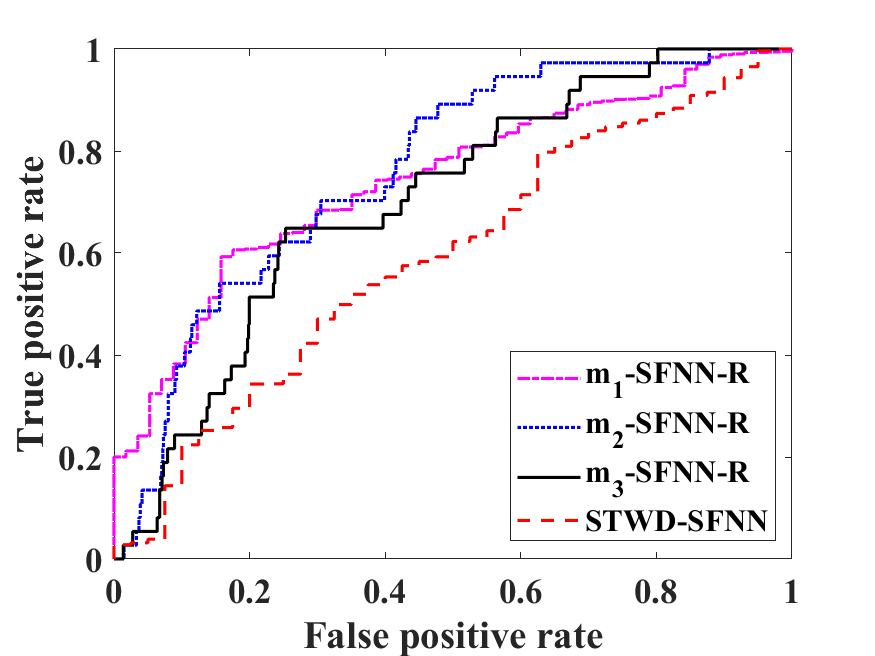}}
	    	\hfill	
	    	\centering
	    	\subfigure[SB]{
	    		\includegraphics[width=2.1in]{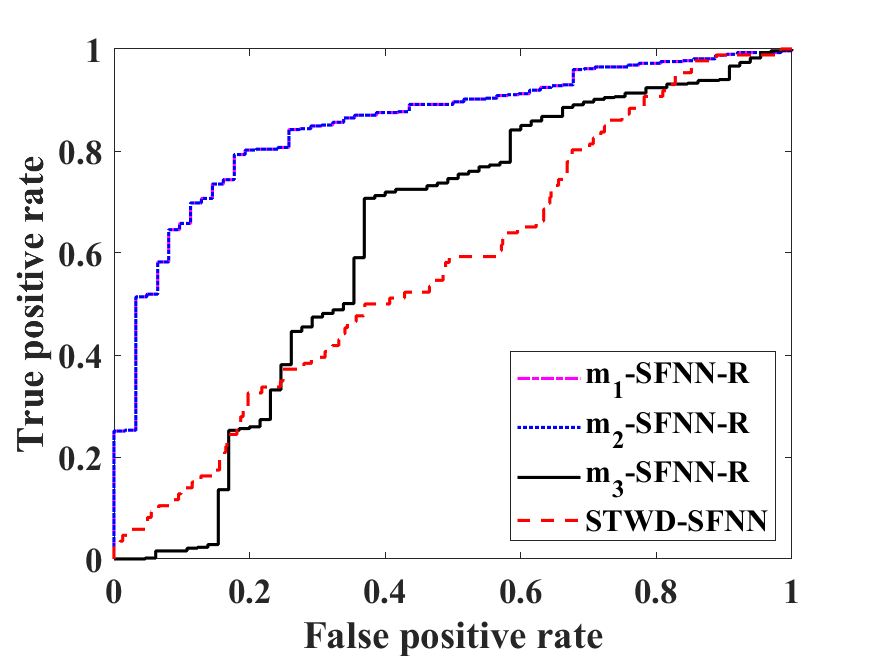}}
	    	\hfill	
	    	\centering
	    	\subfigure[EOL]{
	    		\includegraphics[width=2.1in]{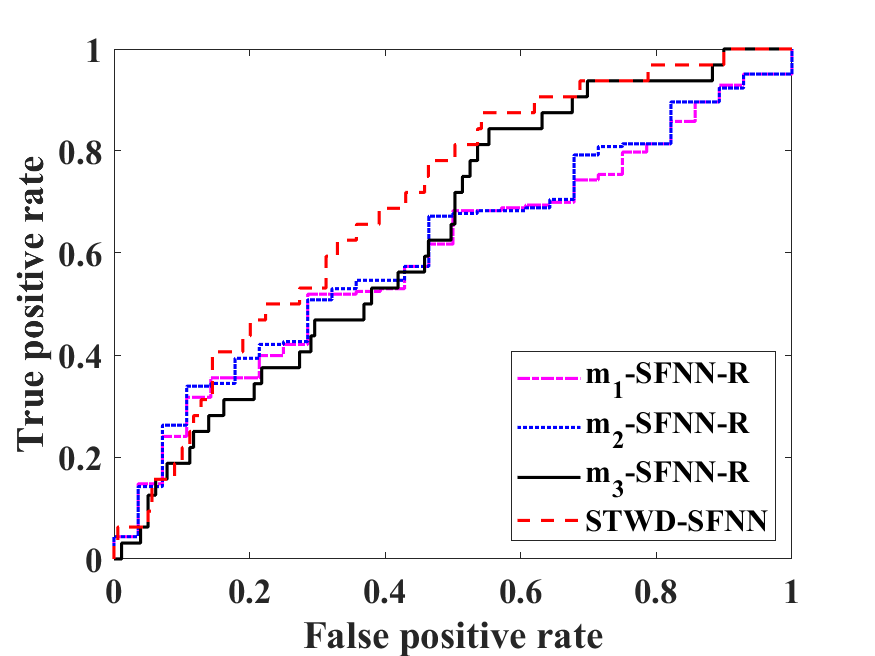}}
	    	\hfill	
	    	\centering	
	    	\subfigure[OD]{
	    		\includegraphics[width=2.1in]{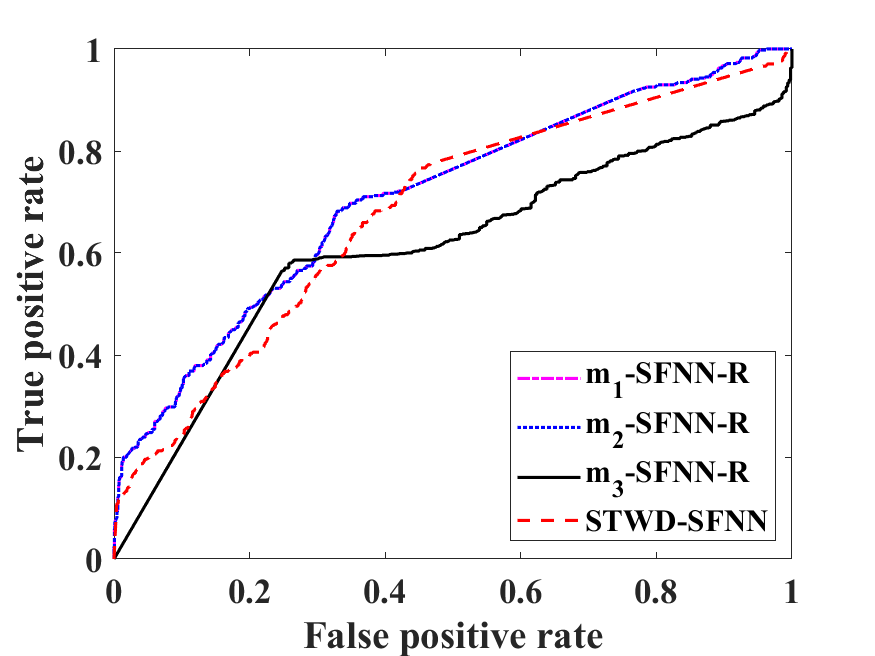}}
	    	\hfill	
	    	\centering
	    	\subfigure[ROE]{
	    		\includegraphics[width=2.1in]{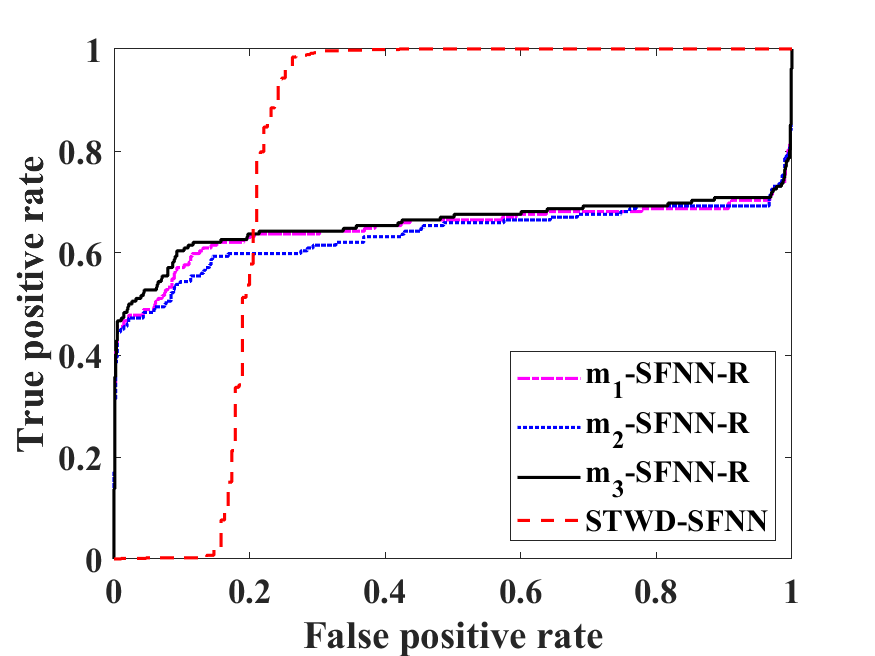}}
	    	\hfill	
	    	\centering
	    	\subfigure[SSMCR]{
	    		\includegraphics[width=2.1in]{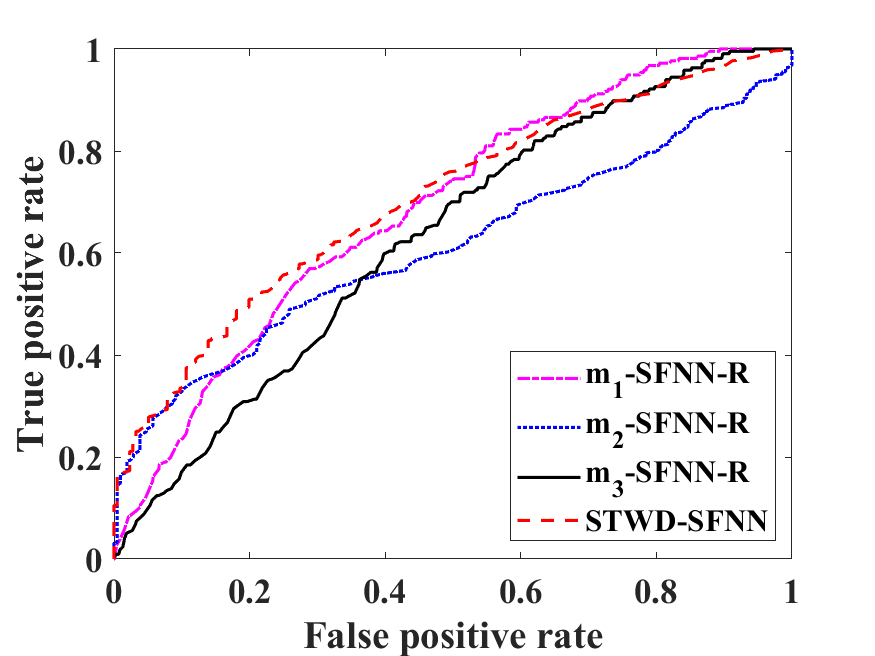}}
	    	\hfill	
	    	\small
	\caption{Comparison of ROC curves of static models and STWD-SFNN } 
	    	\label{fig_sfnn}    		
	    \end{figure*}
	    
		\begin{table*}[hp]
			\setlength{\abovecaptionskip}{0cm}    
			\setlength{\belowcaptionskip}{0cm}    
			\small
			\captionsetup{font={small}}
			\newcommand{\tabincell}[2]{\begin{tabular}{@{}#1@{}}#2\end{tabular}}
			\caption{ Comparison of static models and STWD-SFNN }
			\centering
			\setlength{\tabcolsep}{1mm}{
				\begin{tabular}{cccccccc}
					\hline
					Dataset   &Model  &\tabincell{c}{Accuracy \\ (\%)}   &\tabincell{c}{Weighted- \\f1 (\%)}  &\tabincell{c}{AUC \\ (\%)}   &\tabincell{c}{Training \\ time(s)}  &\tabincell{c}{Test time (s)} &\tabincell{c}{Nodes}\\ \hline
						
					\multirow{4}{*}{ONP}
					&$m_{1}$-SFNN-R & 85.43$\pm$0.37 &87.42$\pm$0.29 &54.46 &41.76$\pm$0.38 &0.009$\pm$0.000 &8 \\
					&$m_{2}$-SFNN-R & 89.09$\pm$0.21 &\textbf{89.29$\pm$0.23} &56.43 &34.77$\pm$0.62 &0.008$\pm$0.001 &6 \\
					&$m_{3}$-SFNN-R & 86.40$\pm$0.30 &88.02$\pm$0.27 &56.50 &32.04$\pm$0.32 &0.009$\pm$0.005 &11 \\
					&STWD-SFNN &\textbf{94.41$\pm$0.22} &85.87$\pm$0.26 &\textbf{57.87} &\textbf{15.76$\pm$0.76} &\textbf{0.003$\pm$0.000} &\textbf{1.00$\pm$0.00} \\ \hline
						
					\multirow{4}{*}{QSAR}
					&$m_{1}$-SFNN-R & 62.67$\pm$0.83 &71.00$\pm$0.78 &50.89 &22.95$\pm$0.50 &0.015$\pm$0.007 &33 \\
					&$m_{2}$-SFNN-R & 76.57$\pm$1.11 &80.29$\pm$1.15 &55.04 &16.61$\pm$0.24 &0.013$\pm$0.005 &10 \\
					&$m_{3}$-SFNN-R &\textbf{89.57$\pm$0.79} &\textbf{87.17$\pm$1.04} &57.63 &\textbf{11.06$\pm$0.15} &0.010$\pm$0.004 &46 \\
					&STWD-SFNN &\underline{78.26$\pm$1.11} &\underline{81.21$\pm$1.13} &\textbf{61.61} &\underline{12.43$\pm$0.15} &\textbf{0.008$\pm$0.001} &\textbf{2.00$\pm$0.00} \\ \hline
						
					\multirow{4}{*}{OSP}
					&$m_{1}$-SFNN-R & 76.37$\pm$0.79 &74.08$\pm$1.00 &\textbf{63.89} &5.58$\pm$0.22 &0.005$\pm$0.003 &5 \\
					&$m_{2}$-SFNN-R & 76.37$\pm$0.79 &74.08$\pm$1.00 &\textbf{63.89} &\textbf{5.44$\pm$0.26} &0.004$\pm$0.002 &5 \\
					&$m_{3}$-SFNN-R &\textbf{78.41$\pm$0.87} &\textbf{75.33$\pm$1.06} &55.58 &5.73$\pm$0.23 &0.004$\pm$0.002 &6 \\
					&STWD-SFNN &68.43$\pm$0.66 &70.01$\pm$0.81 &\underline{62.52} &\underline{5.55$\pm$0.51} &\textbf{0.002$\pm$0.000} &\textbf{2.00$\pm$0.00} \\ \hline
						
					\multirow{4}{*}{EGSS}
					&$m_{1}$-SFNN-R & 76.38$\pm$0.90 &76.84$\pm$0.86 &86.59 &2.08$\pm$0.24 &0.003$\pm$0.002 &5 \\
					&$m_{2}$-SFNN-R & 84.25$\pm$0.82 &84.56$\pm$0.78 &95.87 &\textbf{1.90$\pm$0.26} &\textbf{0.002$\pm$0.001} &4 \\
					&$m_{3}$-SFNN-R &75.27$\pm$0.80 &75.75$\pm$0.77 &84.03 &2.70$\pm$0.18 &\textbf{0.002$\pm$0.001} &6 \\
					&STWD-SFNN &\textbf{85.63$\pm$0.70} &\textbf{85.91$\pm$0.67} &\textbf{98.30} &7.09$\pm$0.37 &\underline{0.003$\pm$0.000} &\textbf{2.00$\pm$0.00} \\ \hline
						
					\multirow{4}{*}{SE}
					&$m_{1}$-SFNN-R & 79.08$\pm$0.40 &75.18$\pm$0.50 &79.54 &77.22$\pm$1.60 &0.018$\pm$0.007 &3 \\
					&$m_{2}$-SFNN-R & 78.81$\pm$0.38 &69.76$\pm$0.52 &\textbf{91.65} &49.81$\pm$0.92 &\textbf{0.007$\pm$0.004} &\textbf{2} \\
					&$m_{3}$-SFNN-R &79.08$\pm$0.40 &75.18$\pm$0.50 &79.54 &\textbf{26.60$\pm$0.07} &\textbf{0.007$\pm$0.004} &3 \\
					&STWD-SFNN &\textbf{80.89$\pm$0.29} &\textbf{78.31$\pm$0.36} &\underline{87.14} &76.63$\pm$2.09 &\underline{0.011$\pm$0.000} &\textbf{2.00$\pm$0.00} \\ \hline						
						
					\multirow{4}{*}{HTRU}
					&$m_{1}$-SFNN-R & 90.93$\pm$0.37 &86.72$\pm$0.53 &83.95 &\textbf{7.65$\pm$0.26} &0.004$\pm$0.002 &4 \\
					&$m_{2}$-SFNN-R & 84.95$\pm$0.56 &87.59$\pm$0.40 &96.52 &8.45$\pm$0.17 &\textbf{0.003$\pm$0.000} &3 \\
					&$m_{3}$-SFNN-R &90.93$\pm$0.37 &86.72$\pm$0.53 &83.95 &9.04$\pm$0.19 &0.004$\pm$0.002 &4 \\
					&STWD-SFNN &\textbf{91.80$\pm$0.30} &\textbf{92.72$\pm$0.23} &\textbf{98.34} &10.43$\pm$0.61 &\underline{0.004$\pm$0.000} &\textbf{2.00$\pm$0.00} \\ \hline
						
					\multirow{4}{*}{DCC}
					&$m_{1}$-SFNN-R & 77.95$\pm$0.04 &68.52$\pm$0.57 &50.50 &24.36$\pm$1.46 &0.009$\pm$0.005 &6 \\
					&$m_{2}$-SFNN-R & 74.07$\pm$0.26 &68.31$\pm$0.41 &58.51 &11.77$\pm$0.31 &0.004$\pm$0.002 &5 \\
					&$m_{3}$-SFNN-R &76.77$\pm$0.46 &\textbf{75.62$\pm$0.53} &\textbf{67.49} &10.46$\pm$0.33 &0.004$\pm$0.002 &7 \\
					&STWD-SFNN &\textbf{78.13$\pm$0.40} &\underline{69.70$\pm$0.55} &\underline{63.59} &\textbf{7.89$\pm$0.32} &\textbf{0.002$\pm$0.000} &\textbf{2.00$\pm$0.00} \\ \hline
						
					\multirow{4}{*}{ESR}
					&$m_{1}$-SFNN-R & 84.27$\pm$0.58 &80.19$\pm$0.75 &54.48 &4.97$\pm$0.26 &0.004$\pm$0.002 &14 \\
					&$m_{2}$-SFNN-R & 83.82$\pm$0.86 &79.31$\pm$1.14 &54.99 &8.54$\pm$0.30 &0.006$\pm$0.001 &8 \\
					&$m_{3}$-SFNN-R &83.97$\pm$0.67 &81.41$\pm$0.87 &\textbf{59.85} &11.32$\pm$0.18 &0.011$\pm$0.005 &19 \\
					&STWD-SFNN &\textbf{84.35$\pm$0.47} &\textbf{81.55$\pm$0.55} &54.25 &\textbf{4.29$\pm$0.06} &\textbf{0.003$\pm$0.000} &\textbf{2.00$\pm$0.00} \\ \hline
						
					\multirow{4}{*}{BM}
					&$m_{1}$-SFNN-R & \textbf{88.39$\pm$0.35} &\textbf{83.27$\pm$0.49} &75.83 &32.27$\pm$0.62 &0.006$\pm$0.001 &5 \\
					&$m_{2}$-SFNN-R & \textbf{88.39$\pm$0.35} &\textbf{83.27$\pm$0.49} &75.83 &32.35$\pm$0.23 &0.011$\pm$0.006 &5 \\
					&$m_{3}$-SFNN-R &84.31$\pm$0.33 &81.18$\pm$0.47 &\textbf{79.84} &25.75$\pm$1.73 &0.005$\pm$0.001 &7 \\
					&STWD-SFNN &\underline{87.56$\pm$0.34} &\underline{82.85$\pm$0.48} &\underline{78.80} &\textbf{15.19$\pm$0.36} &\textbf{0.003$\pm$0.001} &\textbf{2.00$\pm$0.00} \\ \hline
						
					\multirow{4}{*}{PCB}
					&$m_{1}$-SFNN-R & 63.45$\pm$1.51 &75.70$\pm$1.15 &74.30 &9.26$\pm$0.24 &0.004$\pm$0.002 &9 \\
					&$m_{2}$-SFNN-R & 74.45$\pm$0.76 &83.50$\pm$0.62 &\textbf{76.13} &7.25$\pm$0.25 &0.003$\pm$0.000 &6 \\
					&$m_{3}$-SFNN-R &62.01$\pm$7.90 &74.24$\pm$5.40 &70.31 &7.65$\pm$0.10 &0.003$\pm$0.001 &12 \\
					&STWD-SFNN &\textbf{97.31$\pm$0.49} &\textbf{96.54$\pm$0.52} &59.23 &\textbf{5.44$\pm$0.06} &\textbf{0.002$\pm$0.000} &\textbf{2.00$\pm$0.00} \\ \hline
						
					\multirow{4}{*}{SB}
					&$m_{1}$-SFNN-R & 73.60$\pm$0.96 &78.45$\pm$0.76 &\textbf{85.41} &1.93$\pm$0.29 &\textbf{0.002$\pm$0.000} &4 \\
					&$m_{2}$-SFNN-R & 73.60$\pm$0.96 &78.45$\pm$0.76 &\textbf{85.41} &\textbf{1.89$\pm$0.22} &0.003$\pm$0.002 &4 \\
					&$m_{3}$-SFNN-R &\textbf{76.06$\pm$1.16} &\textbf{79.27$\pm$1.12} &62.58 &2.03$\pm$0.23 &0.003$\pm$0.002 &5 \\
					&STWD-SFNN &65.80$\pm$0.78 &71.61$\pm$0.90 &57.62 &3.48$\pm$0.18 &\underline{0.003$\pm$0.000} &\textbf{2.90$\pm$0.20} \\ \hline
						
					\multirow{4}{*}{EOL}
					&$m_{1}$-SFNN-R & 74.90$\pm$2.11 &75.06$\pm$2.10 &59.66 &\textbf{0.34$\pm$0.17} &0.001$\pm$0.001 &5 \\
					&$m_{2}$-SFNN-R & 74.85$\pm$2.18 &75.14$\pm$2.16 &60.83 &0.36$\pm$0.17 &\textbf{0.001$\pm$0.000} &4 \\
					&$m_{3}$-SFNN-R &75.37$\pm$2.21 &75.24$\pm$2.09 &63.70 &0.44$\pm$0.19 &\textbf{0.001$\pm$0.000} &6 \\
					&STWD-SFNN &\textbf{83.66$\pm$1.95} &\textbf{81.53$\pm$0.01} &\textbf{69.78} &0.75$\pm$0.07 &\textbf{0.001$\pm$0.000} &\textbf{2.00$\pm$0.00} \\ \hline
						
					\multirow{4}{*}{OD}
					&$m_{1}$-SFNN-R & 72.56$\pm$0.54 &64.89$\pm$0.62 &\textbf{70.95} &10.96$\pm$0.31 &0.005$\pm$0.003 &3 \\
					&$m_{2}$-SFNN-R &72.56$\pm$0.54 &64.89$\pm$0.62 &\textbf{70.95} &10.45$\pm$0.23 &0.004$\pm$0.002 &3 \\
					&$m_{3}$-SFNN-R & 64.11$\pm$0.82 &63.84$\pm$0.72 &60.94 &10.89$\pm$0.23 &0.005$\pm$0.003 &4 \\
					&STWD-SFNN &\textbf{76.33$\pm$0.41} &\textbf{66.71$\pm$0.57} &\underline{67.95} &\textbf{4.81$\pm$0.09} &\textbf{0.002$\pm$0.000} &\textbf{2.00$\pm$0.00} \\ \hline
						
					\multirow{4}{*}{ROE}
					&$m_{1}$-SFNN-R & 80.79$\pm$0.54 &79.80$\pm$0.58 &64.81 &2.01$\pm$0.03 &0.002$\pm$0.000 &5 \\
					&$m_{2}$-SFNN-R & 68.86$\pm$0.77 &70.80$\pm$0.73 &63.48 &\textbf{1.78$\pm$0.01} &\textbf{0.001$\pm$0.000} &4 \\
					&$m_{3}$-SFNN-R &\textbf{85.07$\pm$0.53} &83.11$\pm$0.66 &65.74 &2.04$\pm$0.01 &\textbf{0.001$\pm$0.000} &6 \\
					&STWD-SFNN &\underline{84.35$\pm$0.98} &\textbf{85.00$\pm$0.96} &\textbf{80.19} &2.74$\pm$0.11 &\textbf{0.001$\pm$0.000} &\textbf{2.00$\pm$0.00} \\ \hline
						
					\multirow{4}{*}{SSMCR}
					&$m_{1}$-SFNN-R & \textbf{91.35$\pm$0.10} &\textbf{88.37$\pm$0.15} &68.74 &81.55$\pm$1.25 &\textbf{0.007$\pm$0.000} &3 \\
					&$m_{2}$-SFNN-R & 89.74$\pm$0.09 &87.56$\pm$0.19 &68.27 &233.60$\pm$0.37 &0.022$\pm$0.000 &\textbf{2} \\
					&$m_{3}$-SFNN-R &77.03$\pm$0.21 &80.90$\pm$0.17 &60.28 &385.09$\pm$14.33 &0.048$\pm$0.011 &3 \\
					&STWD-SFNN &88.41$\pm$0.32 &\underline{87.22$\pm$0.37} &\textbf{70.63} &\textbf{76.04$\pm$2.53} &0.029$\pm$0.007 &\textbf{2.00$\pm$0.00} \\ \hline										
				\end{tabular}}
				\label{tab_sfnn}
		\end{table*} 	
	    
	    1. The accuracy, weighted-f1, ROC, and AUC of STWD-SFNN are better than those of SFNN-R. For example, on the EOL dataset, Table \ref{tab_sfnn} reports that the accuracy, weighted-f1, and AUC of STWD-SFNN are 83.66\%, 81.53\%, and 69.78\%, respectively. However, the best performance in the SFNN-R models is $ m_{3} $-SFNN-R whose accuracy, weighted-f1, and AUC are 75.37\%, 75.24\%, and 63.70\%, respectively. Meanwhile, as shown in Fig. \ref{fig_sfnn}, the ROC curve of STWD-SFNN is at the top left of $ m_{3} $-SFNN-R. Similar phenomena can be found in other datasets. The reason is as follows. Compared with SFNN-R which only contains two-way decisions with POS and NEG, STWD-SFNN adds the delayed decision region to store the difficult-to-classify instances, and learns the features of these instances more pertinently in each turn, thereby enhancing the performance of STWD-SFNN. Therefore, the generalization ability of STWD-SFNN is better than that of SFNN-R.
	    
	    2. The number of hidden layer nodes of STWD-SFNN is lower than that of SFNN-R. For example, on the OD dataset, Table \ref{tab_sfnn} reports that the number of hidden layer nodes of STWD-SFNN is two. However, the competitive models perform better in $ m_{1} $-SFNN-R and $ m_{2} $-SFNN-R, with a value of three. Similar phenomena can be found in other datasets. The reason is as follows. Compared with SFNN-R, which statically determines the network topology, the compactness of network topology of STWD-SFNN depends on the number of instances in BND. As the number of difficult-to-classify instances decreases, the number of hidden layer nodes of STWD-SFNN gradually decreases. Therefore, the network topology of STWD-SFNN is better than that of SFNN-R.
	    
	    3. The running time of STWD-SFNN is faster than SFNN-R on some datasets. For example, on the BM dataset, Table \ref{tab_sfnn} reports that the training time of STWD-SFNN is 15.19s, and $ m_{3} $-SFNN-R is the least time-consuming model of SFNN-R with a value of 25.75s. Similar phenomena can be found in other datasets. The reason is as follows. Compared with SFNN-R, which needs to repeatedly learn the features of all instances, STWD-SFNN only learns all instances in the first turn. After the second turn, STWD-SFNN focuses on the features of difficult-to-classify instances by constructing granularity layers and sequential threshold parameters, which takes a relatively short time. Therefore, the efficiency of STWD-SFNN is better than that of SFNN-R.

		\subsubsection{ Comparison with dynamic models} \label{exp_gs_pso}
		To verify the effectiveness of STWD-SFNN in dynamically constructing network topology, we add three new datasets on the basis of previous work \cite{Cheng2021INS} and choose GS-SFNN, PSO-SFNN, TWD-SFNN-R, and STWD-SFNN-NK as the competitive models. Fig. \ref{fig_gs_pso} shows the comparison of ROC curves for dynamic models and STWD-SFNN, and Table \ref{tab_gs_pso} lists the indices for those models. The results give rise to the following observations.
		
		\begin{figure*}[hp]
			\centering
			\subfigure[ONP]{
				\includegraphics[width=2.1in]{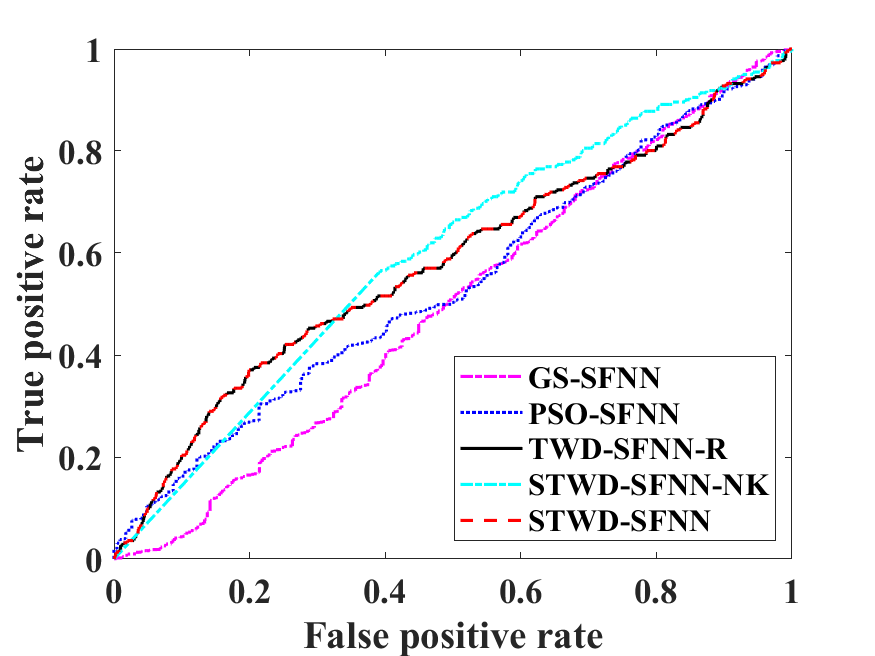}}
			\hfill
			\centering
			\subfigure[QSAR]{
				\includegraphics[width=2.1in]{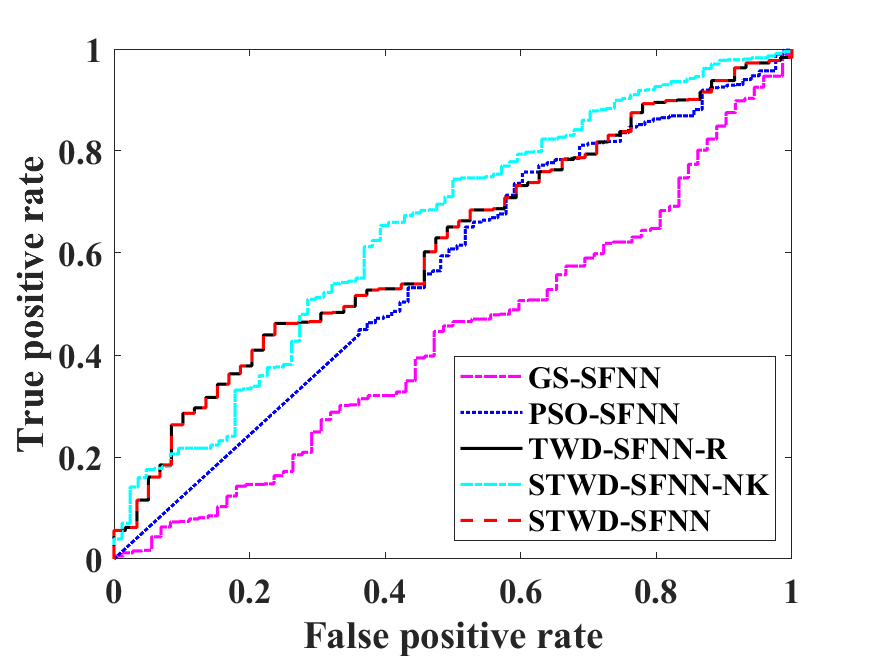}}
			\hfill
			\centering
			\subfigure[OSP]{
				\includegraphics[width=2.1in]{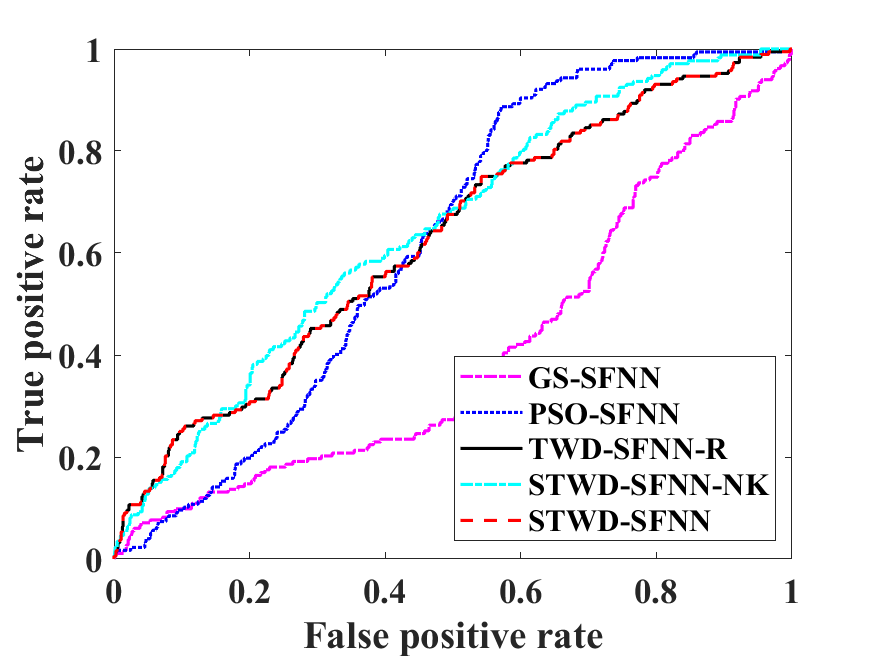}}
			\hfill
			\centering
			\subfigure[EGSS]{
				\includegraphics[width=2.1in]{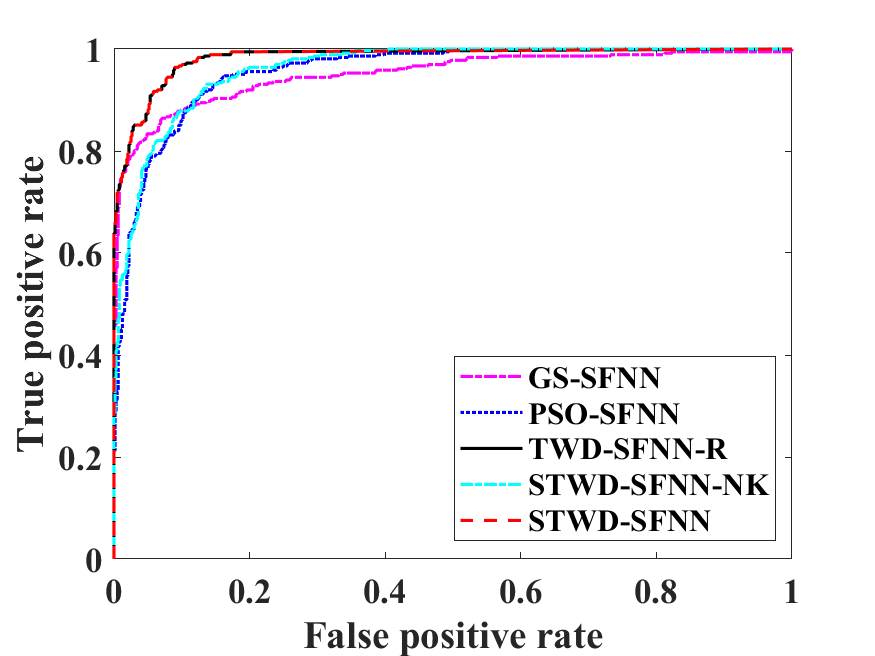}}
			\hfill
			\centering
			\subfigure[SE]{
				\includegraphics[width=2.1in]{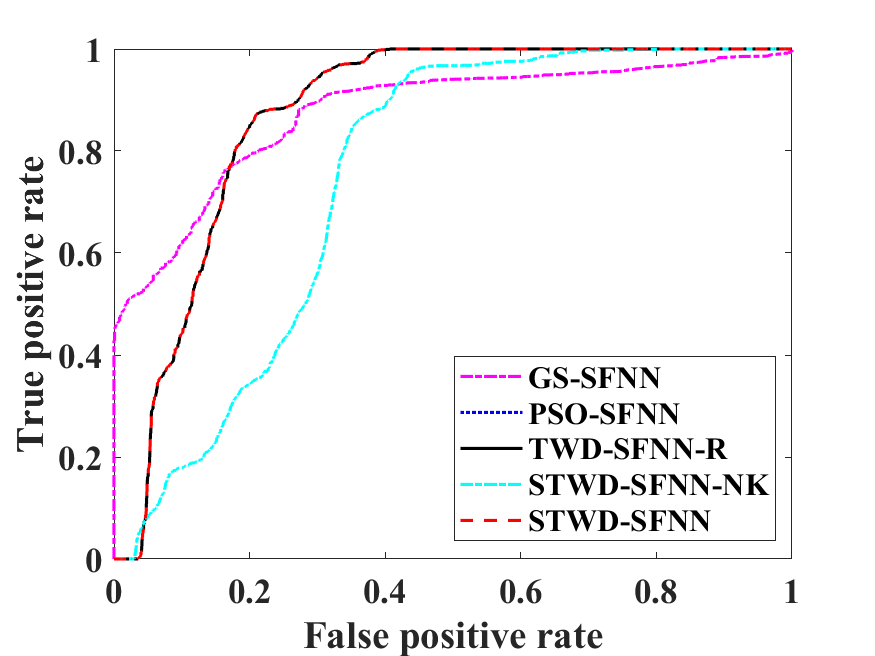}}
			\hfill
			\centering
			\subfigure[HTRU]{
				\includegraphics[width=2.1in]{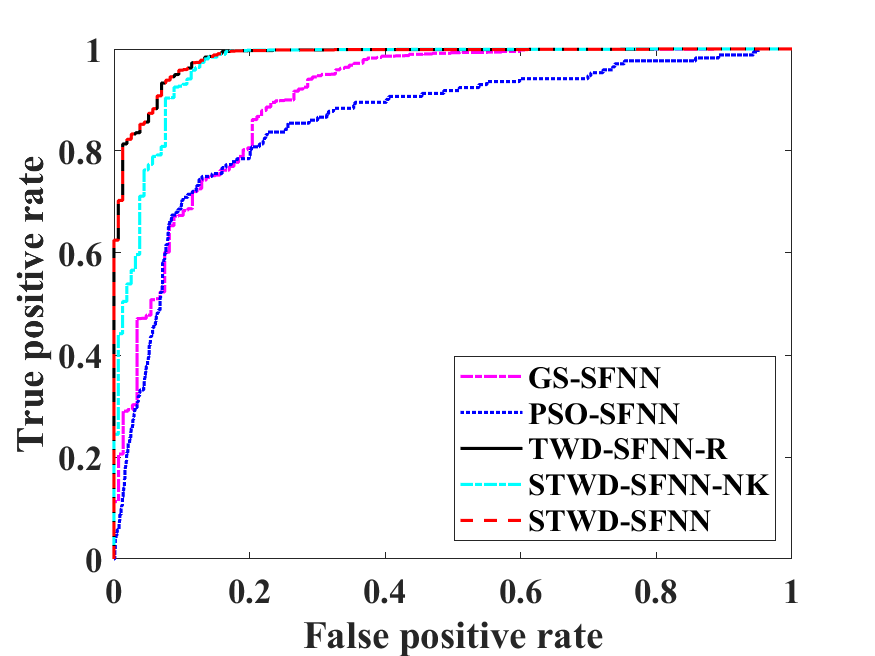}}
			\hfill
			\centering
			\subfigure[DCC]{
				\includegraphics[width=2.1in]{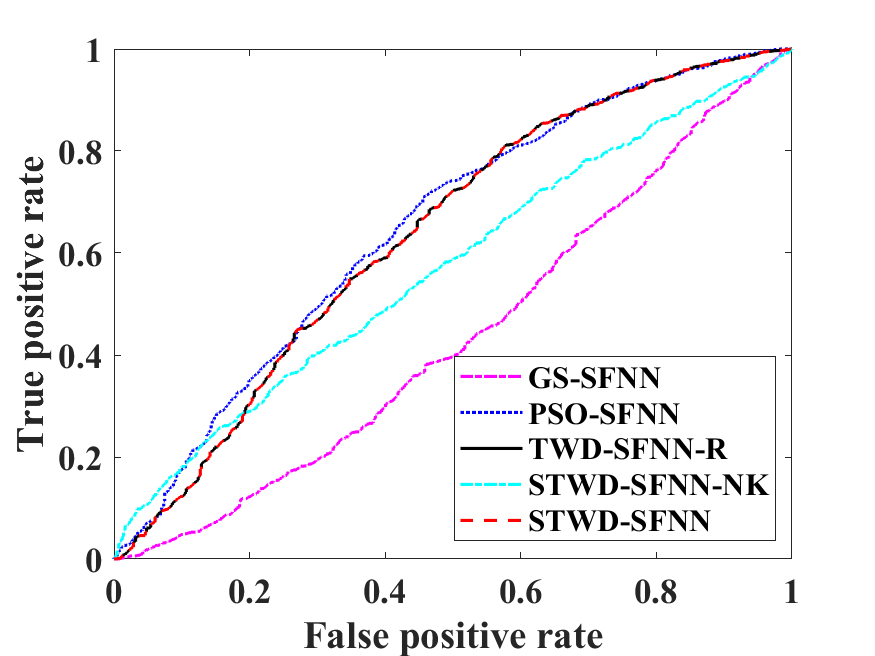}}
			\hfill
			\centering
			\subfigure[ESR]{
				\includegraphics[width=2.1in]{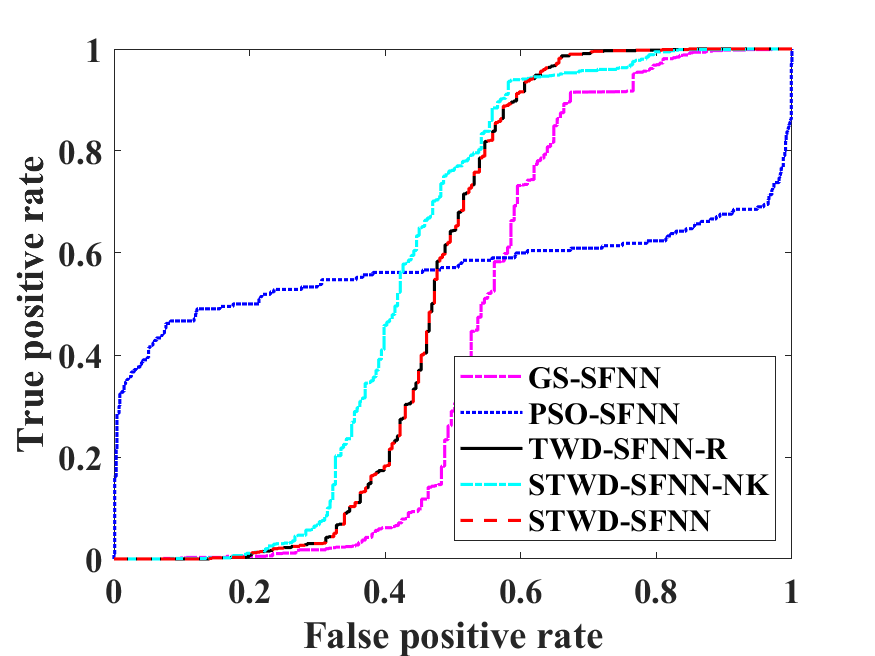}}	
			\hfill
			\centering
			\subfigure[BM]{
				\includegraphics[width=2.1in]{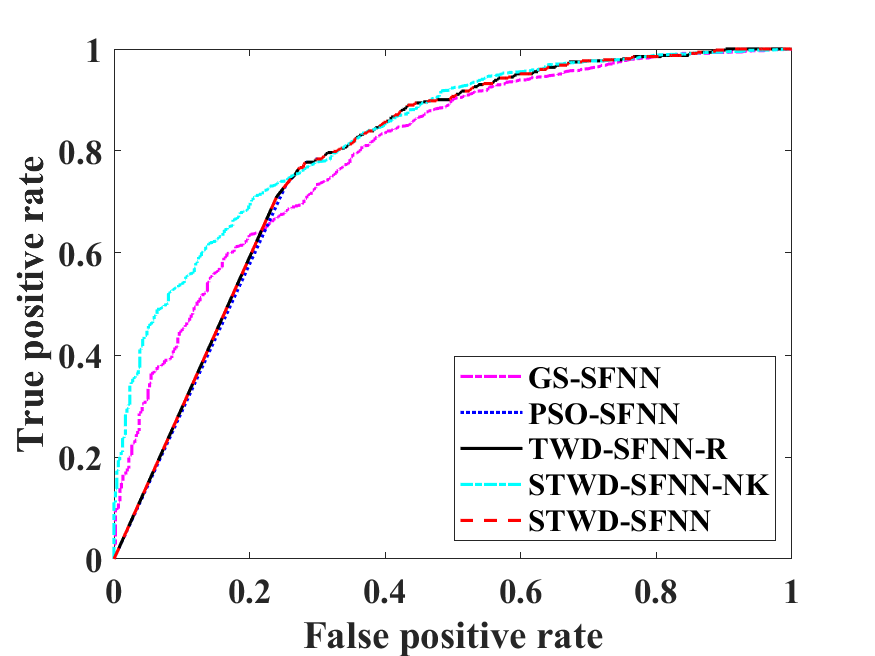}}
			\hfill
			\centering
			\subfigure[PCB]{
				\includegraphics[width=2.1in]{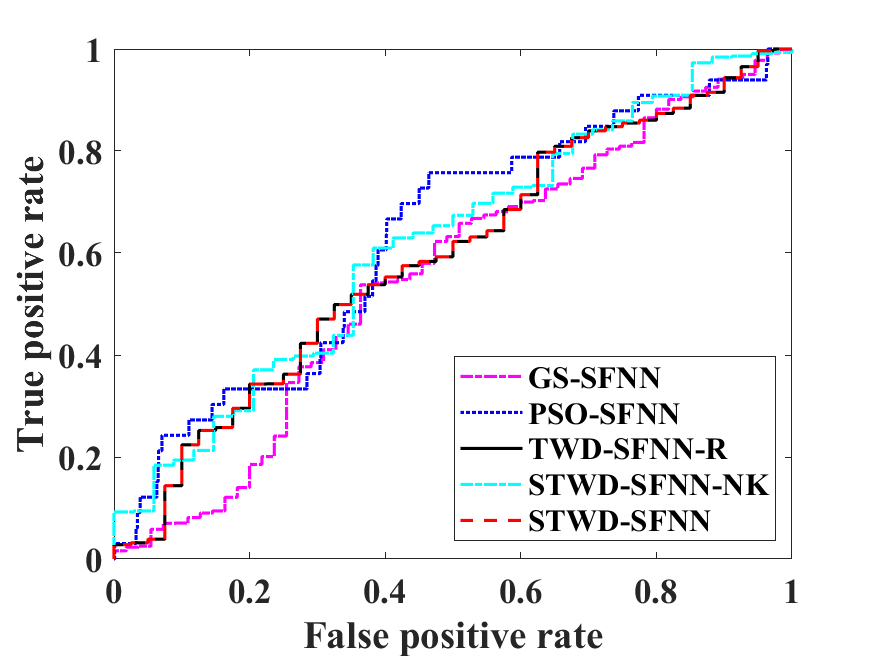}}
			\hfill	
			\centering
			\subfigure[SB]{
				\includegraphics[width=2.1in]{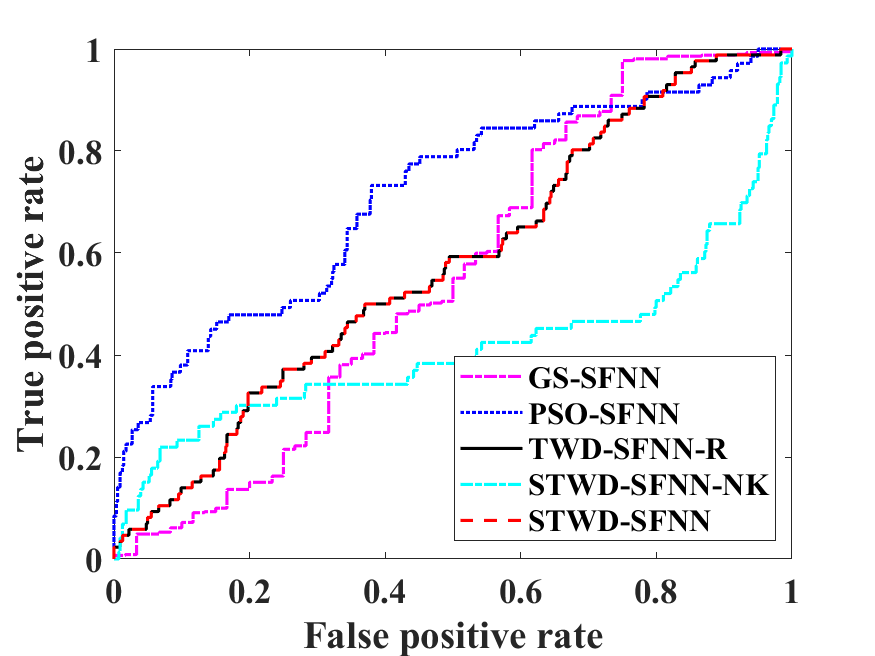}}
			\hfill	
			\centering
			\subfigure[EOL]{
				\includegraphics[width=2.1in]{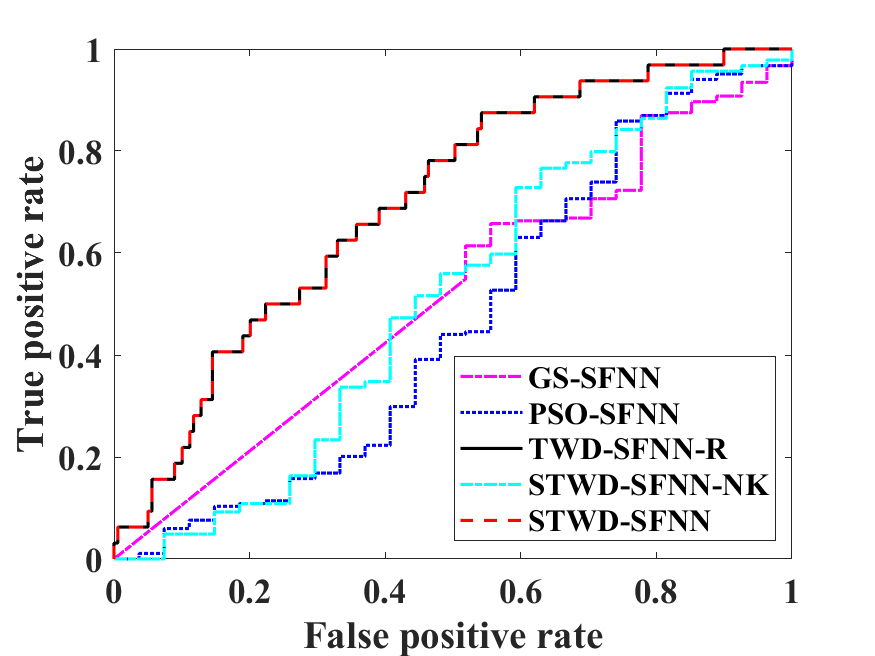}}
			\hfill	
			\centering	
			\subfigure[OD]{
				\includegraphics[width=2.1in]{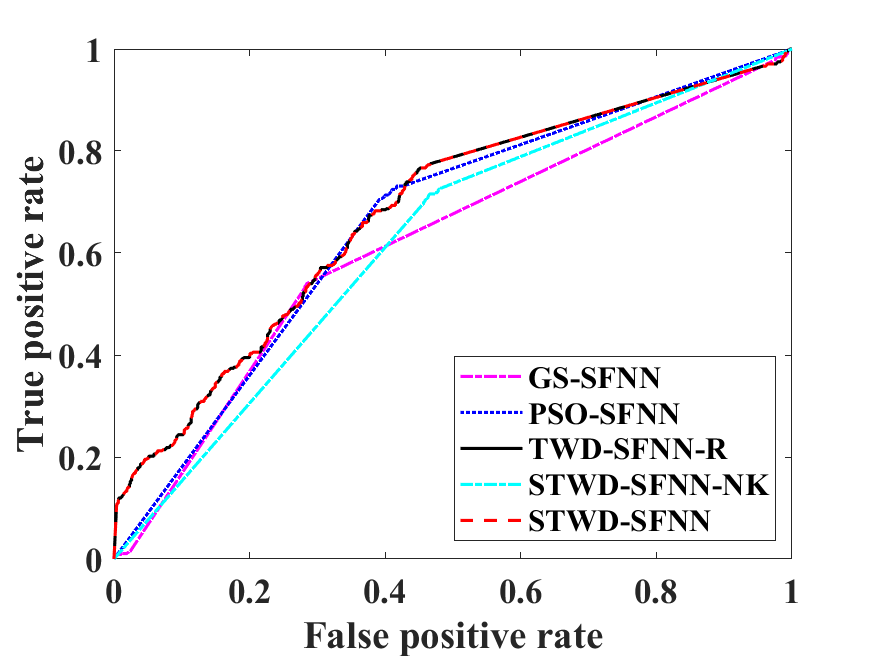}}
			\hfill	
			\centering
			\subfigure[ROE]{
				\includegraphics[width=2.1in]{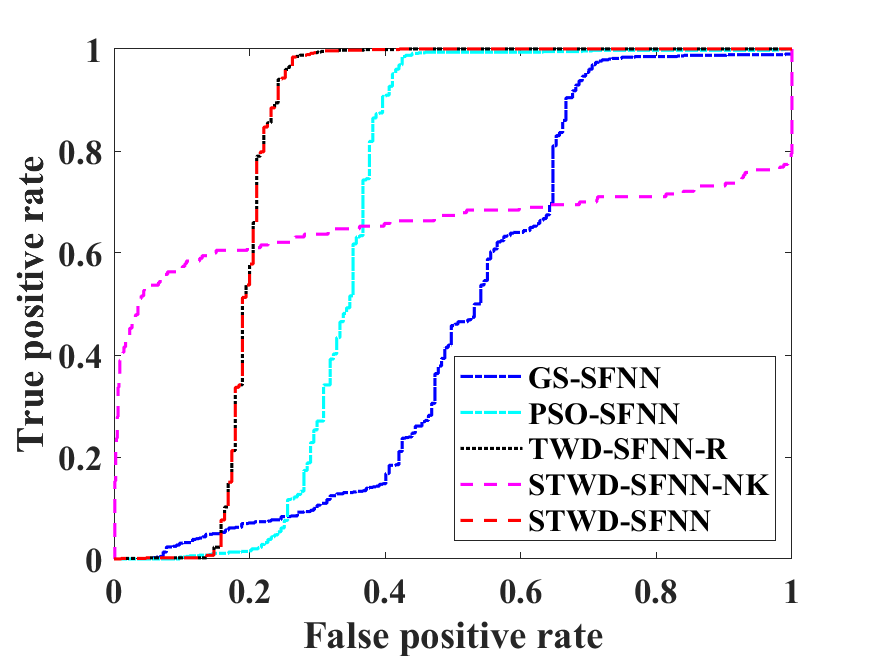}}
			\hfill	
			\centering
			\subfigure[SSMCR]{
				\includegraphics[width=2.1in]{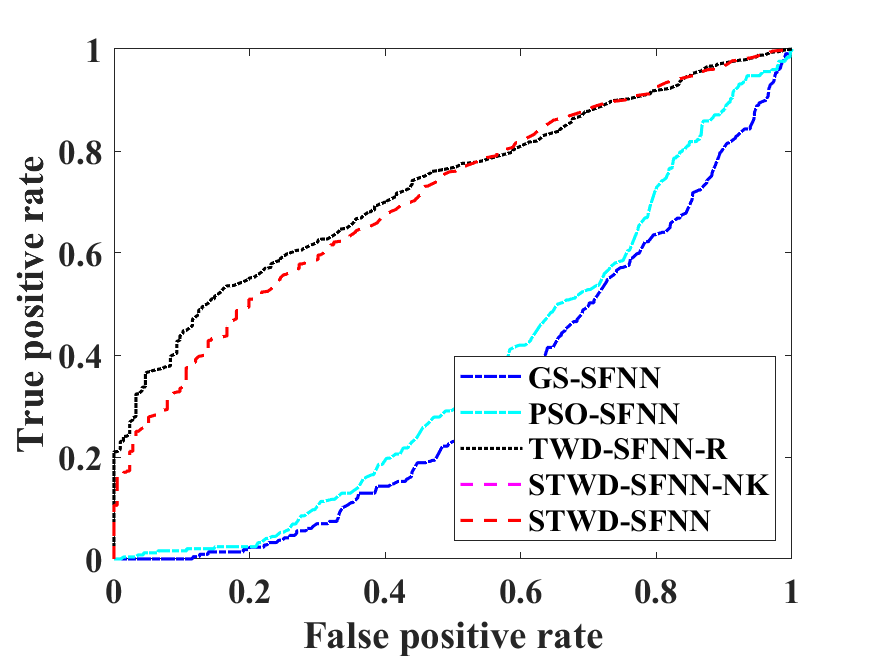}}
			\hfill	
			\small
			\captionsetup{font={small}}
			\caption{Comparison of ROC curves of dynamic models and STWD-SFNN }
			\label{fig_gs_pso}
		\end{figure*}

		\begin{table*}[hp]
			\setlength{\abovecaptionskip}{0cm}    
			\setlength{\belowcaptionskip}{0cm}    
			\small
			\captionsetup{font={small}}
			\newcommand{\tabincell}[2]{\begin{tabular}{@{}#1@{}}#2\end{tabular}}
			\caption{ Comparison of dynamic models and STWD-SFNN }
			\centering
			\resizebox{\textwidth}{120mm}{
				\setlength{\tabcolsep}{1mm}{
					\begin{tabular}{cccccccc}
						\hline
						Dataset   &Model  &\tabincell{c}{Accuracy \\ (\%)}   &\tabincell{c}{Weighted- \\ f1(\%)}  &\tabincell{c}{AUC \\ (\%)}   &\tabincell{c}{Training \\ time(s)}  &\tabincell{c}{Test time (s)} &\tabincell{c}{Nodes}\\ \hline
							
						\multirow{5}{*}{ONP}
						&GS-SFNN &47.24$\pm$17.13 &54.65$\pm$17.01 &49.71 &55.15$\pm$11.72 &0.005$\pm$0.001 &3.10$\pm$0.63 \\
						&PSO-SFNN &92.20$\pm$2.34 &81.64$\pm$18.18 &53.83 &91.55$\pm$3.51 &\textbf{0.003$\pm$0.000} &4.35$\pm$1.01 \\
						&TWD-SFNN-R &\textbf{94.41$\pm$0.22} &\underline{85.87$\pm$0.25} &\underline{57.87} &\textbf{9.56$\pm$0.11} &\textbf{0.003$\pm$0.000} &\textbf{1.00$\pm$0.00} \\
						&STWD-SFNN-NK &84.16$\pm$0.24 &\textbf{86.63$\pm$0.25} &\textbf{59.35} &44.70$\pm$6.10 &0.008$\pm$0.001 &\textbf{1.00$\pm$0.00} \\
						&STWD-SFNN &\textbf{94.41$\pm$0.22} &\underline{85.87$\pm$0.26} &\underline{57.87} &\underline{15.76$\pm$0.76} &\textbf{0.003$\pm$0.000} &\textbf{1.00$\pm$0.00} \\ \hline
							
						\multirow{5}{*}{QSAR}
						&GS-SFNN &49.47$\pm$11.29 &57.78$\pm$10.70 &43.16 &40.13$\pm$4.71 &0.011$\pm$0.000 &2.70$\pm$0.31 \\
						&PSO-SFNN &\textbf{85.92$\pm$2.48} &\textbf{85.49$\pm$1.66} &56.32 &190.05$\pm$8.61 &0.017$\pm$0.002 &5.42$\pm$1.61 \\
						&TWD-SFNN-R &\underline{78.26$\pm$1.11} &81.21$\pm$1.13 &\underline{61.61} &14.62$\pm$0.33 &0.011$\pm$0.002 &\underline{2.00$\pm$0.00} \\
						&STWD-SFNN-NK &67.23$\pm$1.15 &74.42$\pm$1.04 &64.54 &13.61$\pm$1.18 &0.009$\pm$0.001 &\textbf{1.00$\pm$0.00} \\
						&STWD-SFNN &\underline{78.26$\pm$1.11} &\underline{81.21$\pm$1.13} &\underline{61.61} &\textbf{12.43$\pm$0.15} &\textbf{0.008$\pm$0.001} &\underline{2.00$\pm$0.00} \\ \hline
							
						\multirow{5}{*}{OSP}
						&GS-SFNN &44.76$\pm$15.67 &42.34$\pm$18.29 &40.62 &28.78$\pm$5.95 &0.004$\pm$0.000 &2.90$\pm$0.63 \\
						&PSO-SFNN &\textbf{80.98$\pm$3.59} &68.07$\pm$15.27 &62.10 &28.88$\pm$2.60 &0.002$\pm$0.001 &5.62$\pm$1.44 \\
						&TWD-SFNN-R &68.42$\pm$0.65 &\textbf{70.01$\pm$0.80} &\underline{62.52} &\textbf{5.04$\pm$0.43} &\textbf{0.002$\pm$0.000} &\underline{2.00$\pm$0.00} \\
						&STWD-SFNN-NK &65.01$\pm$0.65 &66.98$\pm$0.82 &\textbf{64.77} &5.15$\pm$0.60 &\textbf{0.002$\pm$0.000} &\textbf{1.00$\pm$0.00} \\
						&STWD-SFNN &\underline{68.43$\pm$0.66} &\textbf{70.01$\pm$0.81} &\underline{62.52} &5.55$\pm$0.51 &\textbf{0.002$\pm$0.000} &\underline{2.00$\pm$0.00} \\ \hline
							
						\multirow{5}{*}{EGSS}
						&GS-SFNN &62.56$\pm$14.77 &58.48$\pm$16.67 &95.07 &18.36$\pm$2.41 &0.004$\pm$0.001 &2.50$\pm$0.45 \\
						&PSO-SFNN &84.32$\pm$5.05 &83.30$\pm$7.17 &95.60 &36.54$\pm$3.29 &0.004$\pm$0.001 &5.53$\pm$1.51 \\
						&TWD-SFNN-R &85.60$\pm$0.70 &85.88$\pm$0.67 &\textbf{98.30} &6.04$\pm$0.65 &\underline{0.003$\pm$0.000} &\underline{2.00$\pm$0.00} \\
						&STWD-SFNN-NK &82.26$\pm$0.75 &82.61$\pm$0.72 &96.15 &\textbf{4.71$\pm$0.57} &\textbf{0.002$\pm$0.000} &\textbf{1.00$\pm$0.00} \\
						&STWD-SFNN &\textbf{85.63$\pm$0.70} &\textbf{85.91$\pm$0.67} &\textbf{98.30} &7.09$\pm$0.37 &\underline{0.003$\pm$0.000} &\underline{2.00$\pm$0.00} \\ \hline
							
						\multirow{5}{*}{SE}
						&GS-SFNN &55.84$\pm$16.66 &35.60$\pm$19.26 &87.58 &42.24$\pm$8.69 &0.003$\pm$0.000 &2.50$\pm$0.45 \\
						&PSO-SFNN &79.01$\pm$8.96 &61.05$\pm$22.36 &87.14 &192.66$\pm$12.37 &0.004$\pm$0.000 &4.83$\pm$1.82 \\
						&TWD-SFNN-R &\textbf{80.89$\pm$0.29} &\textbf{78.31$\pm$0.36} &\textbf{87.14} &\textbf{13.84$\pm$0.038} &\textbf{0.002$\pm$0.000} &\underline{2.00$\pm$0.00} \\
						&STWD-SFNN-NK &79.12$\pm$0.29 &70.21$\pm$0.03 &74.58 &17.87$\pm$0.25 &\textbf{0.002$\pm$0.000} &\textbf{1.00$\pm$0.00} \\
						&STWD-SFNN &\textbf{80.89$\pm$0.29} &\textbf{78.31$\pm$0.36} &\textbf{87.14} &20.86$\pm$0.57 &\underline{0.003$\pm$0.000} &\underline{2.00$\pm$0.00} \\ \hline
							
						\multirow{5}{*}{HTRU}
						&GS-SFNN &46.78$\pm$17.71 &52.19$\pm$17.94 &90.43 &40.34$\pm$9.08 &0.005$\pm$0.000 &3.20$\pm$0.72 \\
						&PSO-SFNN &89.71$\pm$2.35 &80.99$\pm$18.17 &86.00 &108.00$\pm$9.76 &0.007$\pm$0.001 &5.64$\pm$1.91 \\
						&TWD-SFNN-R &91.75$\pm$0.27 &92.68$\pm$0.21 &\textbf{98.34} &10.88$\pm$0.55 &0.005$\pm$0.001 &\underline{2.00$\pm$0.00} \\
						&STWD-SFNN-NK &84.50$\pm$0.48 &87.26$\pm$0.35 &96.70 &\textbf{5.53$\pm$0.30} &\textbf{0.002$\pm$0.000} &\textbf{1.00$\pm$0.00} \\
						&STWD-SFNN &\textbf{91.80$\pm$0.30} &\textbf{92.72$\pm$0.23} &\textbf{98.34} &10.43$\pm$0.61 &\underline{0.004$\pm$0.000} &\underline{2.00$\pm$0.00} \\ \hline
							
						\multirow{5}{*}{DCC}
						&GS-SFNN &49.15$\pm$16.12 &25.93$\pm$19.33 &43.78 &84.02$\pm$20.61 &0.008$\pm$0.001 &2.60$\pm$0.68 \\
						&PSO-SFNN &\textbf{78.56$\pm$0.87} &50.32$\pm$22.10 &\textbf{65.01} &65.11$\pm$1.78 &0.003$\pm$0.000 &4.57$\pm$1.32 \\
						&TWD-SFNN-R &\underline{78.13$\pm$0.40} &\textbf{69.70$\pm$0.55} &\underline{63.59} &\textbf{7.64$\pm$0.14} &0.002$\pm$0.001 &\underline{2.00$\pm$0.00} \\
						&STWD-SFNN-NK &76.48$\pm$0.34 &68.59$\pm$0.45 &57.08 &8.72$\pm$0.05 &\textbf{0.002$\pm$0.000} &\textbf{1.00$\pm$0.00} \\
						&STWD-SFNN &\underline{78.13$\pm$0.40} &\textbf{69.70$\pm$0.55} &\underline{63.59} &\underline{7.89$\pm$0.32} &\textbf{0.002$\pm$0.000} &\underline{2.00$\pm$0.00} \\ \hline
							
						\multirow{5}{*}{ESR}
						&GS-SFNN &52.82$\pm$15.78 &36.76$\pm$18.92 &44.23 &21.12$\pm$4.28 &0.006$\pm$0.001 &2.50$\pm$0.45 \\
						&PSO-SFNN &84.24$\pm$1.44 &\textbf{81.77$\pm$1.27} &\textbf{56.96} &149.30$\pm$11.62 &0.010$\pm$0.002 &4.79$\pm$1.68 \\
						&TWD-SFNN-R &84.32$\pm$0.48 &81.53$\pm$0.56 &54.25 &\textbf{3.84$\pm$0.15} &\textbf{0.003$\pm$0.000} &\underline{2.00$\pm$0.00} \\
						&STWD-SFNN-NK &80.00$\pm$0.96 &79.76$\pm$0.01 &56.71 &4.42$\pm$0.06 &\textbf{0.003$\pm$0.000} &\textbf{1.00$\pm$0.00} \\
						&STWD-SFNN &\textbf{84.35$\pm$0.47} &\underline{81.55$\pm$0.55} &54.25 &\underline{4.29$\pm$0.06} &\textbf{0.003$\pm$0.000} &\underline{2.00$\pm$0.00} \\ \hline
							
						\multirow{5}{*}{BM}
						&GS-SFNN &78.90$\pm$15.16 &65.95$\pm$21.63 &80.10 &88.38$\pm$13.81 &0.008$\pm$0.001 &2.70$\pm$0.43 \\
						&PSO-SFNN &86.84$\pm$0.47 &81.74$\pm$0.01 &78.59 &102.65$\pm$7.63 &0.003$\pm$0.000 &3.62$\pm$1.56 \\
						&TWD-SFNN-R &\underline{87.56$\pm$0.34} &\textbf{82.85$\pm$0.48} &78.80 &\textbf{8.36$\pm$0.06} &\textbf{0.002$\pm$0.000} &\underline{2.00$\pm$0.00} \\
						&STWD-SFNN-NK &\textbf{88.72$\pm$0.33} &66.93$\pm$22.31 &\textbf{83.51} &19.48$\pm$0.47 &\textbf{0.002$\pm$0.000} &\textbf{1.00$\pm$0.00} \\
						&STWD-SFNN &\underline{87.56$\pm$0.34} &\textbf{82.85$\pm$0.48} &78.80 &\underline{15.19$\pm$0.36} &0.003$\pm$0.001 &\underline{2.00$\pm$0.00} \\ \hline
							
						\multirow{5}{*}{PCB}
						&GS-SFNN &63.46$\pm$26.28 &47.06$\pm$27.92 &55.58 &43.16$\pm$7.45 &0.005$\pm$0.002 &3.00$\pm$0.60 \\
						&PSO-SFNN &84.34$\pm$7.66 &89.12$\pm$4.75 &\textbf{62.77} &273.66$\pm$11.88 &0.015$\pm$0.003 &5.15$\pm$1.49 \\
						&TWD-SFNN-R &\textbf{97.48$\pm$0.32} &\textbf{96.61$\pm$0.46} &59.23 &4.87$\pm$0.14 &0.003$\pm$0.000 &\underline{2.00$\pm$0.00} \\
						&STWD-SFNN-NK &91.56$\pm$4.58 &93.48$\pm$2.63 &61.70 &\textbf{4.43$\pm$0.06} &\textbf{0.002$\pm$0.000} &\textbf{1.00$\pm$0.00} \\
						&STWD-SFNN &\underline{97.31$\pm$0.49} &\underline{96.54$\pm$0.52} &59.23 &5.44$\pm$0.06 &\textbf{0.002$\pm$0.000} &\underline{2.00$\pm$0.00} \\ \hline
							
						\multirow{5}{*}{SB}
						&GS-SFNN &44.62$\pm$15.12 &49.93$\pm$14.92 &55.25 &7.97$\pm$0.77 &0.003$\pm$0.001 &2.30$\pm$0.31 \\
						&PSO-SFNN &\textbf{70.64$\pm$5.83} &\textbf{74.99$\pm$4.38} &\textbf{71.03} &33.03$\pm$5.31 &0.004$\pm$0.001 &5.69$\pm$1.37 \\
						&TWD-SFNN-R &65.39$\pm$1.05 &71.34$\pm$1.04 &\underline{57.62} &2.49$\pm$0.05 &0.002$\pm$0.000 &3.00$\pm$0.00 \\
						&STWD-SFNN-NK &58.58$\pm$1.32 &66.44$\pm$1.14 &41.34 &\textbf{2.26$\pm$0.06} &\textbf{0.001$\pm$0.000} &\textbf{1.00$\pm$0.00} \\
						&STWD-SFNN &\underline{65.80$\pm$0.78} &\underline{71.61$\pm$0.90} &\underline{57.62} &3.48$\pm$0.18 &0.003$\pm$0.000 &2.90$\pm$0.20 \\ \hline
							
						\multirow{5}{*}{EOL}
						&GS-SFNN &54.76$\pm$14.32 &57.84$\pm$11.50 &51.81 &2.35$\pm$0.47 &0.002$\pm$0.000 &2.40$\pm$0.44 \\
						&PSO-SFNN &83.51$\pm$2.65 &72.10$\pm$16.18 &65.14 &13.40$\pm$1.70 &0.003$\pm$0.001 &4.51$\pm$1.71 \\
						&TWD-SFNN-R &\textbf{83.99$\pm$2.10} &\textbf{81.53$\pm$0.01} &\textbf{69.78} &\textbf{0.47$\pm$0.06} &\textbf{0.001$\pm$0.000} &\underline{2.00$\pm$0.00} \\
						&STWD-SFNN-NK &46.52$\pm$2.16 &53.62$\pm$2.53 &51.01 &1.78$\pm$0.20 &0.002$\pm$0.000 &\textbf{1.00$\pm$0.00} \\
						&STWD-SFNN &\underline{83.66$\pm$1.95} &\textbf{81.53$\pm$0.01} &\textbf{69.78} &\underline{0.75$\pm$0.07} &\textbf{0.001$\pm$0.000} &\underline{2.00$\pm$0.00} \\ \hline
							
						\multirow{5}{*}{OD}
						&GS-SFNN &58.13$\pm$2.63 &59.86$\pm$2.71 &62.12 &15.50$\pm$2.70 &0.003$\pm$0.000 &2.60$\pm$0.53 \\
						&PSO-SFNN &67.26$\pm$1.04 &\textbf{67.02$\pm$1.38} &65.97 &106.85$\pm$14.90 &0.007$\pm$0.001 &6.37$\pm$0.96 \\
						&TWD-SFNN-R &\textbf{76.38$\pm$0.43} &60.03$\pm$13.35 &\textbf{67.95} &\textbf{4.18$\pm$0.09} &\textbf{0.001$\pm$0.000} &2.00$\pm$0.00 \\
						&STWD-SFNN-NK &68.15$\pm$0.71 &63.06$\pm$0.65 &62.46 &8.13$\pm$0.10 &\underline{0.002$\pm$0.000} &\textbf{1.00$\pm$0.00} \\
						&STWD-SFNN &\underline{76.33$\pm$0.41} &\underline{66.71$\pm$0.57} &\textbf{67.95} &\underline{4.81$\pm$0.09} &\underline{0.002$\pm$0.000} &\underline{2.00$\pm$0.00} \\ \hline
							
						\multirow{5}{*}{ROE}
						&GS-SFNN &48.72$\pm$17.15 &50.30$\pm$18.63 &48.40 &46.83$\pm$27.58 &0.006$\pm$0.005 &2.80$\pm$0.78 \\
						&PSO-SFNN &82.26$\pm$8.52 &83.14$\pm$7.41 &66.40 &91.05$\pm$16.67 &0.008$\pm$0.004 &5.12$\pm$2.08 \\
						&TWD-SFNN-R &84.19$\pm$0.91 &84.85$\pm$0.90 &\textbf{80.19} &\textbf{1.80$\pm$0.04} &\textbf{0.001$\pm$0.000} &\underline{2.00$\pm$0.00} \\
						&STWD-SFNN-NK &81.74$\pm$1.00 &74.02$\pm$1.42 &65.89 &3.38$\pm$0.13 &\textbf{0.001$\pm$0.000} &\textbf{1.00$\pm$0.00} \\
						&STWD-SFNN &\textbf{84.35$\pm$0.98} &\textbf{85.00$\pm$0.96} &\textbf{80.19} &\underline{2.74$\pm$0.11} &\textbf{0.001$\pm$0.000} &\underline{2.00$\pm$0.00} \\ \hline
							
						\multirow{5}{*}{SSMCR}
						&GS-SFNN &48.68$\pm$27.47 &35.03$\pm$28.61 &32.04 &118.43$\pm$34.05 &0.011$\pm$0.000 &3.00$\pm$0.89 \\
						&PSO-SFNN &\textbf{92.47$\pm$0.51} &26.75$\pm$27.24 &36.72 &421.79$\pm$31.41 &0.147$\pm$0.004 &5.48$\pm$1.55 \\
						&TWD-SFNN-R &91.42$\pm$0.21 &\textbf{88.41$\pm$0.31} &\textbf{72.84} &\textbf{20.10$\pm$0.15} &\textbf{0.004$\pm$0.000} &\underline{2.00$\pm$0.00} \\
						&STWD-SFNN-NK &85.45$\pm$0.21 &85.35$\pm$0.25 &\underline{70.63} &71.08$\pm$1.07 &0.018$\pm$0.004 &\textbf{1.00$\pm$0.00} \\
						&STWD-SFNN &88.41$\pm$0.32 &\underline{87.22$\pm$0.37} &\underline{70.63} &76.04$\pm$2.53 &0.029$\pm$0.007 &\underline{2.00$\pm$0.00} \\ \hline	
					\end{tabular}}
				}
				\label{tab_gs_pso}
			\end{table*}
     

		1. The accuracy, weighted-f1, ROC, and AUC of STWD-SFNN are better than those of GS-SFNN, PSO-SFNN, TWD-SFNN-R, and STWD-SFNN-NK. For example, on the ROE dataset of Table \ref{tab_gs_pso}, the accuracy, weighted-f1, and AUC of STWD-SFNN are 84.35\%, 85.00\%, and 80.19\%, respectively, which are higher than those of TWD-SFNN-R with best performance in the competitive models, and the values are 84.19\%, 84.85\%, and 80.19\%, respectively. Meanwhile, Fig. \ref{fig_gs_pso} shows that the comparison of ROC curves of TWD-SFNN-R and STWD-SFNN overlap. Similar phenomena can be found in other datasets. The reasons are as follows. 
  
\begin{enumerate}[(1)] 
\item        Compared with GS-SFNN and PSO-SFNN, STWD-SFNN adopts the idea of divide-and-conquer and focuses on learning the features of the difficult-to-classify instances in BND. However, GS-SFNN and PSO-SFNN do not perform targeted processing on the difficult-to-classify instances, but repeatedly learn the features of all instances.	
			
\item        Compared with TWD-SFNN-R, STWD-SFNN sets dynamic thresholds and utilizes the sequential property of these thresholds, thereby gradually relaxing the conditions of instances partitioned at different granularity levels. However, TWD-SFNN-R adopts the static threshold in each turn, which is difficult to achieve dynamic adjustment of instance learning.  
			
\item         Compared with STWD-SFNN-NK, STWD-SFNN utilizes the discretization technique to handle the data of instances partitioned into the BND region, thereby improving the learning ability of the model. Therefore, the generalization ability of STWD-SFNN is better than that of the competitive models.		

\end{enumerate}

2. The number of hidden layer nodes of STWD-SFNN is no more than GS-SFNN, PSO-SFNN, and TWD-SFNN-R, but more than STWD-SFNN-NK in most cases. For example, on the SSMCR dataset in Table \ref{tab_gs_pso}, STWD-SFNN has the same number of nodes as TWD-SFNN-R, with a value of two. STWD-SFNN has less nodes than GS-SFNN and PSO-SFNN, with  values of 3.00 and 5.48, respectively, while STWD-SFNN has more nodes than STWD-SFNN-NK, with a value of 1.00. Similar phenomena can be found in other datasets. The reasons are as follows.

\begin{enumerate}[(1)] 
\item        Compared with GS-SFNN and PSO-SFNN, STWD-SFNN adopts the idea of sequential three-way decisions, and determines whether to increase the number of hidden layer nodes according to whether there are instances in BND. However, GS-SFNN and PSO-SFNN use the difference in accuracy between adjacent turns. If the accuracy of this turn is lower than that of the previous turn, the number of hidden layer nodes will continue to increase until it reaches a given maximum number of hidden layer nodes.
			
\item        Compared with TWD-SFNN-R, STWD-SFNN constructs granularity layers and sequential thresholds so that it can pay more attention to instances that are difficult to classify, and gradually partition instances that are difficult to classify from the BND region. In addition, STWD-SFNN is a generalization of TWD-SFNN-R, which can ensure that STWD-SFNN has the same advantages as TWD-SFNN-R in network topology.
			
\item          Compared with STWD-SFNN-NK, STWD-SFNN can repeatedly learn the features of difficult-to-classify instances, thereby expanding the network topology. Therefore, the network topology of STWD-SFNN is superior to the competitive models in most cases.		
\end{enumerate}		
		
3. The training time of STWD-SFNN is less than GS-SFNN and PSO-SFNN, but more than TWD-SFNN-R and STWD-SFNN-NK on some datasets. For example, on the QSAR dataset in Table \ref{tab_gs_pso}, the training time of STWD-SFNN is, less than GS-SFNN, PSO-SFNN, TWD-SFNN-R, and STWD-SFNN-NK, with values of 12.43s, 40.13s, 190.05s, 13.61s, and 14.62s, respectively. Similar phenomena can be found in other datasets. The reasons are as follows.
		\begin{enumerate}[(1)] 
			\item        Compared with GS-SFNN and PSO-SFNN, STWD-SFNN only has a learning process for all instances in the first turn, and the time cost required for the model gradually decreases as the size of BND decreases. However, GS-SFNN and PSO-SFNN need to spend more time on each turn of learning for all instances.
			
			\item        Compared with TWD-SFNN-R which may repeatedly handle the same difficult-to-classify instances in each turn, the sequential thresholds of STWD-SFNN can reduce the condition of partitioned instances in each turn. STWD-SFNN decomposes the more easily classified instances in BND at each granularity level, so as to gradually realize the partition of all instances.
			
			\item         Compared with STWD-SFNN-NK, STWD-SFNN adopts $ k $-means++ discretization technology to increase the learning time of the model. Therefore, STWD-SFNN performs better than the competitive models on most cases.		
		\end{enumerate}

\subsubsection{Comparison with other competitive models} \label{exp_svc}
To measure the efficiency of STWD-SFNN, we selected SVC, RF, and KNN as the competitive models.  To guarantee the repeatability of the experimental results, SVC, RF, and KNN adopted the same random number seed in 10-fold cross-validation, that is, we fixed the random number seed in 10-fold cross-validation by utilizing rng(0). Fig. \ref{fig_svc} shows the comparison of  ROC curves of the classification models and STWD-SFNN. Table \ref{tab_svc} compares the evaluation criteria of each model. The results give rise to the following observations.
		
		\begin{figure*}[hp]
			\centering
			\subfigure[ONP]{
				\includegraphics[width=2.1in]{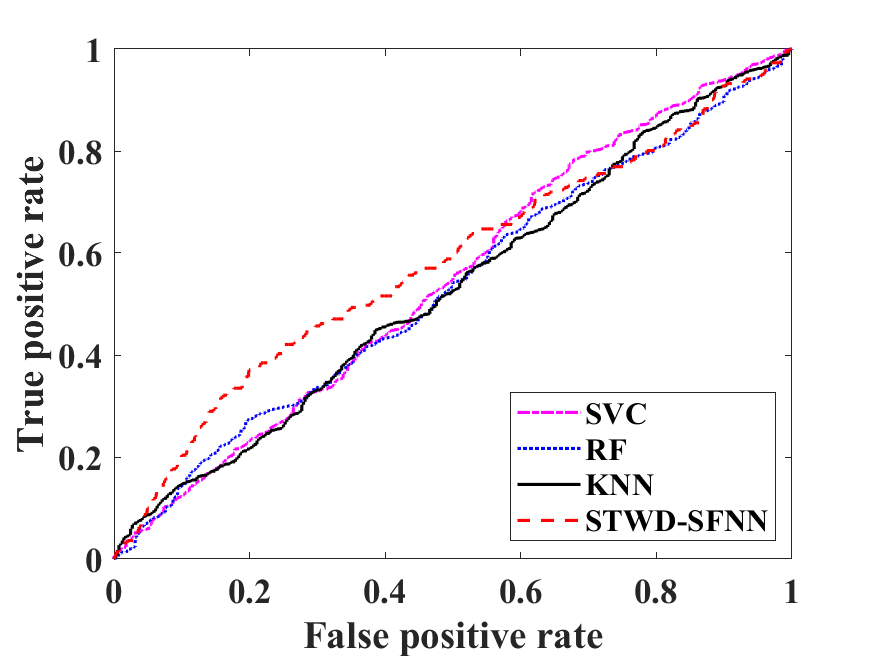}}
			\hfill
			\centering
			\subfigure[QSAR]{
				\includegraphics[width=2.1in]{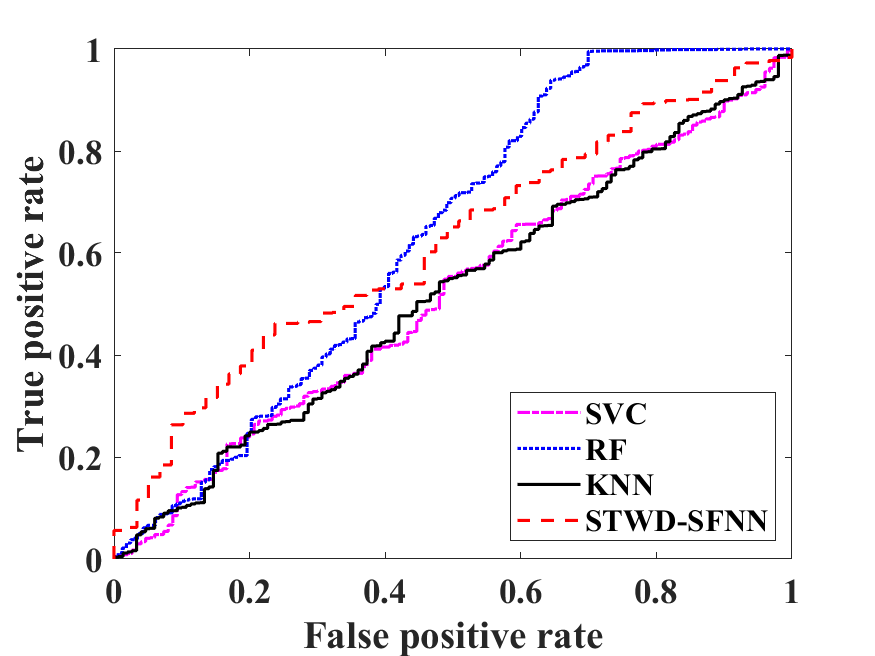}}
			\hfill
			\centering
			\subfigure[OSP]{
				\includegraphics[width=2.1in]{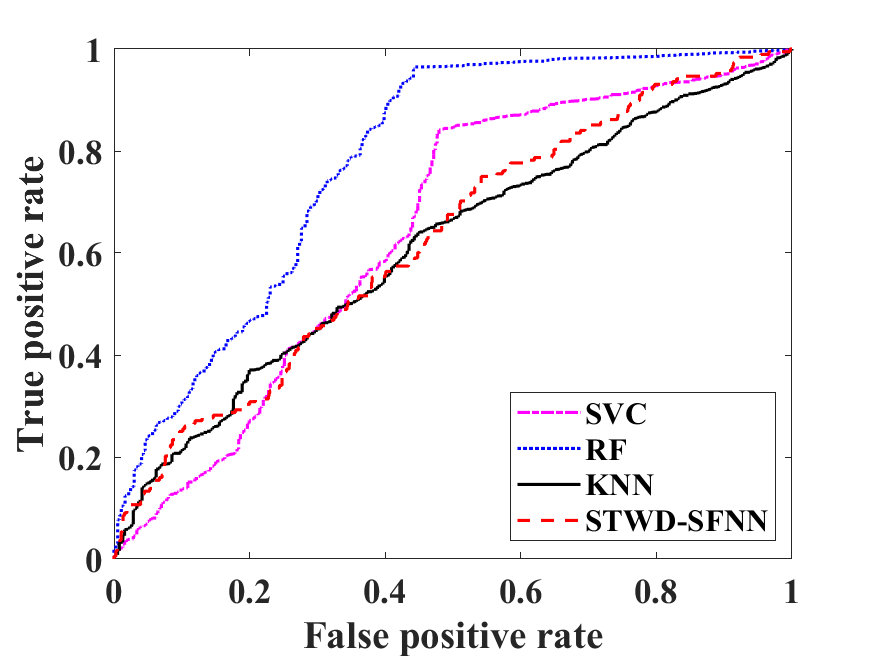}}
			\hfill
			\centering
			\subfigure[EGSS]{
				\includegraphics[width=2.1in]{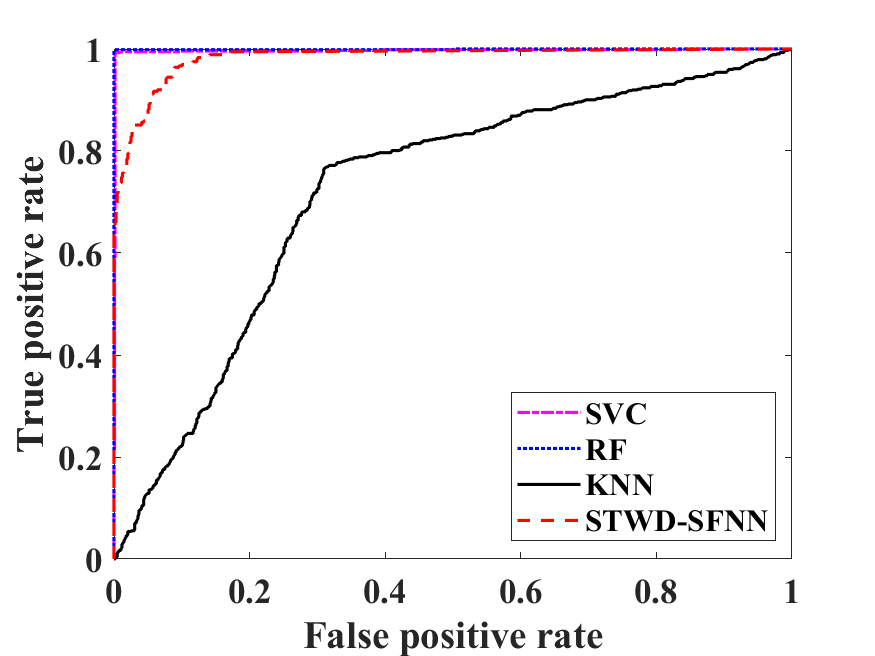}}
			\hfill
			\centering
			\subfigure[SE]{
				\includegraphics[width=2.1in]{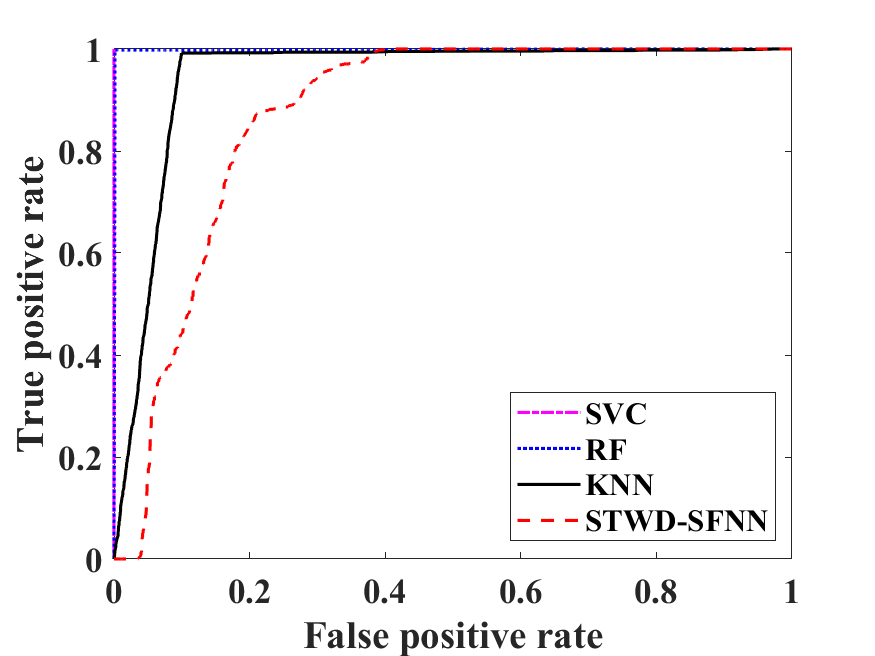}}
			\hfill
			\centering
			\subfigure[HTRU]{
				\includegraphics[width=2.1in]{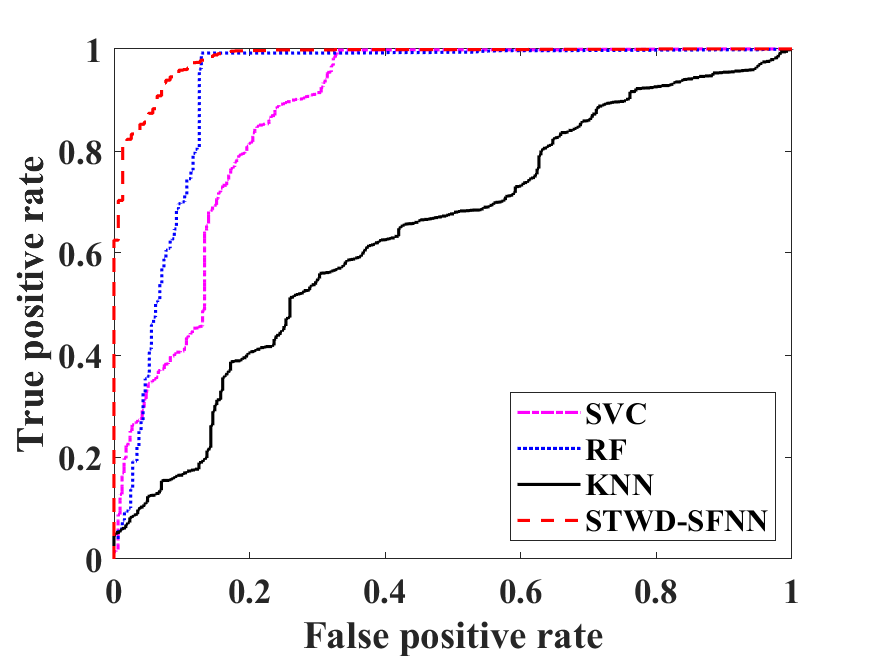}}
			\hfill
			\centering
			\subfigure[DCC]{
				\includegraphics[width=2.1in]{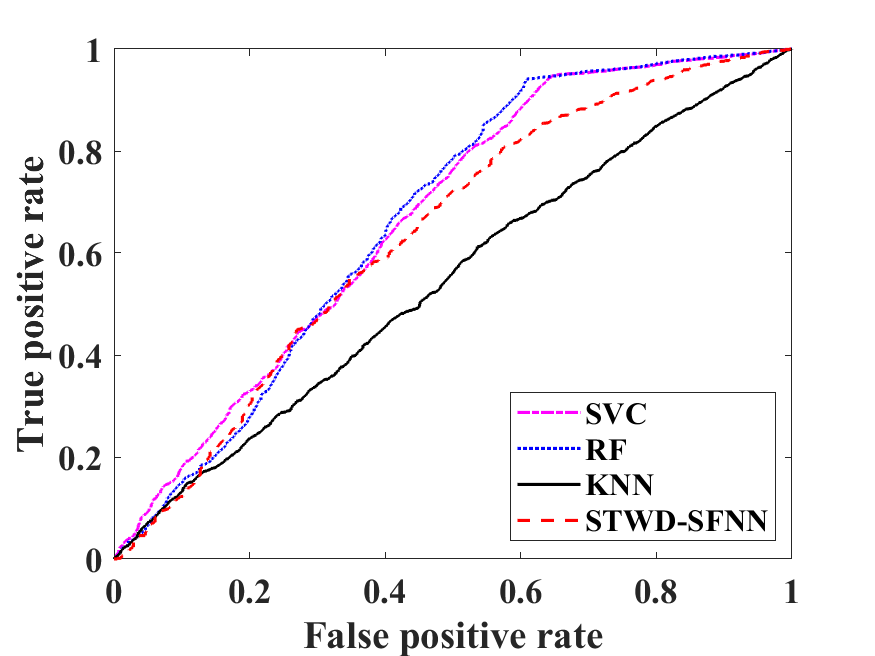}}
			\hfill
			\centering
			\subfigure[ESR]{
				\includegraphics[width=2.1in]{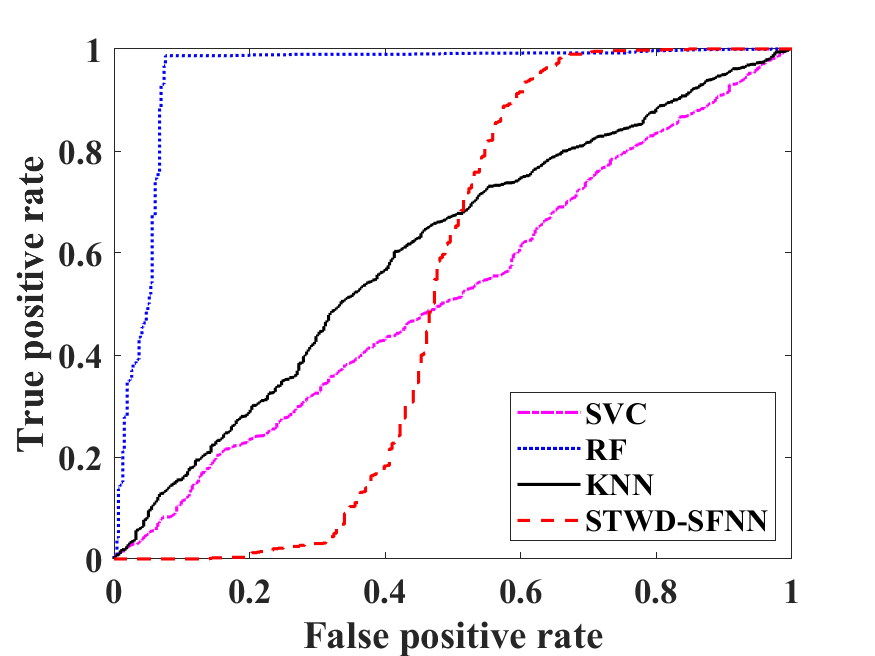}}	
			\hfill
			\centering
			\subfigure[BM]{
				\includegraphics[width=2.1in]{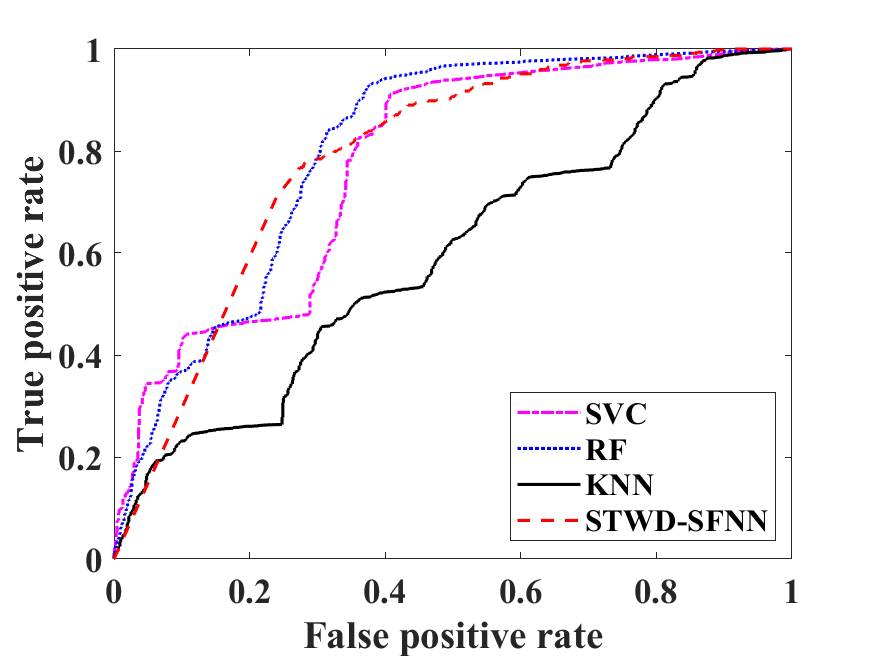}}
			\hfill
			\centering
			\subfigure[PCB]{
				\includegraphics[width=2.1in]{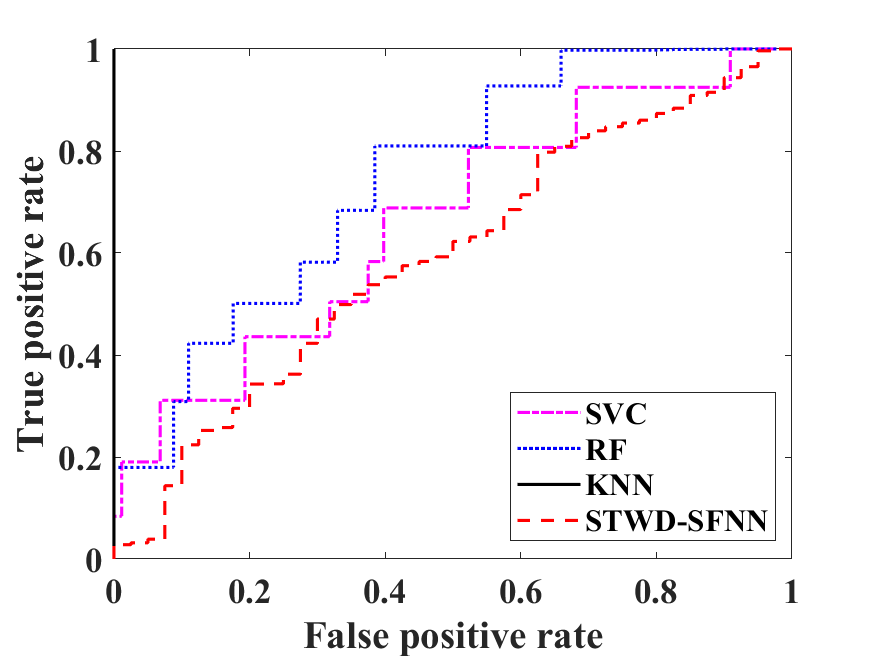}}
			\hfill	
			\centering
			\subfigure[SB]{
				\includegraphics[width=2.1in]{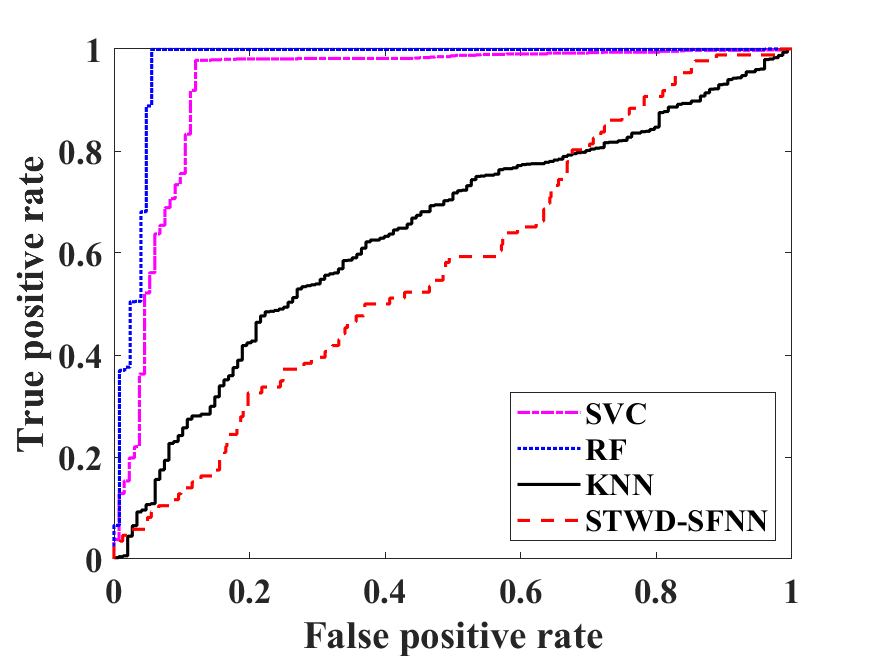}}
			\hfill	
			\centering
			\subfigure[EOL]{
				\includegraphics[width=2.1in]{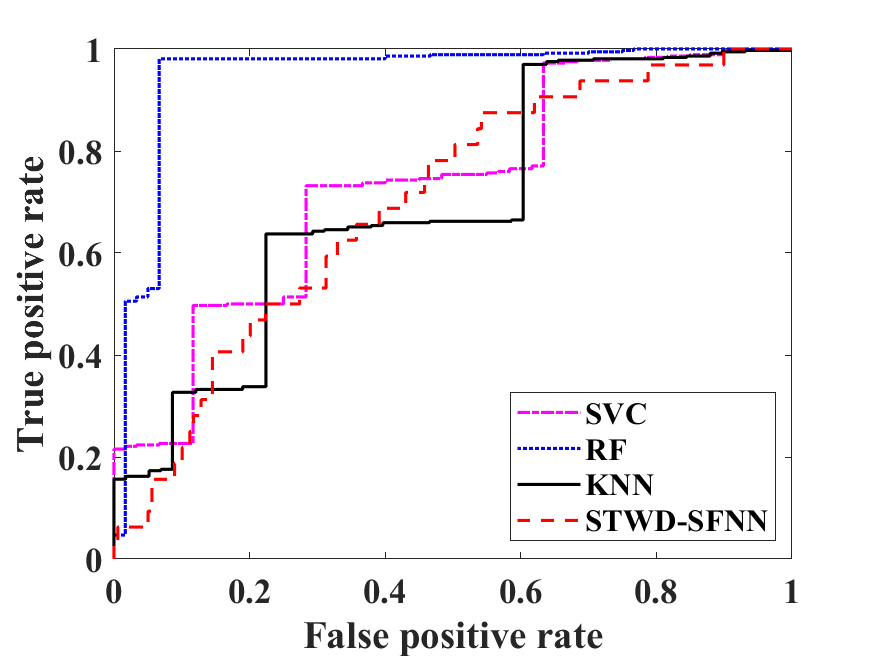}}
			\hfill	
			\centering	
			\subfigure[OD]{
				\includegraphics[width=2.1in]{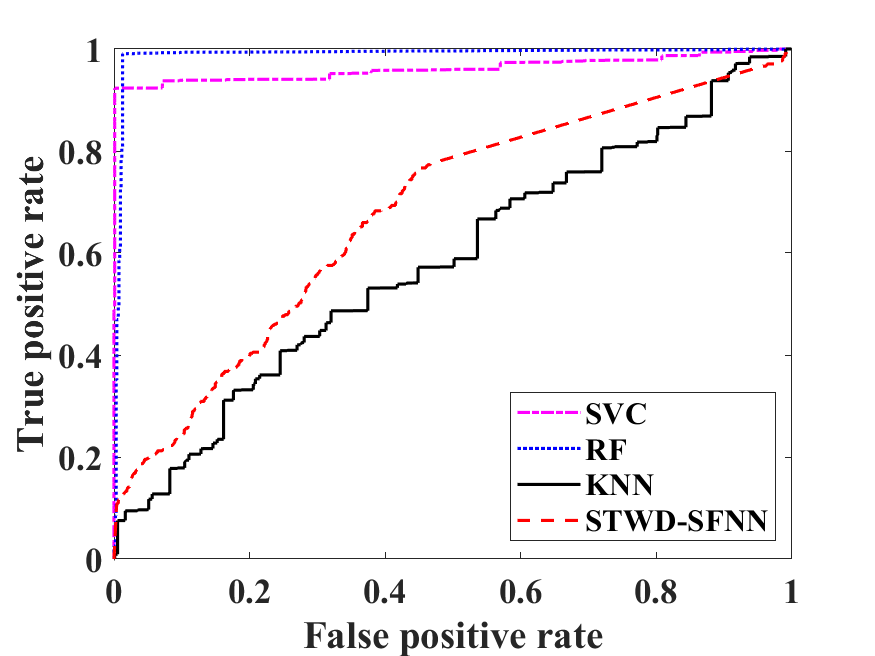}}
			\hfill	
			\centering
			\subfigure[ROE]{
				\includegraphics[width=2.1in]{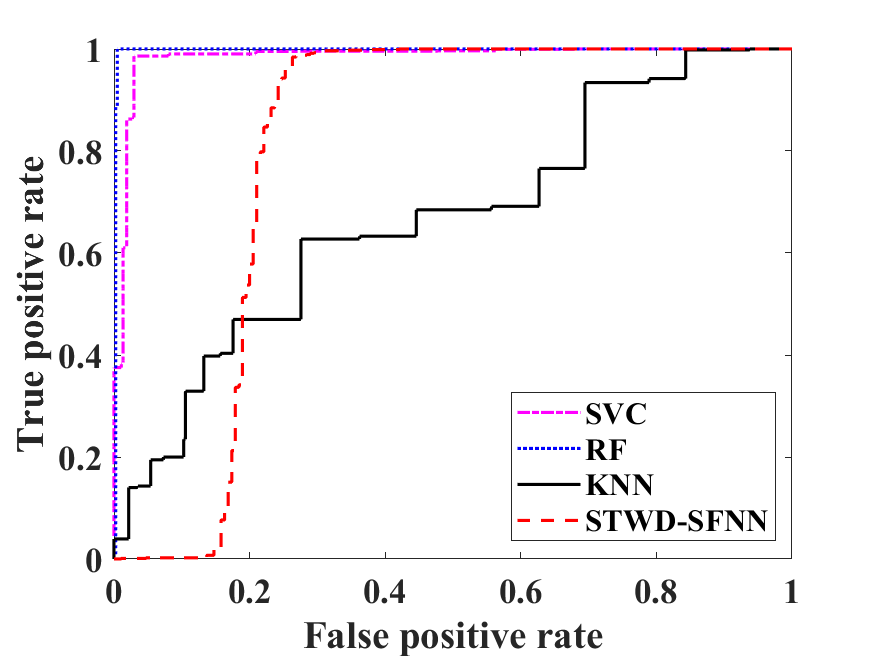}}
			\hfill	
			\centering
			\subfigure[SSMCR]{
				\includegraphics[width=2.1in]{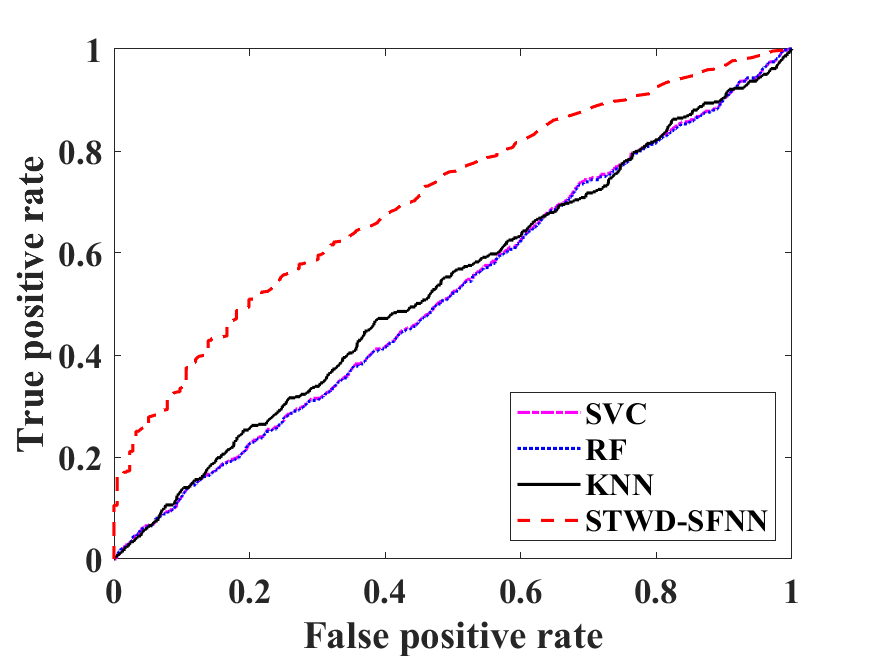}}
			\hfill	
			\small
			\captionsetup{font={small}}
			\caption{Comparison of ROC curves of other classification models and STWD-SFNN }
			\label{fig_svc}
		\end{figure*}
	
		\begin{table*}[hp]
			\setlength{\abovecaptionskip}{0cm}    
			\setlength{\belowcaptionskip}{0cm}    
			\small
			\captionsetup{font={small}}
			\newcommand{\tabincell}[2]{\begin{tabular}{@{}#1@{}}#2\end{tabular}}
			\caption{ Comparison of other classification models and STWD-SFNN }
			\centering
			\setlength{\tabcolsep}{1mm}{
				\begin{tabular}{ccccccc}
					\hline
					Dataset   &Model  &\tabincell{c}{Accuracy  (\%)}   &\tabincell{c}{Weighted- f1(\%)}  &\tabincell{c}{AUC  (\%)}   &\tabincell{c}{Training time(s)}  &\tabincell{c}{Test time  (s)} \\ \hline
						
					\multirow{4}{*}{ONP}
					&SVC &93.29$\pm$0.56 &\textbf{91.37$\pm$0.30} &54.61 &1284$\pm$117 &1.015$\pm$0.129 \\
					&RF &94.40$\pm$0.43 &82.53$\pm$18.34 &52.96 &124.20$\pm$1.80 &0.059$\pm$0.008 \\
					&KNN &94.39$\pm$0.22 &45.88$\pm$30.59 &53.01 &- &88.43$\pm$17.95 \\
					&STWD-SFNN &\textbf{94.41$\pm$0.22} &\underline{85.87$\pm$0.26} &\textbf{57.87} &\textbf{15.76$\pm$0.76} &\textbf{0.003$\pm$0.000} \\ \hline
						
					\multirow{4}{*}{QSAR}
					&SVC &91.63$\pm$0.46 &90.06$\pm$0.78 &51.78 &1.77$\pm$0.08 &0.093$\pm$0.008 \\
					&RF &\textbf{93.75$\pm$0.72} &\textbf{92.56$\pm$0.94} &\textbf{62.91} &1721.7$\pm$68.20 &0.092$\pm$0.013 \\
					&KNN &93.51$\pm$0.72 &92.26$\pm$0.94 &51.60 &- &4.728$\pm$0.343 \\
					&STWD-SFNN &78.26$\pm$1.11 &81.21$\pm$1.13 &\underline{61.61} &\textbf{12.43$\pm$0.15} &\textbf{0.008$\pm$0.001} \\ \hline
						
					\multirow{4}{*}{OSP}
					&SVC &70.31$\pm$18.50 &63.23$\pm$19.50 &64.46 &349.12$\pm$56.78 &0.048$\pm$0.008 \\
					&RF &\textbf{90.47$\pm$0.49} &\textbf{89.97$\pm$0.51} &\textbf{78.16} &6.31$\pm$0.19 &0.025$\pm$0.001 \\
					&KNN &84.10$\pm$0.73 &78.34$\pm$0.98 &61.05 &- &12.40$\pm$2.24 \\
					&STWD-SFNN &68.43$\pm$0.66 &70.01$\pm$0.81 &62.52 &\textbf{5.55$\pm$0.51} &\textbf{0.002$\pm$0.000} \\ \hline
						
					\multirow{4}{*}{EGSS}
					&SVC &90.17$\pm$0.69 &89.96$\pm$0.73 &99.60 &\textbf{3.15$\pm$0.24} &0.070$\pm$0.009 \\
					&RF &\textbf{99.98$\pm$0.00} &\textbf{99.98$\pm$0.00} &\textbf{99.93} &6.07$\pm$0.44 &0.030$\pm$0.020 \\
					&KNN &99.68$\pm$0.17 &99.68$\pm$0.17 &72.12 &- &81.13$\pm$7.13 \\
					&STWD-SFNN &85.63$\pm$0.70 &85.91$\pm$0.67 &98.30 &7.09$\pm$0.37 &\textbf{0.003$\pm$0.000} \\ \hline
						
					\multirow{4}{*}{SE}
					&SVC &84.77$\pm$0.20 &- &- &\textbf{24.22$\pm$1.14} &1.981$\pm$0.016 \\
					&RF &\textbf{99.92$\pm$0.03} &\textbf{99.92$\pm$0.03} &\textbf{99.83} &49.26$\pm$3.10 &0.376$\pm$0.020 \\
					&KNN &99.64$\pm$0.05 &99.64$\pm$0.05 &94.60 &- &10324$\pm$2993 \\
					&STWD-SFNN &80.89$\pm$0.29 &78.31$\pm$0.36 &87.14 &85.20$\pm$2.33 &\textbf{0.012$\pm$0.000} \\ \hline
						
					\multirow{4}{*}{HTRU}
					&SVC &70.15$\pm$0.01 &79.18$\pm$0.01 &87.64 &492.75$\pm$25.27 &0.073$\pm$0.076 \\
					&RF &97.96$\pm$0.09 &\textbf{97.92$\pm$0.10} &92.57 &\textbf{2.52$\pm$0.27} &0.012$\pm$0.005 \\
					&KNN &\textbf{97.97$\pm$0.12} &\textbf{97.92$\pm$0.13} &64.26 &- &92.01$\pm$10.42 \\
					&STWD-SFNN &91.80$\pm$0.30 &\underline{92.72$\pm$0.23} &\textbf{98.34} &\underline{10.43$\pm$0.61} &\textbf{0.004$\pm$0.000} \\ \hline
						
					\multirow{4}{*}{DCC}
					&SVC &72.76$\pm$1.32 &67.00$\pm$0.86 &\textbf{66.67} &1041.81$\pm$27.23 &0.248$\pm$0.064 \\
					&RF &\textbf{81.84$\pm$0.44} &\textbf{79.85$\pm$0.60} &66.65 &131.16$\pm$34.93 &0.339$\pm$0.022 \\
					&KNN &81.36$\pm$0.42 &78.85$\pm$0.55 &54.14 &- &44.49$\pm$4.43 \\
					&STWD-SFNN &78.13$\pm$0.40 &69.70$\pm$0.55 &63.59 &\textbf{7.89$\pm$0.32} &\textbf{0.002$\pm$0.000} \\ \hline
						
					\multirow{4}{*}{ESR}
					&SVC &80.00$\pm$0.96 &80.35$\pm$0.01 &51.99 &12.99$\pm$0.51 &1.327$\pm$0.051 \\
					&RF &\textbf{97.58$\pm$0.18} &\textbf{97.58$\pm$0.18} &\textbf{95.12} &610.17$\pm$138.52 &0.113$\pm$0.010 \\
					&KNN &84.43$\pm$0.97 &80.15$\pm$1.33 &60.18 &- &73.41$\pm$7.48 \\
					&STWD-SFNN &84.35$\pm$0.47 &81.55$\pm$0.55 &54.25 &\textbf{4.29$\pm$0.06} &\textbf{0.003$\pm$0.000} \\ \hline
						
					\multirow{4}{*}{BM}
					&SVC &65.62$\pm$24.03 &61.57$\pm$25.08 &77.27 &908.79$\pm$149.27 &0.361$\pm$0.022\\
					&RF &\textbf{91.67$\pm$0.19} &\textbf{91.16$\pm$0.24} &\textbf{80.41} &68.11$\pm$3.32 &0.173$\pm$0.004 \\
					&KNN &89.41$\pm$0.32 &86.98$\pm$0.42 &59.52 &- &234.01$\pm$35.35 \\
					&STWD-SFNN &87.56$\pm$0.34 &82.85$\pm$0.48 &\underline{78.80} &\textbf{15.19$\pm$0.36} &\textbf{0.003$\pm$0.001} \\ \hline
						
					\multirow{4}{*}{PCB}
					&SVC &97.77$\pm$0.32 &96.80$\pm$0.45 &66.26 &200.21$\pm$69.92 &0.138$\pm$0.011 \\
					&RF &\textbf{98.09$\pm$0.27} &\textbf{97.56$\pm$0.38} &\textbf{74.52} &222.54$\pm$61.80 &0.049$\pm$0.004 \\
					&KNN &97.84$\pm$0.32 &- &- &- &29.27$\pm$5.60 \\
					&STWD-SFNN &97.31$\pm$0.49 &96.54$\pm$0.52 &59.23 &\textbf{5.44$\pm$0.06} &\textbf{0.002$\pm$0.000} \\ \hline
						
					\multirow{4}{*}{SB}
					&SVC &98.24$\pm$0.30 &98.25$\pm$0.30 &92.29 &11.36$\pm$0.46 &0.025$\pm$0.034\\
					&RF &\textbf{99.81$\pm$0.16} &\textbf{99.81$\pm$0.16} &\textbf{97.06} &\textbf{0.24$\pm$0.01} &0.007$\pm$0.005 \\
					&KNN &93.43$\pm$0.39 &92.88$\pm$0.48 &63.47 &- &1.237$\pm$0.067 \\
					&STWD-SFNN &65.80$\pm$0.78 &71.61$\pm$0.90 &57.62 &\underline{3.48$\pm$0.18} &\textbf{0.003$\pm$0.000} \\ \hline
						
					\multirow{4}{*}{EOL}
					&SVC &88.58$\pm$1.00 &87.68$\pm$1.21 &71.60 &4.07$\pm$0.55 &0.032$\pm$0.045 \\
					&RF &\textbf{97.44$\pm$0.45} &\textbf{97.46$\pm$0.44} &\textbf{94.99} &1.41$\pm$0.25 &0.006$\pm$0.000 \\
					&KNN &87.02$\pm$1.19 &81.85$\pm$1.65 &69.84 &- &0.086$\pm$0.006 \\
					&STWD-SFNN &83.66$\pm$1.95 &81.53$\pm$0.01 &69.78 &\textbf{0.75$\pm$0.07} &\textbf{0.001$\pm$0.000} \\ \hline
						
					\multirow{4}{*}{OD}
					&SVC &79.37$\pm$11.88 &77.91$\pm$12.17 &96.05 &452.67$\pm$77.46 &0.037$\pm$0.027 \\
					&RF &\textbf{99.27$\pm$0.16} &\textbf{99.27$\pm$0.16} &\textbf{98.93} &7.36$\pm$0.34 &0.037$\pm$0.004 \\
					&KNN &79.07$\pm$0.43 &74.24$\pm$0.49 &58.29 &- &274.68$\pm$23.79 \\
					&STWD-SFNN &76.33$\pm$0.41 &66.71$\pm$0.57 &67.95 &\textbf{4.81$\pm$0.09} &\textbf{0.002$\pm$0.000} \\ \hline
						
					\multirow{4}{*}{ROE}
					&SVC &98.77$\pm$0.23 &98.75$\pm$0.23 &98.19 &\textbf{0.55$\pm$0.07} &0.020$\pm$0.020 \\
					&RF &\textbf{99.97$\pm$0.03} &\textbf{99.97$\pm$0.03} &\textbf{99.73} &1.24$\pm$0.02 &0.005$\pm$0.000 \\
					&KNN &97.20$\pm$0.42 &97.15$\pm$0.43 &66.56 &- &94.56$\pm$6.56 \\
					&STWD-SFNN &84.35$\pm$0.98 &85.00$\pm$0.96 &80.19 &2.74$\pm$0.11 &\textbf{0.001$\pm$0.000} \\ \hline
						
					\multirow{4}{*}{SSMCR}
					&SVC &83.34$\pm$6.20 &48.28$\pm$26.37 &52.03 &1070$\pm$91 &0.290$\pm$0.081 \\
					&RF &\textbf{92.80$\pm$0.27} &53.55$\pm$29.15 &51.84 &\textbf{4.44$\pm$0.07} &0.059$\pm$0.004 \\
					&KNN &92.66$\pm$0.37 &35.99$\pm$29.06 &53.34 &- &241.58$\pm$48.77 \\
					&STWD-SFNN &88.41$\pm$0.32 &\textbf{87.22$\pm$0.37} &\textbf{70.63} &\underline{76.04$\pm$2.53} &\textbf{0.029$\pm$0.007} \\ \hline
				\end{tabular}}
				\begin{tablenotes}
					\footnotesize
\item  We do not count the training time of KNN since the training time cost of KNN is zero. 
					
\item  The weighted-f1 and AUC are null in SVC on the SE dataset and KNN on the PCB dataset, since they divide all instances into positive classes.
\end{tablenotes}
				
		\label{tab_svc}
			\end{table*}

		1. The accuracy, weighted-f1, ROC, and AUC of STWD-SFNN are better than SVC, RF, and KNN on some datasets. For example, on the ONP dataset of Table \ref{tab_svc}, the accuracy, weighted-f1, and AUC of STWD-SFNN are 94.41\%, 85.87\%, and 57.87\%, respectively, while the best-performing competitive model is RF, with 94.40\%, 82.53\%, and 52.96\%, respectively. Meanwhile, as shown in Fig.5, the ROC curve of STWD-SFNN is at the left top of RF. Similar phenomena can be found in the other dataset. The reason is as follows. Compared with SVC, RF, and KNN, STWD-SFNN retains the ability of the neural network to capture the nonlinear relationship of data, and further enhances the performance of the model by adopting sequential three-way decisions. Therefore, in most cases, the generalization ability of STWD-SFNN is better than the competitive models.

		2. STWD-SFNN has less training time and test time than SVC, RF, and KNN on some datasets. For example, on the PCB data set of Table \ref{tab_svc}, the training time and test time of STWD-SFNN are 5.44s and 0.002s, respectively. The competitive models with the least training time and test time are SVC and RF, with values of 200.21s and 0.049s, respectively. Similar phenomena can be found in other datasets. The reason is as follows. Compared with SVC, RF, and KNN, STWD-SFNN can greatly promote the learning process by constructing granularity layers and sequential thresholds, which can improve the operation efficiency of the model. Therefore, in most cases, STWD-SFNN is more efficient than other competitive models.

       \subsubsection{Comparison with TWD-SFNN with 5-fold cross-validation} \label{exp_twd_5folds}
       From the above-mentioned experimental results, STWD-SFNN has relatively better accuracy, network topology, and operating efficiency. However, the ROC and AUC of TWD-SFNN and STWD-SFNN are not significantly different in Fig. \ref{fig_gs_pso} and Table \ref{tab_gs_pso}. To further illustrate the differences between TWD-SFNN and STWD-SFNN, we adopt a 5-fold cross-validation technique to conduct a further study on 15 datasets. Fig. \ref{fig_twd_5folds} shows the comparison of ROC curves of TWD-SFNN and STWD-SFNN on 5-fold cross-validation. Table \ref{tab_twd_5folds} reports the comparison of different evaluation criteria of those models. The results give rise to the following observations.
               
		\begin{figure*}[htp]
				\centering
				\subfigure[ONP]{
					\includegraphics[width=2.1in]{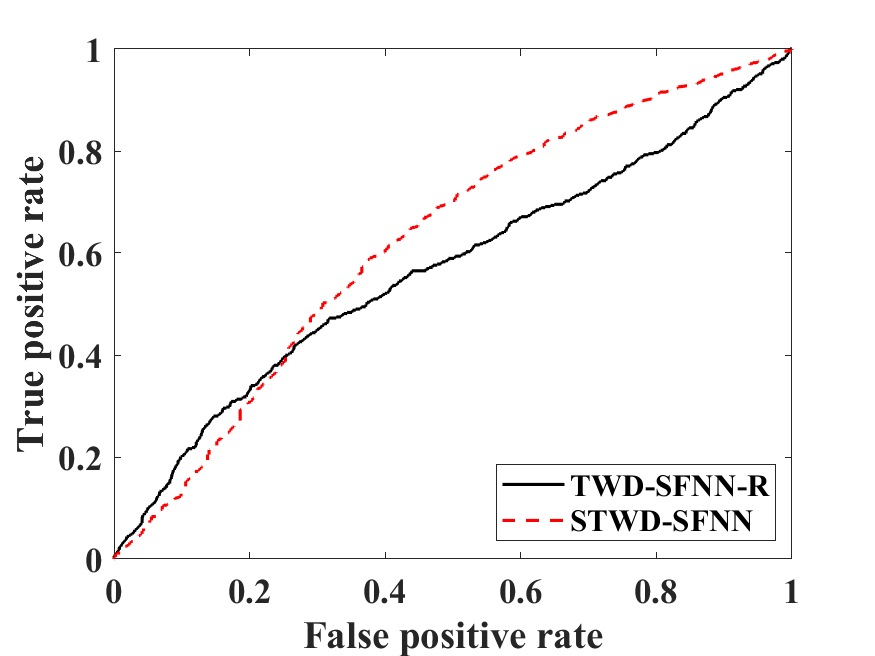}}
				\hfill
				\centering
				\subfigure[QSAR]{
					\includegraphics[width=2.1in]{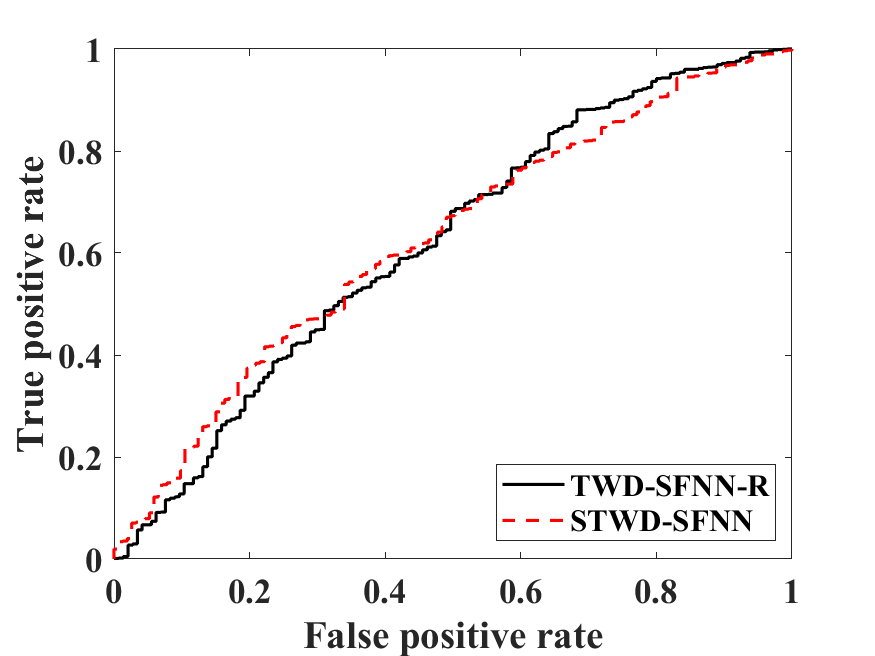}}
				\hfill
				\centering
				\subfigure[OSP]{
					\includegraphics[width=2.1in]{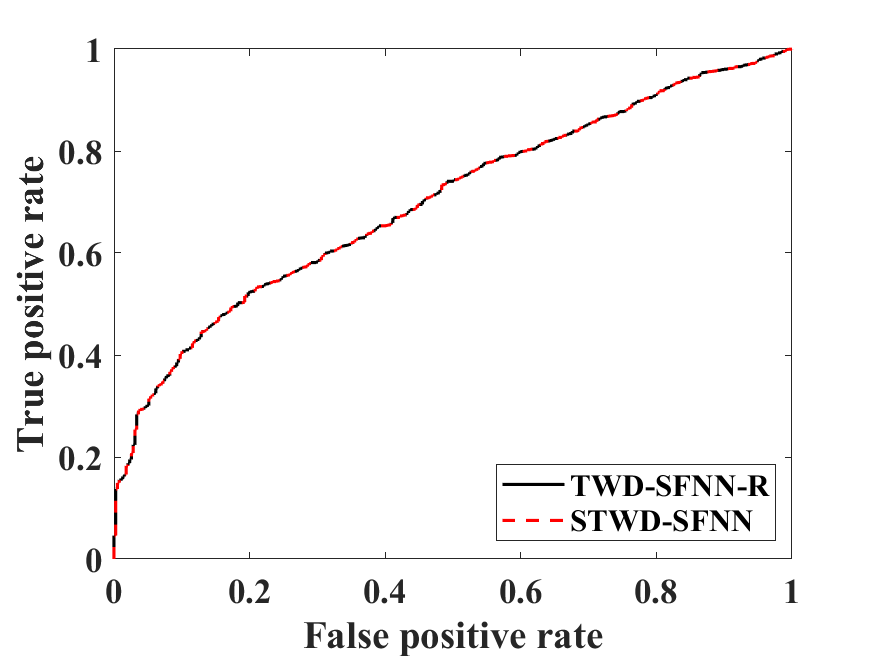}}
				\hfill
				\centering
				\subfigure[EGSS]{
					\includegraphics[width=2.1in]{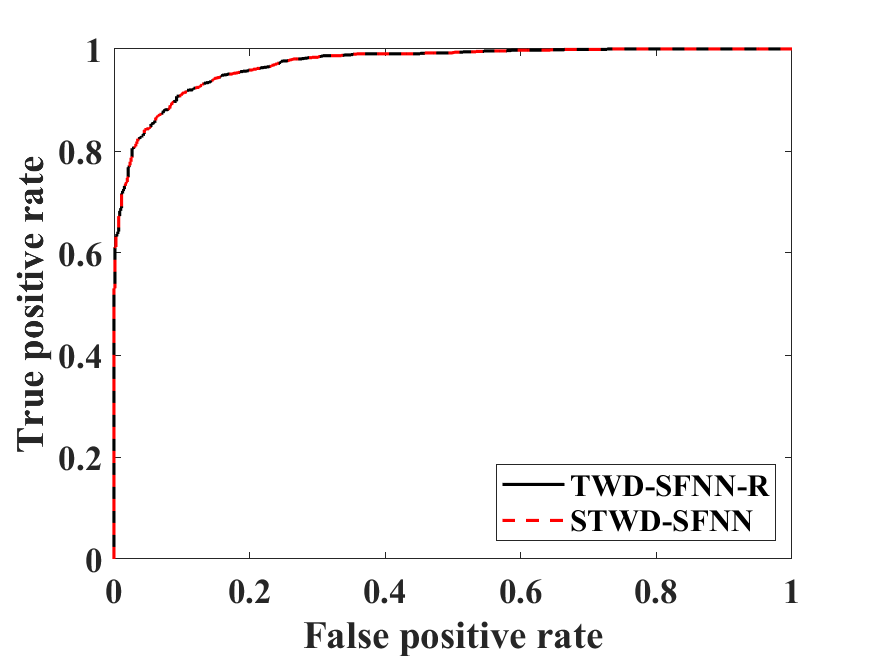}}
				\hfill
				\centering
				\subfigure[SE]{
					\includegraphics[width=2.1in]{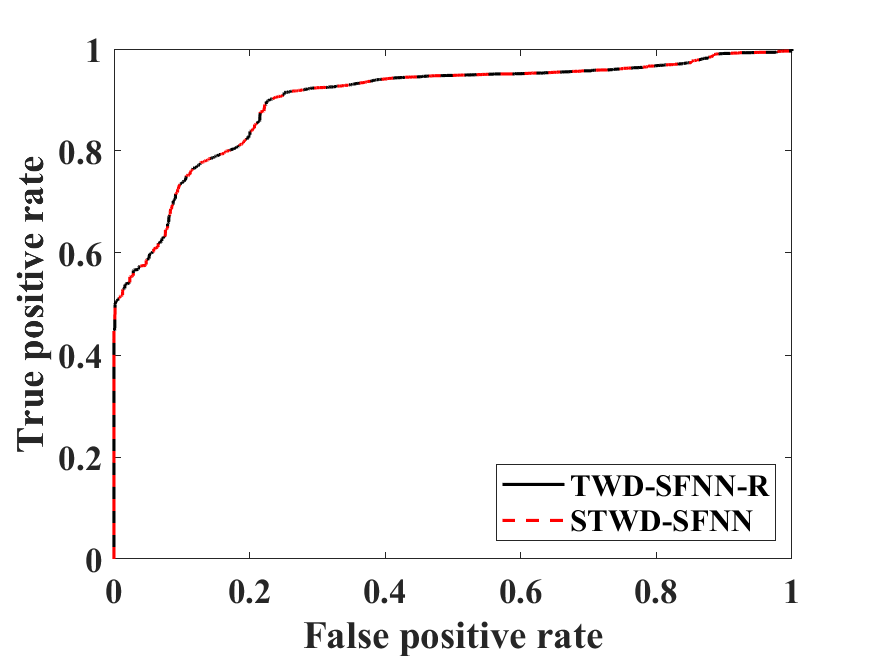}}
				\hfill
				\centering
				\subfigure[HTRU]{
					\includegraphics[width=2.1in]{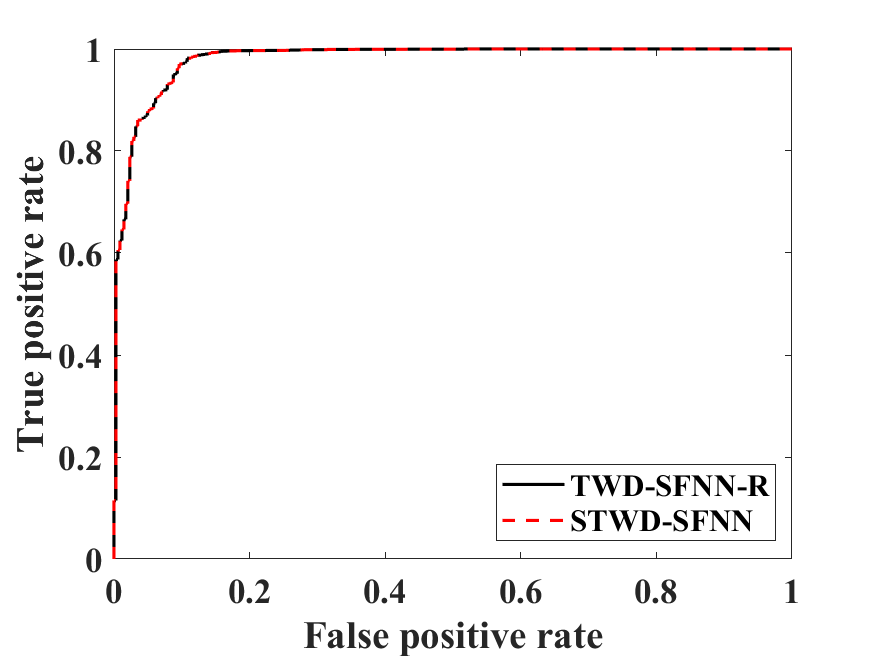}}
				\hfill
				\centering
				\subfigure[DCC]{
					\includegraphics[width=2.1in]{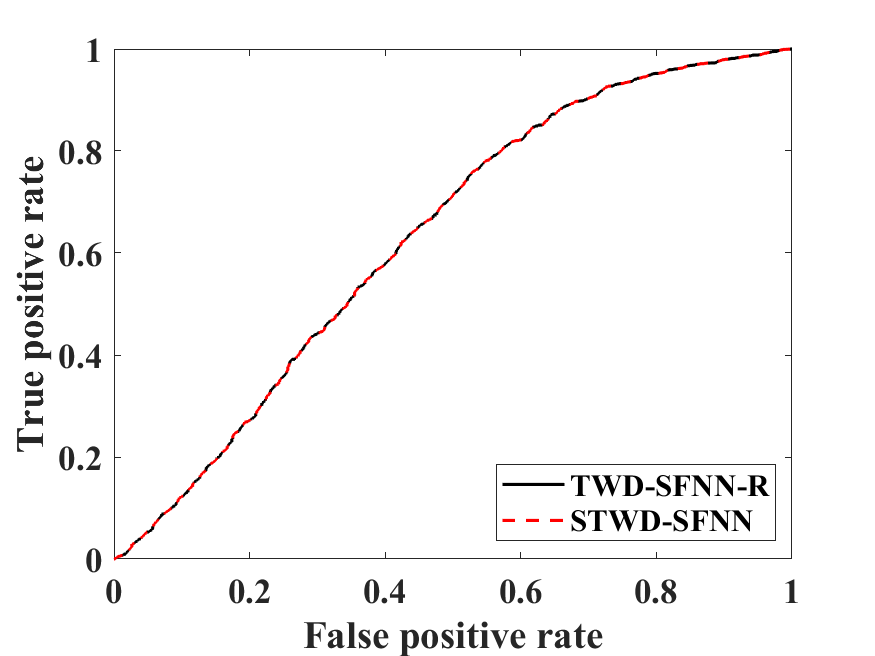}}
				\hfill
				\centering
				\subfigure[ESR]{
					\includegraphics[width=2.1in]{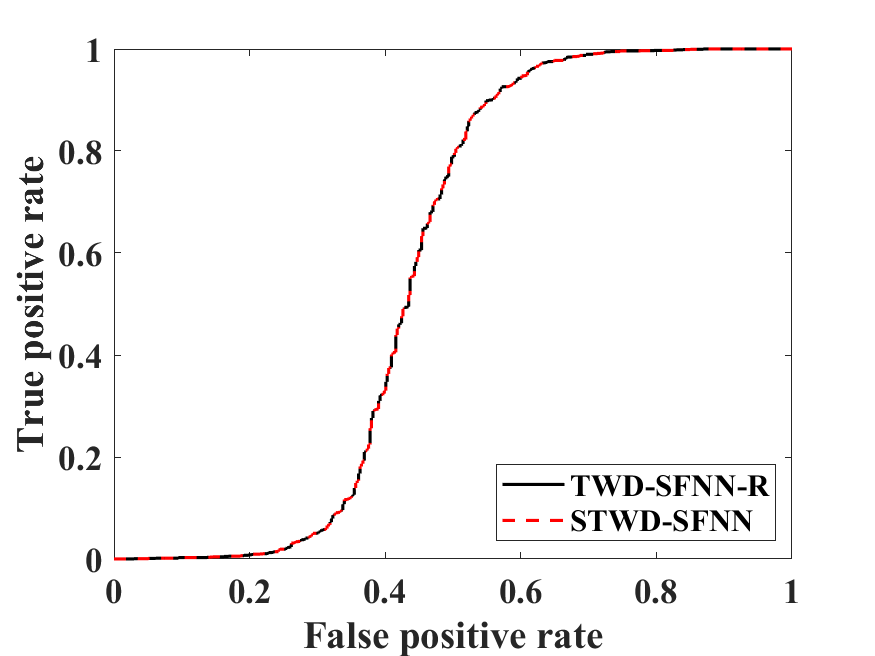}}	
				\hfill
				\centering
				\subfigure[BM]{
					\includegraphics[width=2.1in]{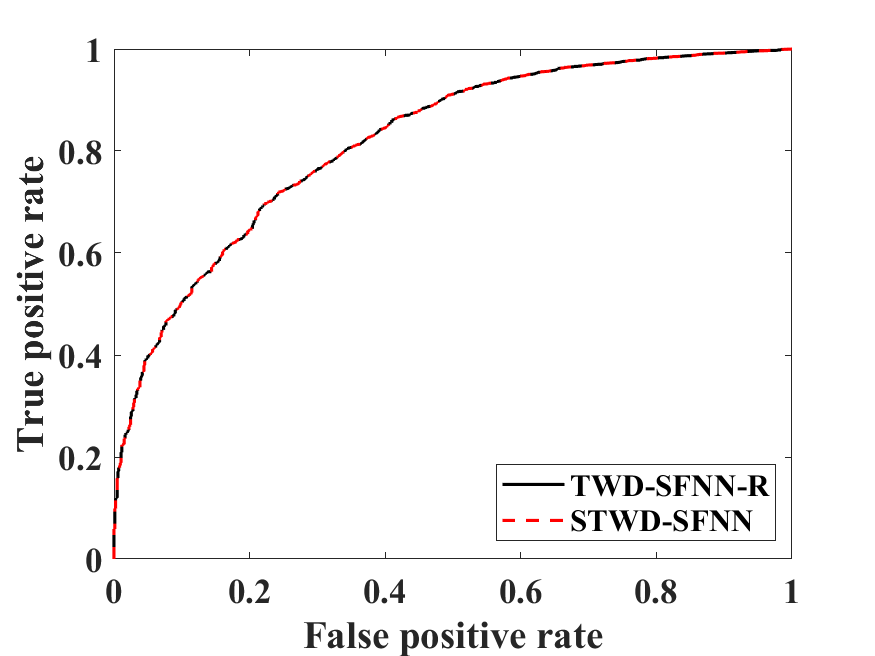}}
				\hfill
				\centering
				\subfigure[PCB]{
					\includegraphics[width=2.1in]{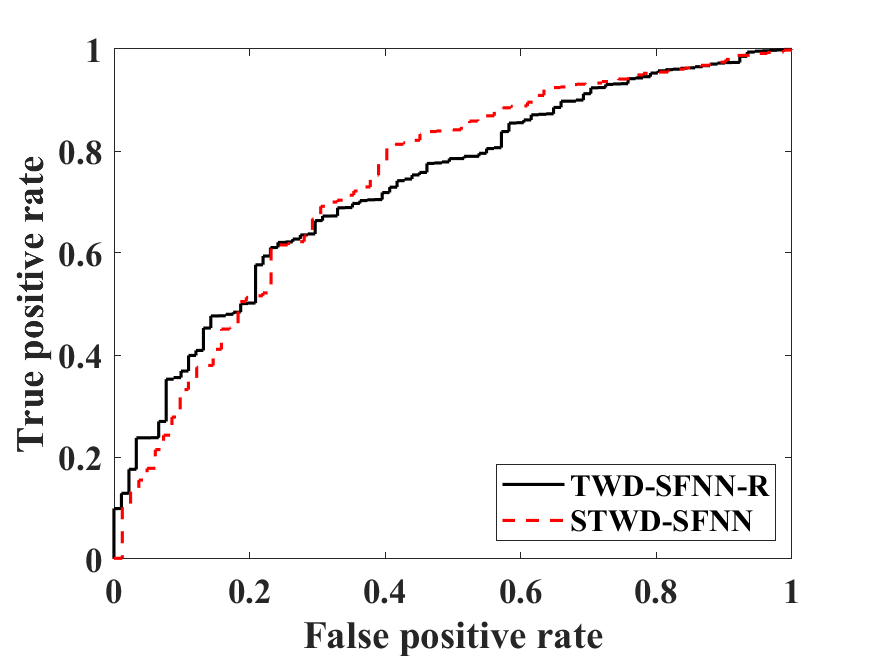}}
				\hfill	
				\centering
				\subfigure[SB]{
					\includegraphics[width=2.1in]{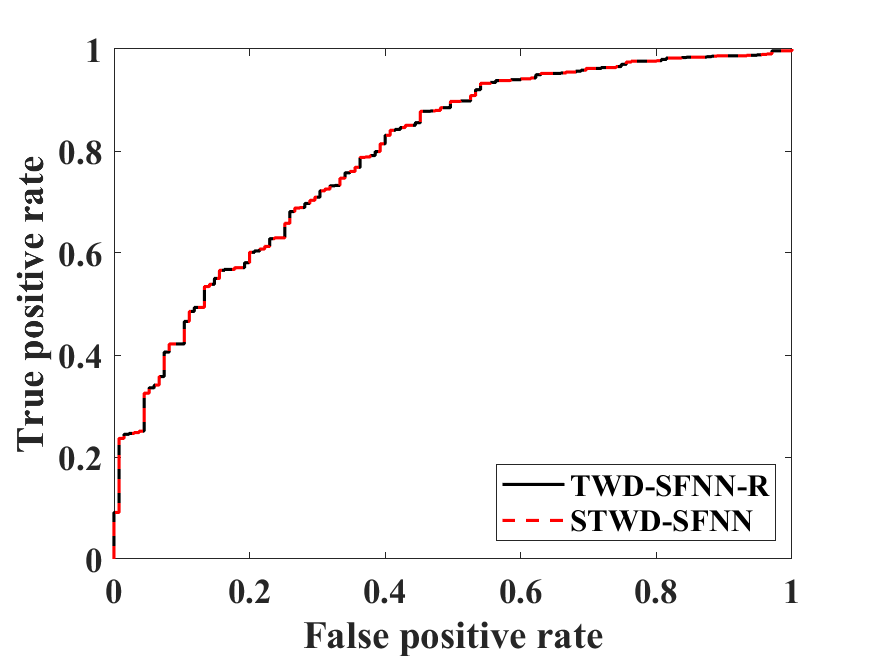}}
				\hfill	
				\centering
				\subfigure[EOL]{
					\includegraphics[width=2.1in]{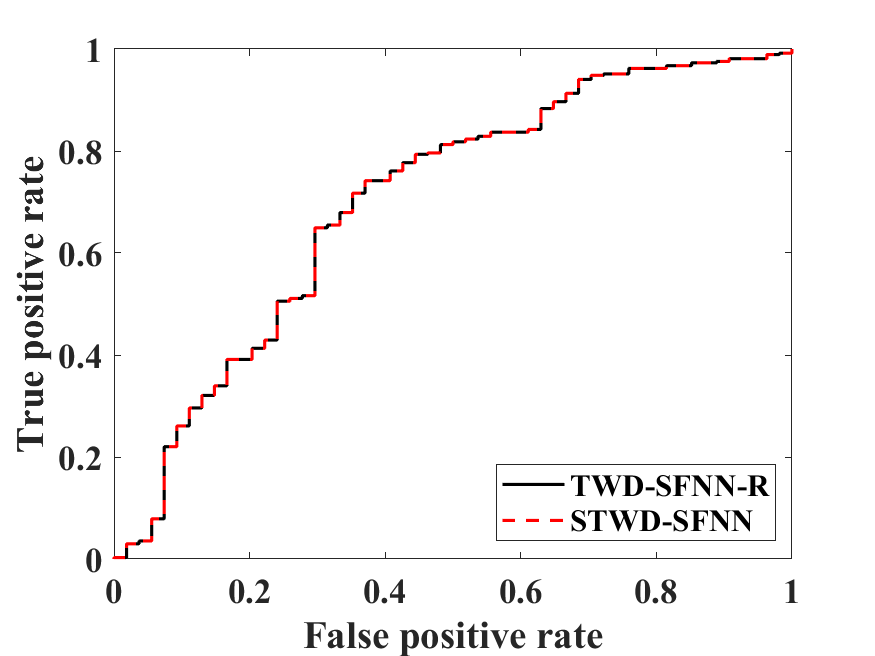}}
				\hfill	
				\centering	
				\subfigure[OD]{
					\includegraphics[width=2.1in]{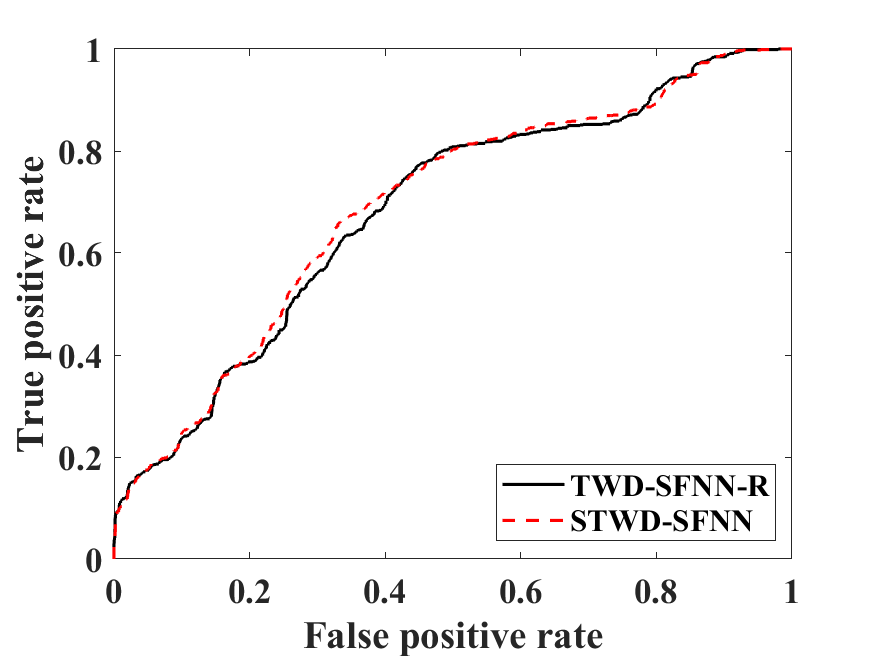}}
				\hfill	
				\centering
				\subfigure[ROE]{
					\includegraphics[width=2.1in]{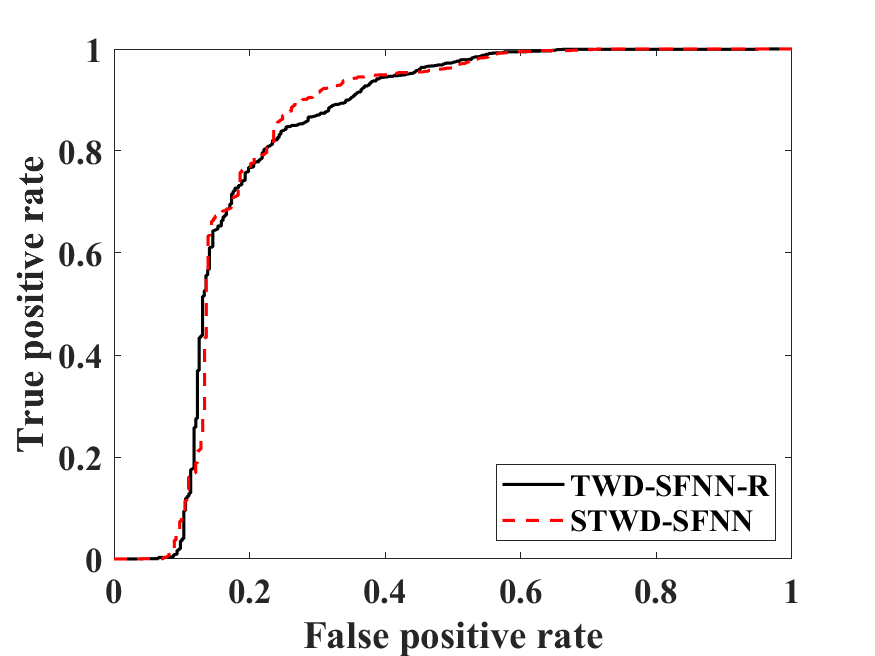}}
				\hfill	
				\centering
				\subfigure[SSMCR]{
					\includegraphics[width=2.1in]{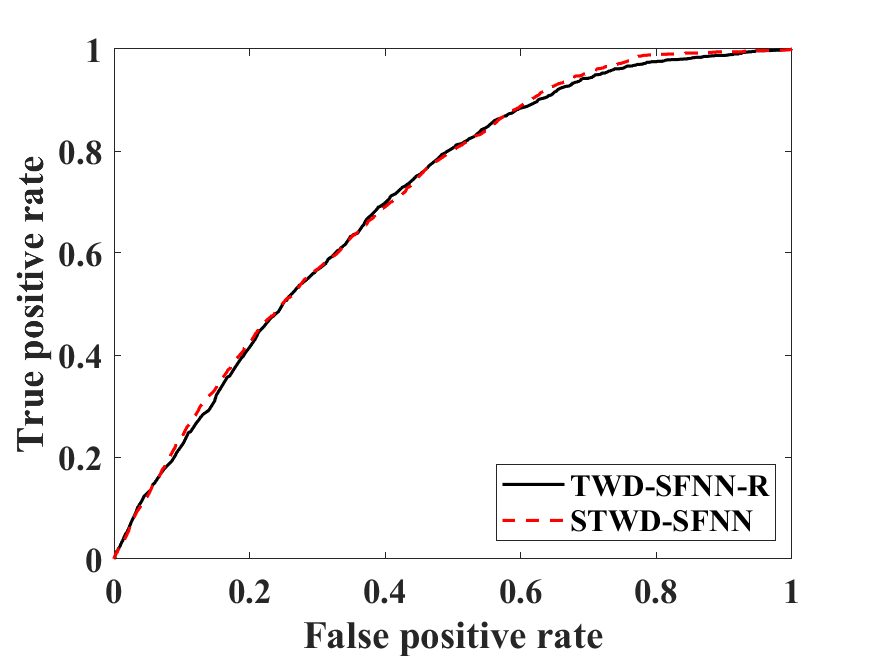}}
				\hfill	
				\small
				\captionsetup{font={small}}
				\caption{Comparison of  ROC curves of TWD-SFNN and STWD-SFNN by adopting 5-fold cross-validation }
				\label{fig_twd_5folds}
		\end{figure*}	        
	        
        \begin{table*}[hp]
                \setlength{\abovecaptionskip}{0cm}    
	        	\setlength{\belowcaptionskip}{0cm}    
	        	\small
	        	\captionsetup{font={small}}
	        	\newcommand{\tabincell}[2]{\begin{tabular}{@{}#1@{}}#2\end{tabular}}
	        	\caption{ Comparison of TWD-SFNN and STWD-SFNN by adopting 5-fold cross-validation }
	        	\centering
	        	\setlength{\tabcolsep}{1mm}{
	        		\begin{tabular}{cccccccc}
	        			\hline
	        			Dataset   &Model  &\tabincell{c}{Accuracy \\ (\%)}   &\tabincell{c}{Weighted- \\f1 (\%)}  &\tabincell{c}{AUC \\ (\%)}   &\tabincell{c}{Training \\ time(s)}  &\tabincell{c}{Test time (s)} &\tabincell{c}{Nodes}\\ \hline
	        			
	        			\multirow{2}{*}{ONP}
	        			&TWD-SFNN-R &92.33$\pm$0.20 &90.94$\pm$0.23 &56.87 &35.29$\pm$0.29 &0.015$\pm$0.002 &3.00$\pm$0.00 \\
	        			&STWD-SFNN &\textbf{94.39$\pm$0.24} &\textbf{91.70$\pm$0.35} &\textbf{62.45} &\textbf{30.63$\pm$0.90} &\textbf{0.008$\pm$0.001} &\textbf{1.00$\pm$0.00} \\ \hline
	        			
	        			\multirow{2}{*}{QSAR}
	        			&TWD-SFNN-R &72.63$\pm$0.71 &78.29$\pm$0.60 &61.89 &\textbf{6.71$\pm$0.22} &0.010$\pm$0.002 &\textbf{2.00$\pm$0.00} \\
	        			&STWD-SFNN &\textbf{82.14$\pm$0.36} &\textbf{83.60$\pm$0.44} &\textbf{62.44} &9.13$\pm$0.66 &\textbf{0.009$\pm$0.001} &2.80$\pm$0.40 \\ \hline
	        			
	        			\multirow{2}{*}{OSP}
	        			&TWD-SFNN-R &\textbf{83.14$\pm$0.57} &\textbf{77.24$\pm$0.62} &\textbf{70.10} &2.29$\pm$0.05 &\textbf{0.003$\pm$0.001} &\textbf{2.00$\pm$0.00} \\
	        			&STWD-SFNN &\textbf{83.14$\pm$0.57} &\textbf{77.24$\pm$0.62} &\textbf{70.10} &\textbf{2.04$\pm$0.17} &\textbf{0.003$\pm$0.001} &\textbf{2.00$\pm$0.00} \\ \hline
	        			
	        			\multirow{2}{*}{EGSS}
	        			&TWD-SFNN-R &\textbf{87.67$\pm$0.60} &\textbf{87.91$\pm$0.56} &\textbf{97.05} &\textbf{4.23$\pm$1.14} &\textbf{0.004$\pm$0.000} &\textbf{1.00$\pm$0.00} \\
	        			&STWD-SFNN &\textbf{87.67$\pm$0.60} &\textbf{87.91$\pm$0.56} &\textbf{97.05} &4.76$\pm$1.16 &0.004$\pm$0.001 &\textbf{1.00$\pm$0.00} \\ \hline
	        			
	        			\multirow{2}{*}{SE}
	        			&TWD-SFNN-R &\textbf{86.00$\pm$0.31} &\textbf{80.58$\pm$0.46} &\textbf{90.01} &365.87$\pm$8.21 &\textbf{0.008$\pm$0.000} &\textbf{2.00$\pm$0.00} \\
	        			&STWD-SFNN &\textbf{86.00$\pm$0.31} &\textbf{80.58$\pm$0.46} &\textbf{90.01} &\textbf{296.15$\pm$3.47} &0.008$\pm$0.004 &\textbf{2.00$\pm$0.00} \\ \hline
	        			
	        			\multirow{2}{*}{HTRU}
	        			&TWD-SFNN-R &\textbf{91.84$\pm$0.40} &\textbf{92.76$\pm$0.29} &\textbf{98.10} &\textbf{9.02$\pm$0.34} &\textbf{0.007$\pm$0.002} &\textbf{2.00$\pm$0.00} \\
	        			&STWD-SFNN &\textbf{91.84$\pm$0.40} &\textbf{92.76$\pm$0.29} &\textbf{98.10} &18.68$\pm$2.99 &0.012$\pm$0.006 &\textbf{2.00$\pm$0.00} \\ \hline
	        			
	        			\multirow{2}{*}{DCC}
	        			&TWD-SFNN-R &\textbf{78.13$\pm$0.30} &\textbf{69.70$\pm$0.32} &\textbf{63.07} &\textbf{7.81$\pm$0.07} &\textbf{0.004$\pm$0.000} &\textbf{2.00$\pm$0.00} \\
	        			&STWD-SFNN &\textbf{78.13$\pm$0.30} &\textbf{69.70$\pm$0.32} &\textbf{63.07} &14.12$\pm$1.30 &0.007$\pm$0.002 &\textbf{2.00$\pm$0.00} \\ \hline
	        			
	        			\multirow{2}{*}{ESR}
	        			&TWD-SFNN-R &\textbf{84.33$\pm$0.26} &\textbf{81.52$\pm$0.37} &\textbf{56.23} &\textbf{6.18$\pm$0.73} &\textbf{0.006$\pm$0.001} &\textbf{2.00$\pm$0.00} \\
	        			&STWD-SFNN &\textbf{84.33$\pm$0.26} &\textbf{81.52$\pm$0.37} &\textbf{56.23} &14.88$\pm$1.26 &0.011$\pm$0.003 &\textbf{2.00$\pm$0.00} \\ \hline
	        			
	        			\multirow{2}{*}{BM}
	        			&TWD-SFNN-R &\textbf{89.27$\pm$0.46} &\textbf{84.24$\pm$0.66} &\textbf{81.91} &\textbf{11.79$\pm$0.38} &\textbf{0.003$\pm$0.000} &\textbf{1.00$\pm$0.00} \\
	        			&STWD-SFNN &\textbf{89.27$\pm$0.48} &\textbf{84.24$\pm$0.69} &\textbf{81.91} &13.80$\pm$0.55 &\textbf{0.003$\pm$0.000} &\textbf{1.00$\pm$0.00} \\ \hline
	        			
	        			\multirow{2}{*}{PCB}
	        			&TWD-SFNN-R &97.75$\pm$0.12 &96.76$\pm$0.01 &73.22 &\textbf{6.06$\pm$0.16} &\textbf{0.006$\pm$0.001} &\textbf{2.00$\pm$0.00} \\
	        			&STWD-SFNN &\textbf{97.85$\pm$0.74} &\textbf{96.95$\pm$0.48} &\textbf{73.99} &11.76$\pm$0.45 &0.009$\pm$0.002 &\textbf{2.00$\pm$0.00} \\ \hline
	        			
	        			\multirow{2}{*}{SB}
	        			&TWD-SFNN-R &\textbf{89.51$\pm$1.20} &\textbf{67.61$\pm$30.85} &\textbf{79.32} &\textbf{0.96$\pm$0.19} &\textbf{0.002$\pm$0.001} &\textbf{1.80$\pm$0.40} \\
	        			&STWD-SFNN &\textbf{89.51$\pm$1.20} &\textbf{67.61$\pm$30.85} &\textbf{79.32} &1.16$\pm$0.23 &\textbf{0.002$\pm$0.001} &\textbf{1.80$\pm$0.40} \\ \hline
	        			
	        			\multirow{2}{*}{EOL}
	        			&TWD-SFNN-R &\textbf{86.26$\pm$1.30} &\textbf{82.68$\pm$1.33} &\textbf{70.07} &\textbf{0.50$\pm$0.06} &\textbf{0.001$\pm$0.000} &\textbf{2.00$\pm$0.00} \\
	        			&STWD-SFNN &\textbf{86.26$\pm$1.30} &\textbf{82.68$\pm$1.33} &\textbf{70.07} &1.46$\pm$0.23 &0.002$\pm$0.001 &\textbf{2.00$\pm$0.00} \\ \hline
	        			
	        			\multirow{2}{*}{OD}
	        			&TWD-SFNN-R &76.56$\pm$0.33 &66.80$\pm$0.43 &68.33 &\textbf{2.85$\pm$0.42} &\textbf{0.001$\pm$0.000} &\textbf{2.00$\pm$0.00} \\
	        			&STWD-SFNN &\textbf{78.50$\pm$0.72} &\textbf{71.48$\pm$1.06} &\textbf{69.07} &12.23$\pm$2.76 &0.003$\pm$0.001 &\textbf{2.00$\pm$0.00} \\ \hline
	        			
	        			\multirow{2}{*}{ROE}
	        			&TWD-SFNN-R &83.82$\pm$0.90 &85.01$\pm$0.81 &82.14 &\textbf{1.13$\pm$0.26} &\textbf{0.002$\pm$0.001} &\textbf{2.00$\pm$0.00} \\
	        			&STWD-SFNN &\textbf{87.30$\pm$1.38} &\textbf{87.91$\pm$1.24} &\textbf{82.48} &3.55$\pm$0.08 &0.003$\pm$0.000 &2.20$\pm$0.40 \\ \hline
	        			
	        			\multirow{2}{*}{SSMCR}
	        			&TWD-SFNN-R &91.93$\pm$0.13 &\textbf{88.76$\pm$0.15} &70.27 &347.07$\pm$7.16 &\textbf{0.048$\pm$0.011} &\textbf{2.00$\pm$0.00} \\
	        			&STWD-SFNN &\textbf{92.65$\pm$0.09} &87.36$\pm$0.17 &\textbf{70.70} &\textbf{338.22$\pm$1.13} &0.048$\pm$0.015 &\textbf{2.00$\pm$0.00} \\ \hline	
	        	\end{tabular}}
	        	\label{tab_twd_5folds}
	   \end{table*}  	               	
       
       Table \ref{tab_twd_5folds} shows that the accuracy, weighted-f1, and AUC of TWD-SFNN and STWD-SFNN have significant differences on some datasets. For example, on the ROE dataset, the accuracy, weighted-f1, and AUC of STWD-SFNN are 87.30\%, 87.91\%, and 82.48\%, respectively, while those of TWD-SFNN are 83.82\%, 85.01\%, and 82.14\%, respectively. Moreover, as shown in Fig. \ref{fig_twd_5folds}, the ROC curve of STWD-SFNN is located at the top left of TWD-SFNN, which means that the performance of STWD-SFNN is better than that of TWD-SFNN. The same conclusion can be found in ONP, QSAR, PCB, OD, and SSMCR datasets. Therefore, TWD-SFNN and STWD-SFNN have some differences under 5-fold cross-validation. It should be noted that whether TWD-SFNN and STWD-SFNN have significant differences under other folds is worth developing in the future.

		\subsubsection{ Analysis of STWD-SFNN } \label{exp_costs}
		In this section, we analyze the relationship between the process costs and the number of hidden layer nodes in STWD-SFNN, including the costs of test and delay. 
		According to Definition \ref{def6_process_costs}, without loss of generality, we randomly generate 10 incremental unit test costs in the real range [1.00, 50.00].  To ensure the repeatability of experimental results, the random number seed is 0, i.e., rng (0), and 10 values are generated as unit test costs: [1.00, 6.44, 11.89, 17.33, 22.78, 28.22, 33.67, 39.11, 44.56, 50.00]. The parameter settings of unit delay cost are the same as those of unit test cost. Fig. \ref{fig_stwdsfnn_costs} shows the relationship between the process costs of STWD-SFNN on the training set and the number of hidden layer nodes. Since STWD-SFNN has only one hidden layer node on the ONP dataset, we analyze the process cost on the remaining fourteen datasets. The results give rise to the following observations according to Fig \ref{fig_stwdsfnn_costs}.
		
		\begin{figure*}[hp]
			\centering
			\subfigure[ONP]{
				\includegraphics[width=2.1in]{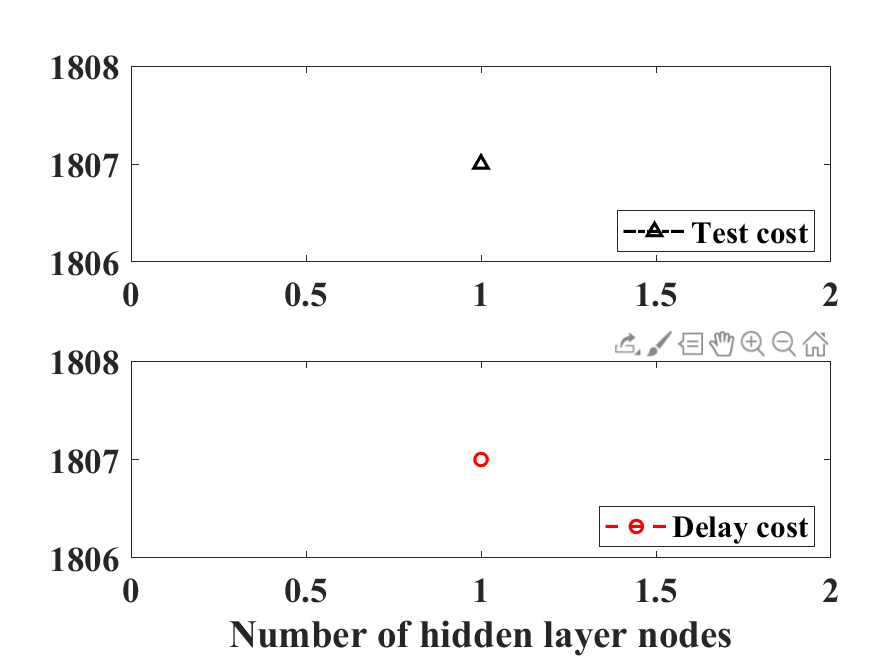}}
			\hfill
			\centering
			\subfigure[QSAR]{
				\includegraphics[width=2.1in]{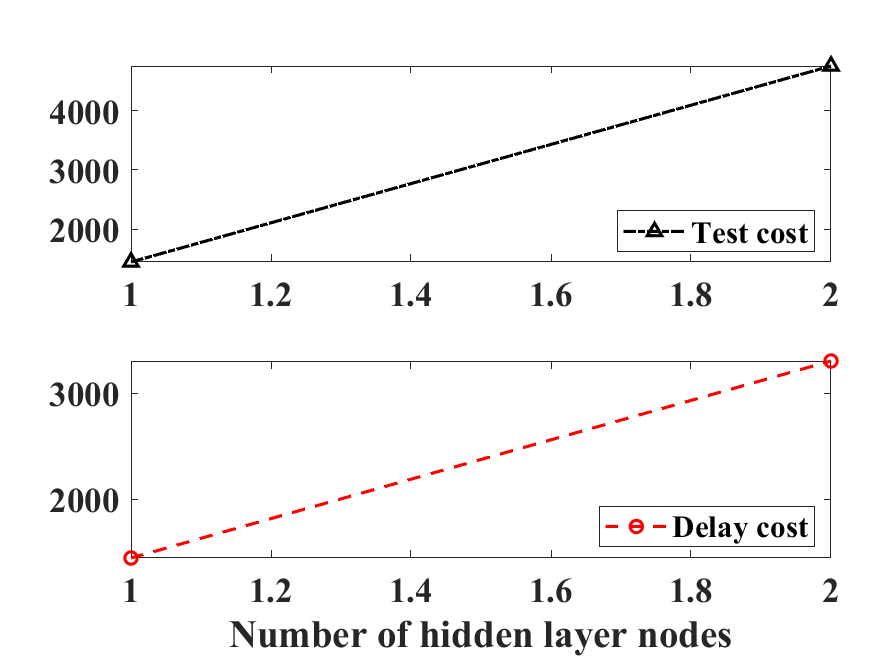}}
			\hfill
			\centering
			\subfigure[OSP]{
				\includegraphics[width=2.1in]{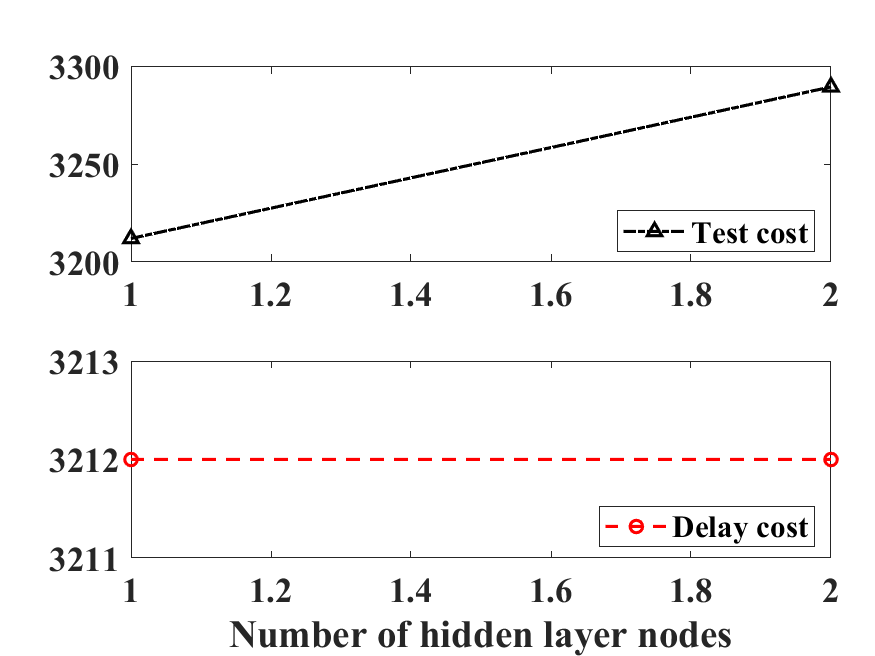}}
			\hfill
			\centering
			\subfigure[EGSS]{
				\includegraphics[width=2.1in]{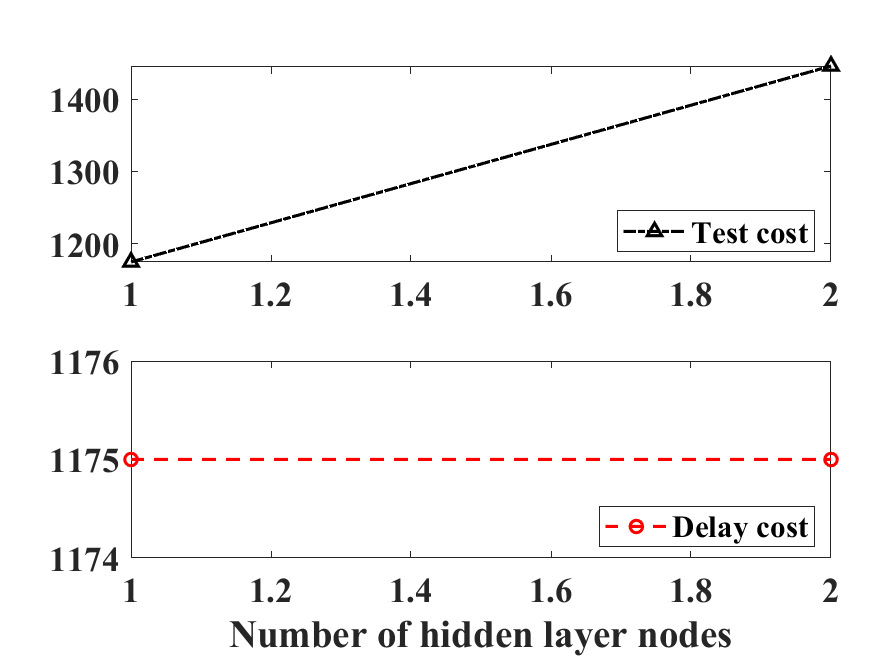}}
			\hfill
			\centering
			\subfigure[SE]{
				\includegraphics[width=2.1in]{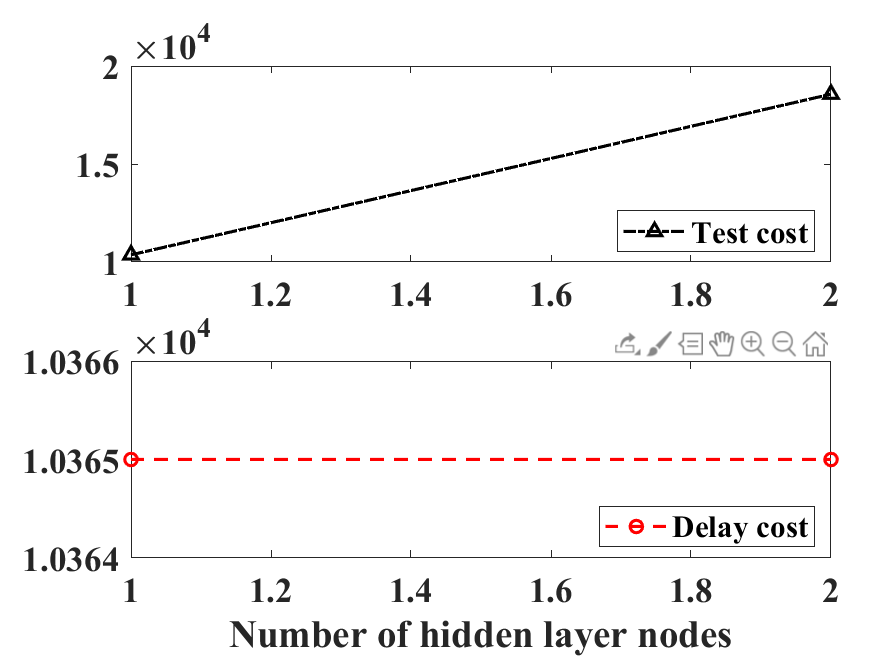}}
			\hfill
			\centering
			\subfigure[HTRU]{
				\includegraphics[width=2.1in]{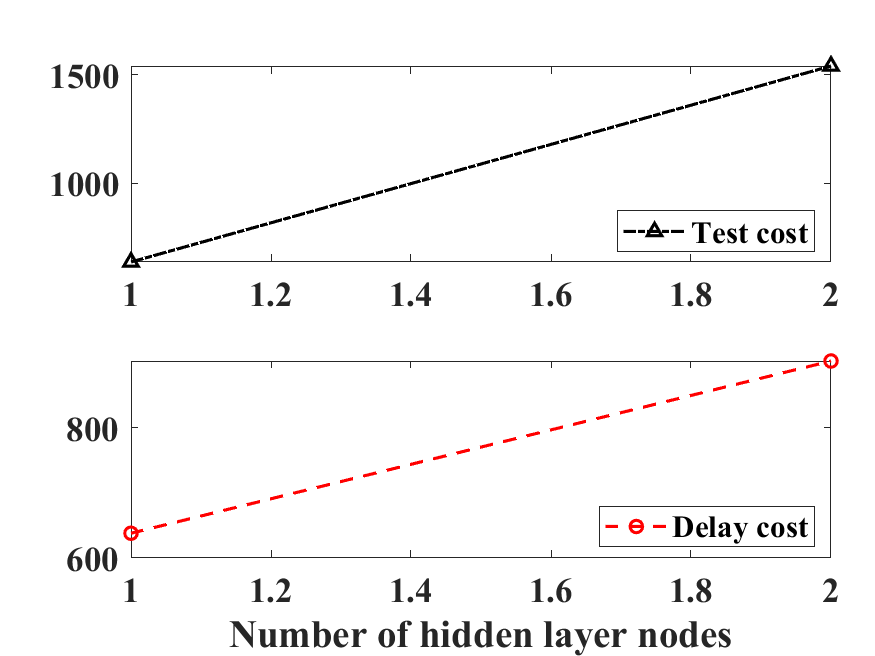}}
			\hfill
			\centering
			\subfigure[DCC dataset]{
				\includegraphics[width=2.1in]{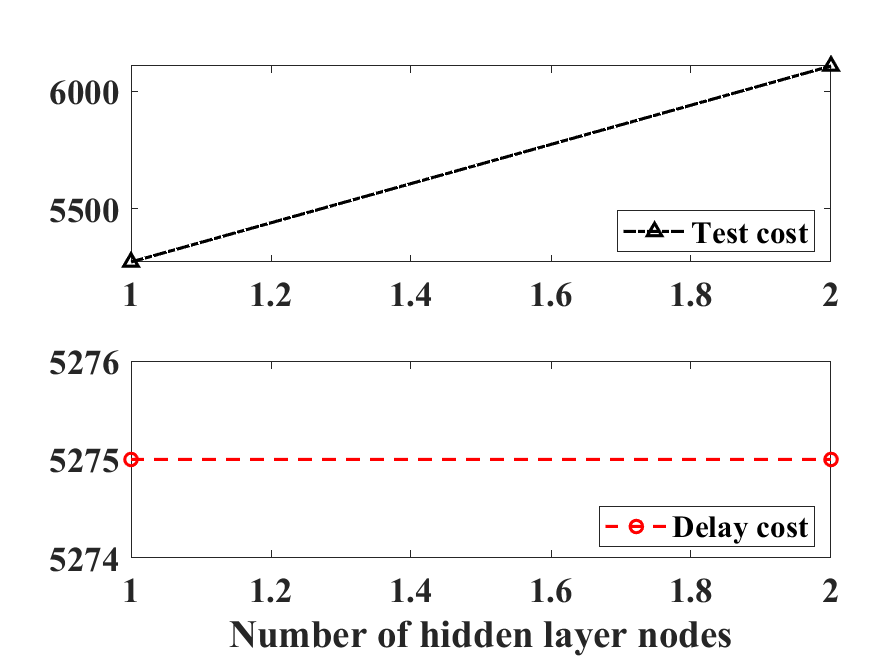}}
			\hfill
			\centering
			\subfigure[ESR]{
				\includegraphics[width=2.1in]{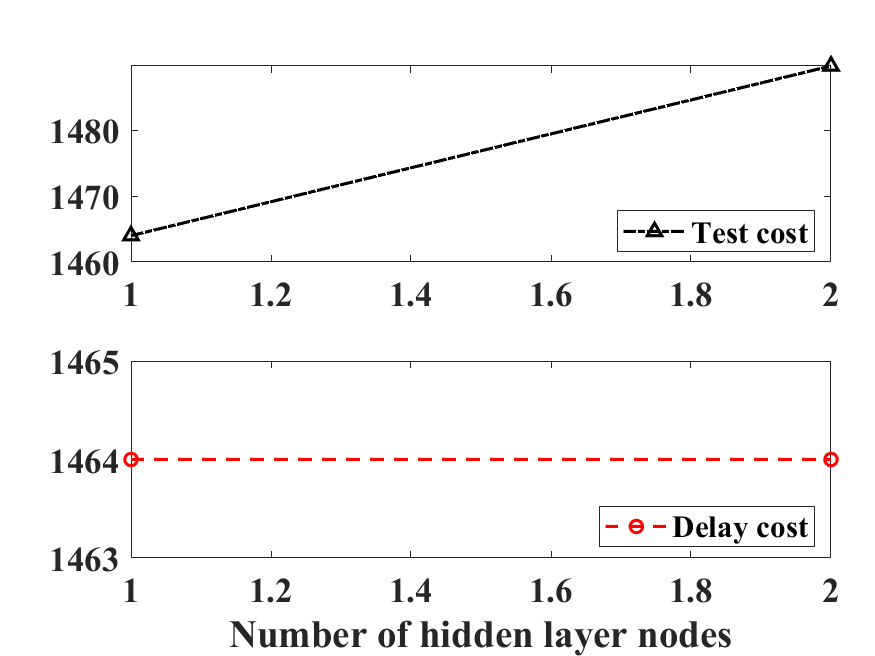}}	
			\hfill
			\centering
			\subfigure[BM]{
				\includegraphics[width=2.1in]{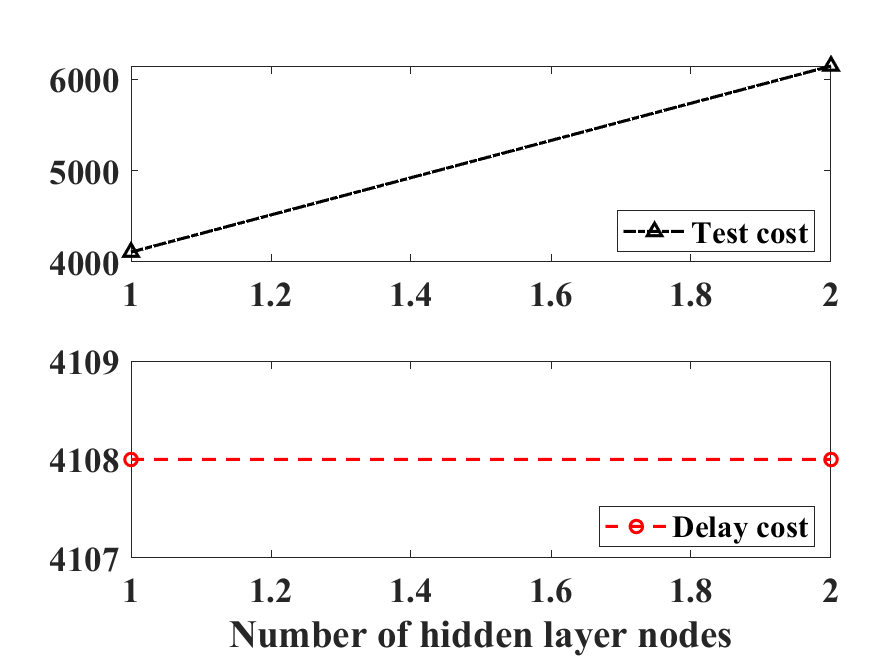}}
			\hfill
			\centering
			\subfigure[PCB]{
				\includegraphics[width=2.1in]{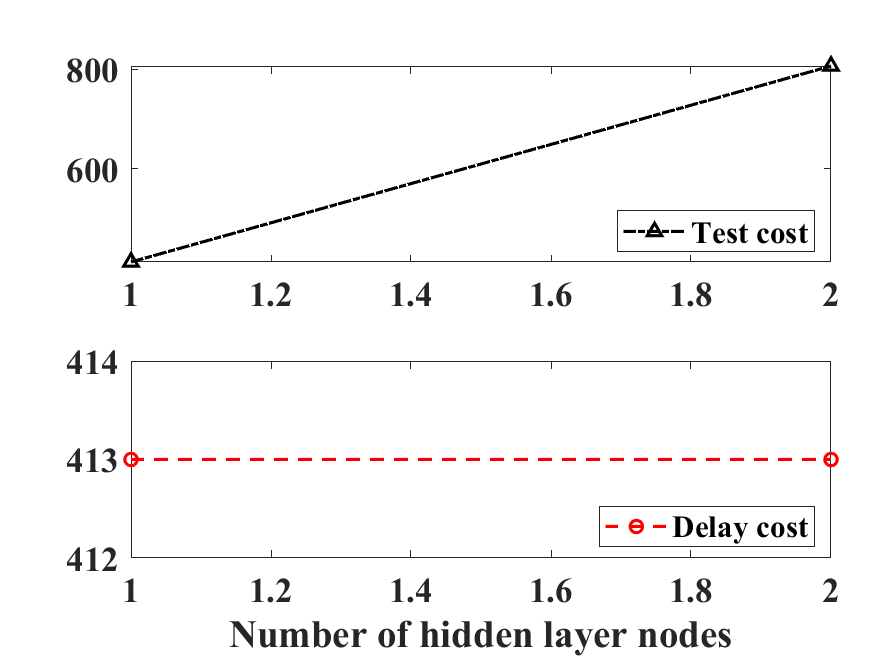}}
			\hfill	
			\centering
			\subfigure[SB]{
				\includegraphics[width=2.1in]{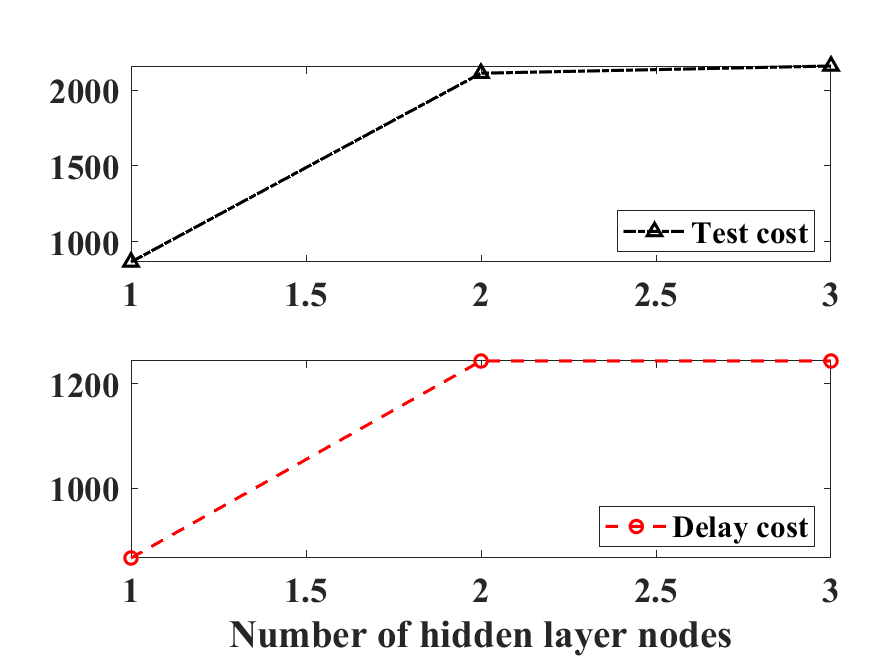}}
			\hfill	
			\centering
			\subfigure[EOL]{
				\includegraphics[width=2.1in]{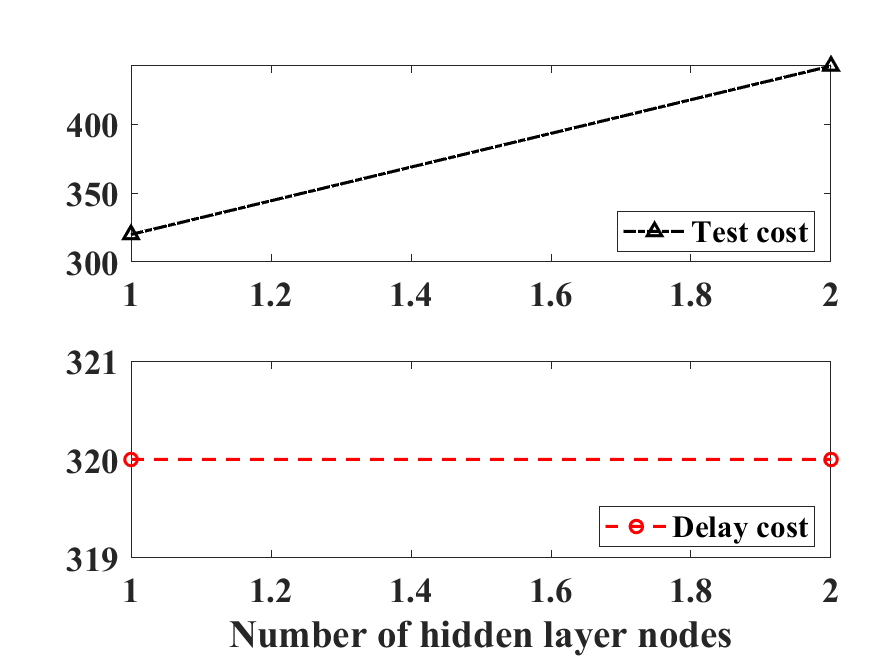}}
			\hfill	
			\centering	
			\subfigure[OD]{
				\includegraphics[width=2.1in]{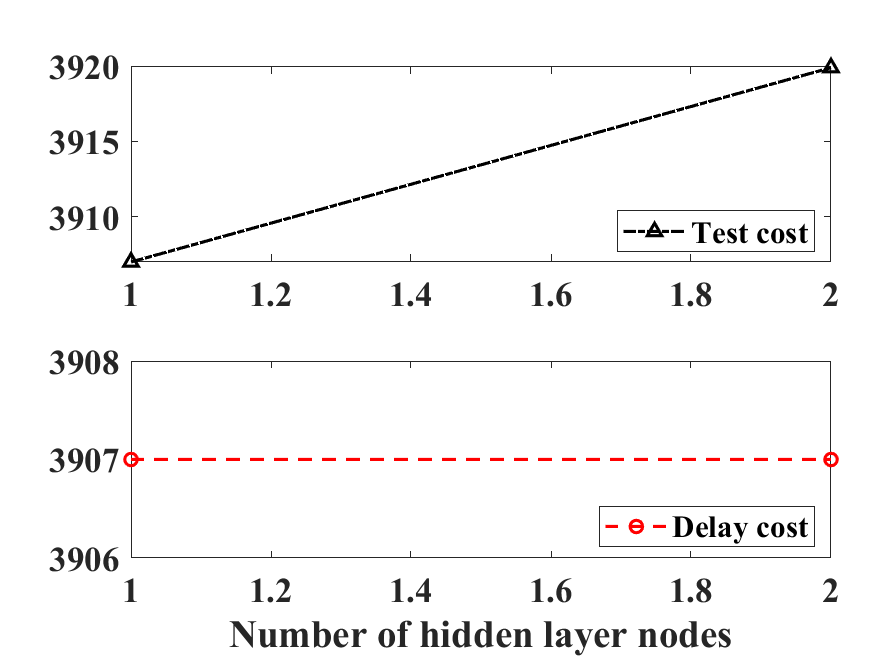}}
			\hfill	
			\centering
			\subfigure[ROE]{
				\includegraphics[width=2.1in]{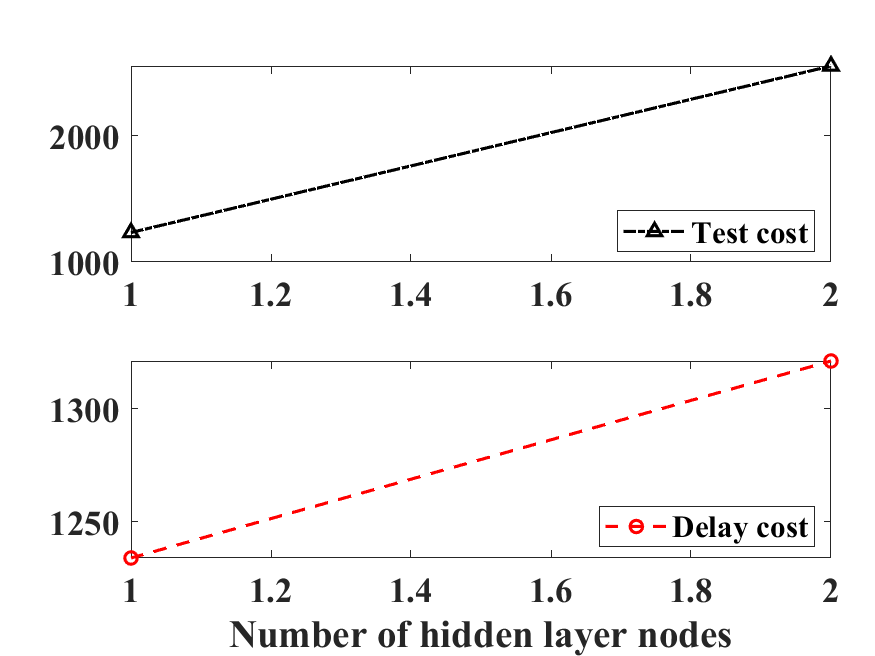}}
			\hfill	
			\centering
			\subfigure[SSMCR]{
				\includegraphics[width=2.1in]{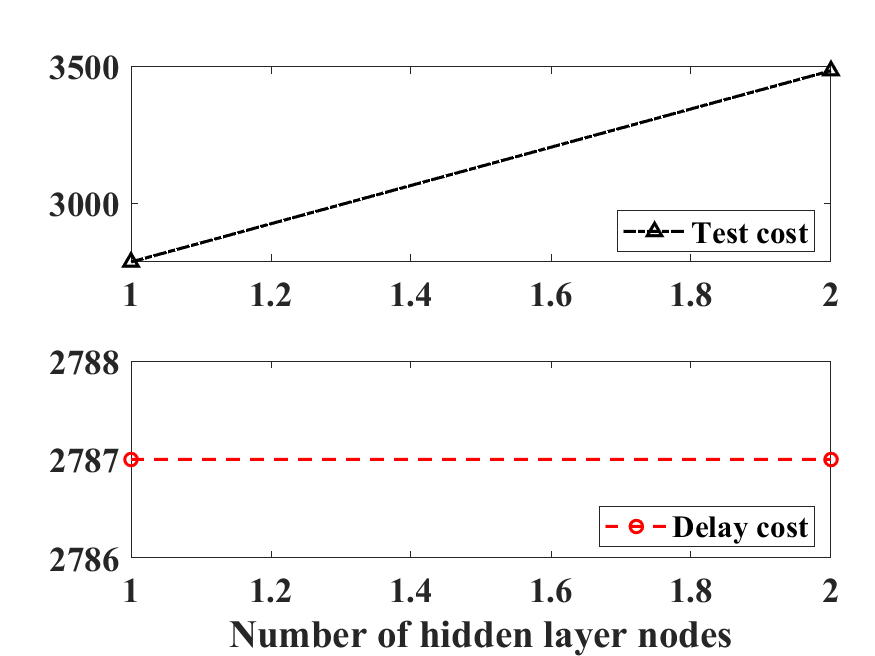}}
			\hfill	
			\small
			\captionsetup{font={small}}
			\caption{  Relationship between process costs and number of hidden layer nodes }
			\label{fig_stwdsfnn_costs}
		\end{figure*}
		
		1. The test cost of STWD-SFNN increases with the number of hidden layer nodes. For example, on the HTRU dataset, according to Definition \ref{def6_process_costs} and $ [Cost_{{\rm PPT}_1}, Cost_{{\rm PPT}_2}] = [1.00, 6.44] $, the test costs of the first and second levels are 638.00 and 1540.22, respectively. Similar phenomena can be found in the rest 13 datasets. The reason is that the test cost of STWD-SFNN in the $ t $-granular level is equivalent to the sum of the test costs of the first ($t$-1)-granular level and the $t$-granular level.
		
		2. The delay cost of STWD-SFNN and the number of hidden layer nodes report two cases. Case 1: The delay cost tends to be constant with the increase of hidden layer nodes. For example, on the DCC dataset, according to Definition \ref{def6_process_costs} and $ [Cost_{{\rm PPD}_1}, Cost_{{\rm PPD}_2}] = [1.00, 6.44] $, the delay costs of the first and second levels are both 5275.00. The same conclusion can be found in OSP, EGSS, SE, ESR, BM, PCB, EOL, OD, and SSMCR. Case 2: The delay cost of STWD-SFNN increases with the number of hidden layer nodes. For example, on the HTRU dataset, the delay costs of the first and second levels are 638.00 and 902.22, respectively. This phenomenon can be found in QSAR and ROE datasets. Specifically, the SE dataset has both Case 1 and Case 2. The reason is that the delay cost of STWD-SFNN in the $t$ granular level is equivalent to the maximum of the delay cost of the ($t$-1)-granular level and the $t$-granular level. Specifically, if the delay cost of the $t$ granular level is not greater than that of the ($t$-1)-granular level, the delay cost shows a constant trend; otherwise, it shows an increasing trend.

    \section{ Conclusion }\label{section5} 
 {Although TWD-SFNN can overcome the shortage of traditional SFNN and obtain a simpler network structure and better performance, it uses fixed threshold parameters to train the network model. Inspired by STWD, this paper proposes a new network topology optimization model, called STWD-SFNN, to enhance the performance of the network on structured datasets. STWD-SFNN adopts a sequential strategy to dynamically determine the number of hidden layer nodes. STWD-SFNN has two parts: discretion modular and training modular. Discretion modular adopts $k$-means++ to convert numerical data into discrete data to meet the requirements of STWD for data types. The training modular, as the main part of STWD-SFNN, adopts a sequential strategy to dynamically determine the number of hidden layer nodes. Thus, STWD-SFNN has $t$ levels. Each level has two parts: SFNN and STWD. SFNN is a network topology with one hidden layer node and gets correctly classified instances and misclassification instances. The first $t$-1 part of STWD utilizes the strategy of three ways, and the last part applies the strategy of two ways. The experimental results show that STWD-SFNN has higher operational efficiency and a more compact network structure than SFNN using empirical formulas, GS-SFNN, PSO-SFNN, TWD-SFNN-R, and STWD-SFNN-NK, and has a better generalization ability on structured datasets than the competitive models.}
    
    
    In this paper, we propose STWD-SFNN which improves the performance of SFNN. However, many aspects should be investigated in the future to enrich the theory of neural networks. 
    
\begin{enumerate}[(1)] 
    \item        	 STWD-SFNN employs sequential three-way decisions to guide the growth of the topology of a single hidden layer neural network. However, for multilayer neural networks, how to adopt our model to handle XOR and other problems to further promote deep learning is worthy of investigation.
    	
    \item        	 STWD-SFNN has good performance in binary classification problems by comparing the conditional probability of each instance belonging to the positive region and threshold parameters. For a multi-class classification problem, one method is to convert it into multiple binary classification problems. Obviously, this method is not very effective. How to adopt STWD-SFNN to solve multi-class classification is worthy of consideration. 
    	
    \item        	  STWD-SFNN adopts $ k $-means++ to discretize the data when calculating the conditional probability of instances, since STWD is a classification model established on the decision-theoretic rough sets and derived from nominal attributes. However, whether there is a better discretization model to transform non-integer data into integer data, or whether there are other rough set models that can essentially mine numerical or mixed data features.
    \end{enumerate}

\section*{Acknowledgement}
This work was supported by the National Social Science Fund of China under grant number 18BGL191.

\end{document}